\documentclass[twoside]{article}

\usepackage[preprint]{aistats2027}
\usepackage[round]{natbib}

\usepackage{amsmath,amssymb,amsthm,mathtools}
\usepackage{booktabs}
\usepackage{array}
\usepackage{graphicx}
\usepackage{xcolor}
\usepackage[hidelinks]{hyperref}
\usepackage[capitalise,noabbrev,nameinlink]{cleveref}
\crefname{appendix}{Section}{Sections}
\Crefname{appendix}{Section}{Sections}
\crefname{subappendix}{Section}{Sections}
\Crefname{subappendix}{Section}{Sections}

\newtheorem{theorem}{Theorem}
\newtheorem{lemma}[theorem]{Lemma}
\newtheorem{proposition}[theorem]{Proposition}
\newtheorem{corollary}[theorem]{Corollary}
\newtheorem{conjecture}[theorem]{Conjecture}
\theoremstyle{definition}

\newtheorem{remark}[theorem]{Remark}

\crefname{conjecture}{Conjecture}{Conjectures}

\newcommand{\eps}{\varepsilon}
\newcommand{\cC}{\mathcal{C}}
\newcommand{\E}{\mathbb{E}}
\newcommand{\Prob}{\mathbb{P}}
\newcommand{\dH}{d_{\mathrm{H}}}
\newcommand{\ind}{\mathbf{1}}
\newcommand{\Bern}{\mathrm{Bern}}
\newcommand{\MST}{\mathrm{MST}}
\newcommand{\EMST}{\mathrm{EMST}}
\newcommand{\Rraw}{R_{\mathrm{raw}}}
\newcommand{\Rgr}{R_{\mathrm{gr}}}
\newcommand{\RML}{R_{\mathrm{ML}}}

\newcommand{\Wien}{W}             
\newcommand{\Tpi}{T_\pi}          

\hypersetup{
  pdftitle={How Much Does One Wrong Answer Cost? Error Propagation in Constraint Pruning with Noisy LLM Oracles},
  pdfauthor={Zhengtao Gui and Weining Shen}
}

\begin{document}
\runningtitle{Error Propagation in Constraint Pruning}
\runningauthor{Zhengtao Gui and Weining Shen}
\raggedbottom 

\twocolumn[

\aistatstitle{How Much Does One Wrong Answer Cost?\\ Error Propagation in Constraint Pruning with Noisy LLM Oracles}

\aistatsauthor{ \large Zhengtao Gui \hspace{2em} Weining Shen }

\vspace{0.5em}
\aistatsaddress{ \fontsize{11}{14}\selectfont University of California, Irvine } ]

\begin{abstract}
\sloppy
Constraint pruning saves queries to a large language model (LLM) or another noisy oracle by skipping each yes/no question implied by answers accepted earlier, for example through transitivity. Entity-resolution systems and LLM-based rankers use this idea. However, their analyses typically assume a correct oracle. We argue that pruning should be judged by its risk when the oracle errs, since an accepted wrong answer spreads to every question it implies. When each answer is wrong independently with small probability $\varepsilon$, the first-order risk is set by how many facts one wrong answer corrupts on average. We consider two kinds of constraints. For implication rules, as in taxonomies, pruning in a random order adds no first-order risk: a wrong answer that would change $k$ facts is accepted with probability $1/k$, so it costs one fact on average. Rules with $s$ premises break this balance: a wrong answer at a premise shared by $r$ rules can cost order $r^{1-1/s}$ facts. For entity resolution with $m$ records of one entity, the first-order risk equals $\varepsilon$ times the Wiener index (the total pairwise distance) of the spanning tree built by the query order. Pivot orders, which first compare one record with all others, amplify errors by less than 2 and are optimal; random orders by order $m^{1/3}$; the common most-similar-first order is the worst in one dimension and, for large $m$, worse than random in two. With several entities, the cost of wrongly merging two does not depend on how many others exist. Simulations agree with every predicted constant within sampling error.
\end{abstract}

\section{INTRODUCTION}
\label{sec:intro}

Constraint pruning reduces the cost of collecting related yes/no facts from an oracle, such as an LLM or a human annotator, whose answers may be wrong. In entity resolution, a system asks whether two records refer to the same entity and skips pairs implied by transitivity \citep{wang2013leveraging,vesdapunt2014crowdsourcing}. For example, accepting that $A$ and $B$ are the same and that $B$ and $C$ are the same determines the answer for $A$ and $C$ without another query. Similar propagation occurs when enforcing entailment and contradiction constraints between LLM beliefs \citep{mitchell2022concord}, skipping comparisons in LLM ranking \citep{blitzrank2026}, or propagating orientation hints in causal discovery \citep{meek1995causal,long2023causal,mosacd2025}.

However, an accepted error can also propagate to unqueried facts. If $A$, $B$ and $C$ belong to one entity, a correct ``same'' answer for $A,B$ followed by a wrong ``different'' answer for $B,C$ implies a wrong ``different'' answer for $A,C$. In the synthetic citation records of \cref{fig:overview}, one oracle error thus produces two final wrong decisions and saves one query, and the output is still logically consistent. Query counts alone cannot capture this cost. Prior work studies recovery probabilities, assumes deterministic oracles or examines the robustness of query structures \citep{long2023causal,guess2graph2025,blitzrank2026,galhotra2018robust}.

\begin{figure*}[!t]
\centering
\includegraphics[width=\textwidth]{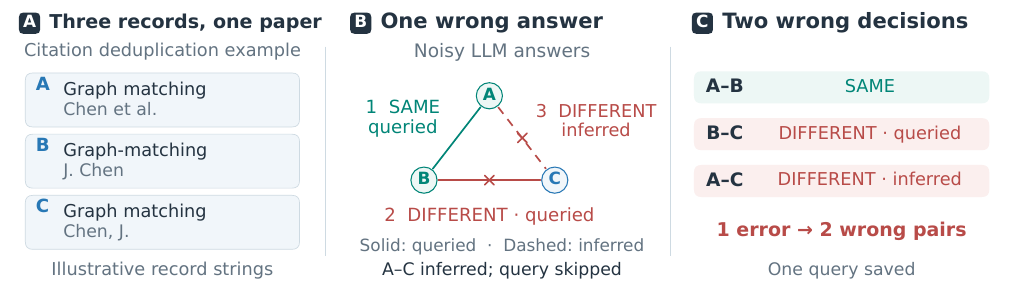}
\caption{Propagation in three synthetic citation records of one publication: one wrong answer produces two wrong pair decisions and saves one query.}
\label{fig:overview}
\end{figure*}

We study \emph{greedy closure decoding}: process questions in an order, skip those implied by accepted answers, and otherwise query and accept the oracle's answer, without revisiting earlier decisions. We measure risk by the expected number of final wrong facts, including inferred ones. The raw baseline asks every question and uses its answer unchanged, without enforcing consistency. Both procedures use the same potential answers: the answers the oracle would give if each question were asked.

In \emph{single-error analysis}, we make one potential answer wrong and keep all others correct. We count the final wrong facts caused by each error location, averaging over query orders when they are random. In \cref{fig:overview}, damage is 2; it is 0 if earlier correct answers imply and skip the wrong question. Under independent errors of small probability $\eps$ \citep{long2023causal,guess2graph2025}, events with two or more errors have probability $O(\eps^2)$ for a fixed finite instance. A reduction lemma expresses risk as $\eps$ times the sum of single-error damages, plus $O(\eps^2)$. The term linear in $\eps$ is the \emph{first-order risk}; its ratio to the raw baseline's $n\eps$ is the \emph{error amplification}, where $n$ counts questions. Random-order ranking and Meek propagation are not analyzed.

Using single-error analysis, we derive the following first-order risk laws:
\begin{itemize}
\item \textbf{Risk conservation for pairwise constraints.} For every satisfiable 2-CNF theory, including implications and pairwise contradictions, uniformly random orders give mean single-error damage 1 for unfixed facts and 0 for facts fixed by the constraints. Pruning therefore does not increase first-order risk over raw decoding (\cref{sec:horn}).
\item \textbf{Error amplification through a shared premise.} For $k$ Horn rules with $s\ge2$ premises each and one shared false premise, we derive the exact mean damage of a single wrong answer there, growing as $\Theta(k^{1-1/s})$ under uniformly random orders, for fixed $s$ (\cref{thm:hub}).
\item \textbf{Relating query order to tree structure.} For $m$ records of one entity, independent errors at a common rate give first-order risk equal to that rate times the noiseless query tree's Wiener index (total pairwise tree distance). Pivot stars minimize this risk among greedy orders, with amplification below 2; random orders give $\Theta(m^{1/3})$, and similarity-first can be worse under \cref{thm:similarity}'s embedding assumptions. We also give bounds for multiple errors at any error rate (\cref{sec:single}).
\item \textbf{Locality of a single false merge.} With only one wrong answer, false-merge damage depends only on the two affected entities. This yields separate within-entity and cross-entity first-order risks. We give bounds, exact small cases and the pivot pipeline's exact first-order formula; the large-cluster growth exponent remains a conjecture (\cref{sec:multi}).
\end{itemize}
We test the theoretical predictions and compare query orders using simulations and LLM answers on the Cora citation data (\cref{sec:numerics}).

\section{RELATED WORK}
\label{sec:related}

\paragraph{Noisy recovery and tree structure.} Work on probabilistic unclean databases relates integrity constraints to minimum-repair and maximum-likelihood decoding \citep{desa2019pud,gilad2023}. Other studies consider noisy graph-label recovery \citep{globerson2015hard,foster2018inference,heidari2019inference,lee2021,jin2026community}, correlation clustering \citep{makarychev2015correlation}, same-cluster queries \citep{mazumdar2017semisupervised} and noisy sorting \citep{feige1994computing,braverman2008noisy}. Their main goals are query efficiency or global recovery guarantees, usually at constant noise \citep{banerjee2022graph}. Our tree-based analysis uses Kruskal's construction \citep{kruskal1956shortest}, extremal properties of the Wiener index \citep{dobrynin2001wiener}, and results on random and spatial spanning trees \citep{addarioberry2009critical,addarioberry2017scaling,aizenman1999scaling}.

\paragraph{Entity resolution and LLM comparisons.} Crowdsourced entity resolution uses transitivity \citep{wang2013leveraging,vesdapunt2014crowdsourcing}. Analyses of query ordering typically assume a correct oracle \citep{firmani2016online,mazumdar2017theoretical}; robust methods add verification queries \citep{galhotra2018robust} or weaken transitivity \citep{gruenheid2015fault}. Recent LLM work studies prompt sensitivity \citep{gui2026evaluating} and matching errors \citep{peeters2025entity}, compares matching strategies \citep{wang2025match}, filters candidates using matching feedback \citep{chen2026generalized}, and balances quality and cost through operator and model selection \citep{guo2026carl}.

Ranking work addresses non-transitive preferences and baseline-sensitive rankings \citep{xu2025investigating}, presentation-order and transitivity inconsistencies \citep{zeng2026rankfusion}, and risks from long inference chains \citep{blitzrank2026}. These studies improve matching, ranking or query allocation. We analyze how an accepted wrong answer causes errors in inferred decisions.

\paragraph{Logical consistency and causal hints.} Belief-correction methods enforce entailment and contradiction relations \citep{mitchell2022concord}. Logic.py translates natural-language puzzles into formal problems for a constraint solver \citep{kesseli2025logic}. Causal discovery methods elicit and propagate hints about edge orientations \citep{long2023causal,mosacd2025,kiciman2023causal}, while Guess2Graph uses unreliable expert guesses to order conditional-independence tests \citep{guess2graph2025}. These methods correct answers or select hints. We study a sequential decoding process that accepts oracle answers in order and keeps accepted answers fixed during subsequent inference.

\section{SETTING AND REDUCTION LEMMA}
\label{sec:setting}

\paragraph{Questions, constraints and noisy answers.} There are $n$ binary questions. Complete vectors satisfying the constraints form the \emph{code} $\cC\subseteq\{0,1\}^n$; each is a feasible \emph{codeword}. They are the models (satisfying assignments) of a propositional theory $\Phi$. The decoder knows the constraints but not the fixed truth $x^\star\in\cC$. We study implication, 2-CNF and definite Horn theories (\cref{sec:horn}), partitions under ``same entity'' transitivity (\cref{sec:er}), and total orders under ``before'' transitivity (\cref{app:proofs-orders}).

The oracle's potential answer $y_i$ to question $i$ is observed only if queried. Thus $y=x^\star\oplus Z$, where $Z_i$ are independent $\Bern(\eps)$ variables: $Z_i=1$ flips the true answer and $Z_i=0$ leaves it unchanged. \Cref{sec:multi} allows two rates. Hamming loss $\dH(\hat x,x^\star)$ counts wrong final answers, including inferred ones; risk $R(\eps)=\E\,\dH(\hat x,x^\star)$ is its expectation. The \emph{raw} decoder asks all questions and returns $y$, with risk $\Rraw=\eps n$; the output may violate constraints.

The \emph{greedy} decoder $G_\pi$ follows order $\pi$, storing accepted question--answer pairs in an initially empty set $L$. Among codewords agreeing with $L$, if all give the same answer at $i$, use it and skip the query. Otherwise both answers are feasible: query the oracle and add $(i,y_i)$ to $L$, keeping this answer fixed. A wrong answer can thus be accepted while preserving feasibility. After all questions, one codeword remains (\cref{lem:welldef}), also obtained by the accept-if-consistent rule. Unless stated otherwise, $\pi$ is uniform and independent of $Z$. Implication tests take polynomial time without enumerating $\cC$: satisfiability tests for 2-CNF and Horn theories, union-find for partitions and transitive closure for orders. The \emph{maximum-likelihood} (ML) decoder uses all answers jointly, returning a codeword closest to $y$ in Hamming distance, with uniform tie-breaking.

\paragraph{Single-error damage.} For a decoder $\hat x$ whose output depends on $y$ and on an independent seed $\pi$, the \emph{single-error damage} at question $a$ is the expected loss when only the answer to $a$ is wrong,
\[
D(a)=\E_\pi\,\dH\big(\hat x(x^\star\oplus e_a,\pi),\,x^\star\big),
\]
where $e_a$ is the $a$-th unit vector. For the greedy decoder we write $D_\pi(a)=\dH(G_\pi(x^\star\oplus e_a),x^\star)$ for a fixed order, so that $D(a)=\E_\pi D_\pi(a)$. The \emph{first-order constant} is $\kappa=\lim_{\eps\to0}R(\eps)/\eps$, and the \emph{amplification} of a decoder is $\kappa/\kappa_{\rm raw}=\kappa/n$, the mean damage of one wrong answer relative to the raw decoder.

For $x\in\cC$ let $m_a(x)$ be the number of codewords of the form $x\oplus e_a\oplus e_b$, $b\neq a$, when $x\oplus e_a\notin\cC$ (and $0$ otherwise). In that case they are the codewords other than $x$ that tie with $x$ as nearest to $x\oplus e_a$.

\begin{table*}[t]
\centering
\small
\caption{Mean single-error damage; skipped errors count as zero. Logical rows use uniform orders; partition rows average over all within- or cross-cluster pairs, giving first-order amplification over raw risk. Optimality is among greedy orders at first order. Conj.: conjectured growth.}
\label{tab:summary}
\setlength{\tabcolsep}{3pt}
\begin{tabular}{@{}>{\raggedright\arraybackslash}p{.285\textwidth}>{\raggedright\arraybackslash}p{\dimexpr.515\textwidth-4\tabcolsep\relax}>{\raggedright\arraybackslash}p{.20\textwidth}@{}}
\toprule
Setting & Mean damage (wrong final facts) & Reference\\
\midrule
2-CNF, implication rules & $1$ for unfixed facts, $0$ for fixed facts (backbone); ML: $\le1$ & Cor.~\ref{cor:conservation}; Prop.~\ref{prop:ml2cnf}\\
Horn hub: $k$ rules, $s\ge2$ premises & $\sim\Gamma(1+\frac1s)k^{1-1/s}$ at the shared premise; ML: $0$ ($k\ge2$) & Thm.~\ref{thm:hub}; Rem.~\ref{rem:singleflipfails}\\
One entity, $m$ records & pivot $2(m-1)/m$ (optimal); random $\Theta(m^{1/3})$; similarity-first under Thm.~\ref{thm:similarity}: $\gtrsim(m/\log m)^{1/d}$ & Props.~\ref{prop:star}, \ref{prop:random}; Thm.~\ref{thm:similarity}\\
Cross pair, $s$ records per entity & random $g(s)\in[\frac{s-1}{2s-1},13s]$, conj.\ $\Theta(s^{2/3})$; pivot $(2s-1)/s$ & Props.~\ref{prop:gbounds}, \ref{prop:pivotmulti}; Conj.~\ref{conj:contamination}\\
Total orders & randomized quicksort $\to2$ & Prop.~\ref{prop:quicksort}\\
\bottomrule
\end{tabular}
\end{table*}

\begin{lemma}[Reduction lemma]\label{lem:reduction}
Let $\hat x$ be any decoder that returns $x^\star$ when there is no wrong answer. For every $\eps\in[0,\tfrac12]$,
\[
\Big|R_{\hat x}(\eps)-\eps\sum_{a=1}^nD(a)\Big|\le\tfrac32\,\eps^2n^2(n-1).
\]
Moreover:
(i) ML has $D_{\rm ML}(a)=1$ if $x^\star\oplus e_a\in\cC$ and $D_{\rm ML}(a)=2m_a/(1+m_a)$ otherwise.
(ii) The greedy decoder has first-order constant $\sum_a\E_\pi D_\pi(a)$, so its amplification is the mean damage of one wrong answer.
\end{lemma}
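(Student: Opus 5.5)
The plan is to expand the risk over the error pattern $Z$ and group patterns by Hamming weight. Write $R_{\hat x}(\eps)=\sum_{z\in\{0,1\}^n}\eps^{|z|}(1-\eps)^{n-|z|}\,\E_\pi\dH(\hat x(x^\star\oplus z,\pi),x^\star)$.

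\textbf{Weights $0$ and $1$.} The $z=0$ term vanishes because $\hat x$ returns $x^\star$ when there is no wrong answer. The weight-one terms contribute $\eps(1-\eps)^{n-1}\sum_a D(a)$.

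\textbf{Weights $\ge2$.} Every loss is at most $n$. The total probability of patterns with at least two errors is at most $\binom n2\eps^2$, by a union bound over pairs of error positions. So these terms contribute at most $n\binom n2\eps^2=\tfrac12\eps^2n^2(n-1)$.

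\textbf{Replacing $(1-\eps)^{n-1}$ by $1$.} We have $1-(1-\eps)^{n-1}\le(n-1)\eps$, and $\sum_aD(a)\le n^2$. Hence this replacement costs at most $\eps^2n^2(n-1)$. Adding the two error sources gives the stated bound $\tfrac32\eps^2n^2(n-1)$, valid for all $\eps\in[0,\tfrac12]$ (in fact for all $\eps\in[0,1]$).

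\textbf{Part (ii).} Divide the bound by $\eps$ and let $\eps\to0$. This gives $\kappa=\sum_aD(a)$. The greedy decoder returns $x^\star$ on $y=x^\star$: every accepted answer is true, so $x^\star$ stays consistent with $L$, and the final codeword is unique by \cref{lem:welldef}. With $D(a)=\E_\pi D_\pi(a)$ and $\kappa_{\rm raw}=n$, the amplification is $\frac1n\sum_a\E_\pi D_\pi(a)$, the mean single-error damage.

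\textbf{Part (i).} Set $y=x^\star\oplus e_a$.
\begin{itemize}
\item If $y\in\cC$, it is the unique codeword at distance $0$. ML returns it, with loss $1$.
\item Otherwise, $x^\star$ is at distance $1$. Any codeword at distance $1$ from $y$ has the form $y\oplus e_b$. For $b=a$ this is $x^\star$. For $b\ne a$ it is $x^\star\oplus e_a\oplus e_b$, and there are exactly $m_a$ of these in $\cC$.
\end{itemize}
So the nearest set consists of $x^\star$ together with $m_a$ codewords, each at Hamming distance $2$ from $x^\star$. Uniform tie-breaking returns $x^\star$ with probability $1/(1+m_a)$ and each of the others with probability $1/(1+m_a)$. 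Hence $D_{\rm ML}(a)=2m_a/(1+m_a)$.

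\textbf{Expected difficulty.} I expect no serious obstacle; the argument is bookkeeping. The one point needing care is that the union-bound constant and the $(1-\eps)^{n-1}$ correction combine to exactly $\tfrac32$. In particular, the crude bound $D(a)\le n$ must be used for every $a$, including where the decoder skips the wrong question and $D(a)=0$, so that the estimate holds uniformly over all decoders.
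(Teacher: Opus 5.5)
Your proposal is correct and follows the paper's proof essentially step for step. Like the paper, you condition on the error set and use $D(a)$ for single errors. You bound the case of two or more errors by $n\binom n2\eps^2$ via the union bound, and control the $(1-\eps)^{n-1}$ factor with Bernoulli's inequality and $D(a)\le n$. Parts (i) and (ii) then follow by the same tie-counting among the $1+m_a$ nearest codewords and the same noiseless-exactness argument via \cref{lem:welldef}.
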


The remainder grows with $n$, so the expansion describes a fixed instance as $\eps\to0$.

\section{IMPLICATION THEORIES AND HORN RULES}
\label{sec:horn}

We apply the single-error analysis of \cref{sec:setting} to logical rules to identify when constraint propagation amplifies errors. We first show that, under uniformly random query orders, implication rules and, more generally, 2-CNF constraints do not increase first-order risk over raw decoding; a shared-premise Horn family then shows why this conclusion does not hold in general.

Let $\Phi$ be a satisfiable theory over atoms $V$, $|V|=n$, each representing a binary fact. Write the truth $x^\star$ as a satisfying assignment $\sigma^\star$ here. A literal is an atom or its negation. Let $t_x=x$ when $\sigma^\star(x)=1$ and $t_x=\neg x$ otherwise, with $\bar t_x$ the opposite literal. The \emph{backbone} $B=\{x:\Phi\models t_x\}$ contains facts fixed by the rules ($\models$ denotes logical entailment). For $a\notin B$, the \emph{flip set}
\[
K_a=\{x\in V:\ \Phi\wedge\bar t_a\models\bar t_x\}
\]
contains $a$ and every atom whose value is forced to change by a wrong value at $a$. We say that $a$ has the \emph{single-flip property} if flipping exactly the atoms of $K_a$ in $\sigma^\star$, while leaving all others unchanged, gives another model of $\Phi$.

\begin{theorem}[Damage of one wrong answer]\label{thm:singleflip}
Suppose that the only wrong answer is at atom $a$. If $a\in B$, the greedy decoder returns the truth for every order. If $a\notin B$ has the single-flip property, then for every order $\pi$
\[
D_\pi(a)=|K_a|\cdot\ind\{a\text{ comes first in }\pi\text{ among }K_a\},
\]
so $\E_\pi D_\pi(a)=1$ for a uniformly random (or any exchangeable) order.
\end{theorem}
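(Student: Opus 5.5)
We trace the greedy decoder $G_\pi$ on $y=x^\star\oplus e_a$ directly. The key invariant is that every accepted pair $(i,y_i)$ with $i\neq a$ is true, so the accepted set $L$ is always satisfied by $\sigma^\star$. The argument then splits on whether $a$ is reached as a query or as an inference.

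\textbf{Case $a\in B$.} Here $\Phi\models t_a$, so every codeword agrees with $\sigma^\star$ at $a$. Whenever the decoder reaches $a$, it is therefore skipped, and the wrong answer is never observed. By induction over the order, every queried answer is then correct, every skipped value is entailed by true facts and hence true, and $L$ stays consistent with $\sigma^\star$. The output is the unique remaining codeword (\cref{lem:welldef}), which is $\sigma^\star$.

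\textbf{Case $a\notin B$: before $a$ is reached.} Up to the moment $a$ is processed, all accepted answers are true, so $\sigma^\star$ agrees with $L$. When $a$ is reached, it is either skipped or queried.

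\emph{Skipped at $a$.} Then $L$ entails $t_a$, since $\sigma^\star$ is feasible, and by the invariant the run again outputs $\sigma^\star$, giving damage $0$.

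\emph{Queried at $a$.} Then $\bar t_a$ is accepted. We first show that this happens exactly when no other atom of $K_a$ has been decided earlier. Suppose some $x\in K_a\setminus\{a\}$ was decided earlier. Its value was $t_x$, and $\Phi\wedge\bar t_a\models\bar t_x$, so by contraposition $\Phi\wedge t_x\models t_a$; thus $a$ is implied and skipped. Conversely, suppose no atom of $K_a$ precedes $a$. Let $\sigma'$ be $\sigma^\star$ with $K_a$ flipped; by the single-flip property $\sigma'\models\Phi$. The assignment $\sigma'$ agrees with $L$, because $L$ only involves atoms outside $K_a$, with true values. Since $\sigma'$ has $\bar t_a$ while $\sigma^\star$ has $t_a$, both values at $a$ are feasible, so $a$ is queried and $\bar t_a$ is accepted.

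\textbf{Case $a\notin B$: after $\bar t_a$ is accepted.} From now on $L$ entails $\bar t_a$, hence entails $\bar t_x$ for every $x\in K_a$, and these atoms are all skipped with wrong values. We claim every atom outside $K_a$ ends with its true value, and prove it by induction along the remaining order. The assignment $\sigma'$ remains consistent with $L$ throughout. At a later atom $z\notin K_a$, if $z$ is skipped, $\sigma'$ is feasible and so fixes the inferred value to $\sigma'(z)=\sigma^\star(z)$, which is true. If $z$ is queried, the oracle is correct there, so the answer is true and $\sigma'$ still agrees with the enlarged $L$. This gives the formula
\[
D_\pi(a)=|K_a|\cdot\ind\{a\text{ comes first in }\pi\text{ among }K_a\}.
\]

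\textbf{Expectation.} Under an exchangeable order, each element of $K_a$ is equally likely to come first among $K_a$, so $\Prob(a\text{ first})=1/|K_a|$ and $\E_\pi D_\pi(a)=1$.

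\textbf{Main obstacle.} The delicate step is the queried-at-$a$ equivalence together with the post-acceptance induction. The key device there is the witness codeword $\sigma'$ supplied by the single-flip property, which ensures that non-$K_a$ atoms are never forced to wrong values. I would state the invariant "$L$ is satisfied by $\sigma^\star$ before $a$, and by $\sigma'$ after $a$" explicitly and induct on it.
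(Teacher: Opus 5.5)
Your proof is correct and follows the paper's argument. It uses the same case split and the same witness model $\sigma^{(a)}=\sigma^\star\oplus\ind_{K_a}$ from the single-flip property, and it shows in the same way that $\bar t_a$ is accepted exactly when $a$ comes first among $K_a$; the one small difference is that you treat an earlier skipped atom $x\in K_a$ directly through $\Phi\wedge t_x\models t_a$, whereas the paper routes this through its Step~1 characterization of when a set of true literals entails $t_a$.
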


In the second case, a correct value on another atom of $K_a$ implies $a$'s true value and skips its query. Only if $a$ comes first is the error accepted, causing $|K_a|$ wrong facts. Its probability $1/|K_a|$ yields mean damage 1; individual runs have damage 0 or $|K_a|$.

\begin{proposition}\label{prop:twocnf}
If every clause of $\Phi$ has at most two literals, every atom outside the backbone has the single-flip property.
\end{proposition}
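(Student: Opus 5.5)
We have to show that for 2-CNF $\Phi$ and $a\notin B$, the assignment $\sigma'$ obtained from $\sigma^\star$ by flipping exactly the atoms of $K_a$ is again a model of $\Phi$. The plan is to check every clause directly, using the implication-graph view of 2-CNF.

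\textbf{Setup.} By definition of $K_a$, $x\in K_a$ iff $\Phi\wedge\bar t_a\models\bar t_x$. Since $\Phi\wedge\bar t_a$ is satisfiable (because $a\notin B$), this set is well defined. In $\sigma'$, every atom $x\in K_a$ takes the value making $\bar t_x$ true. Every atom outside $K_a$ keeps its true value, so its literal $t_x$ holds.

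\textbf{Unit clauses.} Take a clause consisting of a single literal $\ell$ over atom $x$. Since $\sigma^\star$ satisfies $\Phi$, we have $\ell=t_x$, so $\Phi\models t_x$. If $x\in K_a$, then $\Phi\wedge\bar t_a$ would entail both $t_x$ and $\bar t_x$, contradicting its satisfiability. Hence $x\notin K_a$ and the clause still holds in $\sigma'$.

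\textbf{Binary clauses $\ell_1\vee\ell_2$.} Let $\ell_1,\ell_2$ lie over atoms $x_1,x_2$. We argue by contradiction and suppose both literals are false under $\sigma'$. Since $\sigma^\star$ satisfies the clause, some literal, say $\ell_1$, is true under $\sigma^\star$, i.e. $\ell_1=t_{x_1}$. Because it is false under $\sigma'$, we must have $x_1\in K_a$, so $\Phi\wedge\bar t_a\models\bar t_{x_1}=\neg\ell_1$. The clause then gives $\Phi\wedge\bar t_a\models\ell_2$. Now split on $x_2$:
\begin{itemize}
\item If $x_2\in K_a$, then also $\Phi\wedge\bar t_a\models\bar t_{x_2}$. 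If $\ell_2=\bar t_{x_2}$, then $\ell_2$ is true under $\sigma'$, a contradiction. If $\ell_2=t_{x_2}$, then $\Phi\wedge\bar t_a$ entails both $t_{x_2}$ and $\bar t_{x_2}$, contradicting satisfiability.
\item If $x_2\notin K_a$, then $\ell_2$ being false under $\sigma'$ means $\ell_2=\bar t_{x_2}$. But $\Phi\wedge\bar t_a\models\ell_2=\bar t_{x_2}$ puts $x_2$ in $K_a$, a contradiction.
\end{itemize}
The case $x_1=x_2$ is covered by the same argument or is trivial. Hence every clause holds under $\sigma'$, and $a$ has the single-flip property.

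\textbf{Main obstacle.} The only delicate point is recognising why two-literal clauses matter: each clause propagates to a single literal once one literal is refuted. The argument above uses this implicitly when it deduces $\Phi\wedge\bar t_a\models\ell_2$. This step fails for three-literal clauses, which is consistent with the Horn hub counterexample later in the paper. Beyond that, the proof only requires careful bookkeeping of which literal is true under $\sigma^\star$ versus $\sigma'$ in each case.
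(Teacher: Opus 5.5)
Your proof is correct and follows essentially the same route as the paper's. Both check each clause directly: when a binary clause is falsified under the flipped assignment, refuting one literal forces the other into the consequences of $\Phi\wedge\bar t_a$, which contradicts either the satisfiability of $\Phi\wedge\bar t_a$ or the definition of $K_a$, and unit clauses are handled by the same consistency argument.
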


\Cref{prop:twocnf} holds because, if a flip makes one literal of a two-literal clause false, $\Phi\wedge\bar t_a$ forces the other to be true, preserving the clause.

\begin{corollary}[Conservation law]\label{cor:conservation}
Let $\Phi$ be a satisfiable 2-CNF theory with backbone $B$. For every $\eps\in[0,\tfrac12]$,
\[
-\eps^2n(n-1)\ \le\ \Rgr(\eps)-\eps\,(n-|B|)\ \le\ \tfrac12\eps^2n^2(n-1).
\]
In particular, for every implication graph $G=(V,A)$, with $\Phi_G=\bigwedge_{(u,v)\in A}(\neg u\vee v)$, greedy decoding has $\Rgr(\eps)=\eps n+O(\eps^2)=\Rraw(\eps)+O(\eps^2)$, for false-positive and false-negative errors alike and for every truth.
\end{corollary}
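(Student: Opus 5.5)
The plan is to expand the risk directly over error patterns rather than quote \cref{lem:reduction}. Its symmetric remainder $\tfrac32\eps^2n^2(n-1)$ is weaker than the asymmetric bounds claimed here, so a separate count of the single-error and multi-error contributions is needed. Write $R_{\rm gr}(\eps)=\sum_{z\in\{0,1\}^n}\Prob(Z=z)\,\E_\pi\dH(G_\pi(x^\star\oplus z),x^\star)$ and split the sum into three parts: $z=0$, $|z|=1$ and $|z|\ge2$. The term $z=0$ contributes nothing, because with no wrong answer every accepted pair is correct and the final codeword is $x^\star$ (\cref{lem:welldef}).

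For $z=e_a$, the single-error damage comes from \cref{thm:singleflip} together with \cref{prop:twocnf}. Since $\Phi$ is 2-CNF, every $a\notin B$ has the single-flip property, so $\E_\pi D_\pi(a)=1$. Every $a\in B$ gives $D_\pi(a)=0$ for every order. Hence
\[
\textstyle\sum_{|z|=1}\Prob(Z=z)\,\E_\pi\dH=\eps(1-\eps)^{n-1}(n-|B|).
\]
For $|z|\ge2$, the loss lies in $[0,n]$ because it is a Hamming distance between vectors of length $n$. A union bound over pairs gives $\Prob(|Z|\ge2)\le\binom n2\eps^2$.

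The two bounds then follow by elementary estimates.

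\textbf{Upper bound.} Use $(1-\eps)^{n-1}\le1$ and bound the multi-error contribution by $n\binom n2\eps^2=\tfrac12\eps^2n^2(n-1)$.

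\textbf{Lower bound.} Drop the nonnegative multi-error contribution. Bernoulli's inequality gives $(1-\eps)^{n-1}\ge1-(n-1)\eps$. Then $n-|B|\le n$ yields
\[
R_{\rm gr}(\eps)\ge\eps(n-|B|)-\eps^2(n-1)(n-|B|)\ge\eps(n-|B|)-\eps^2n(n-1).
\]
The range $\eps\le\tfrac12$ is not really needed, except to keep the probabilities meaningful.

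For the implication-graph statement, note that every clause $\neg u\vee v$ has two literals, so the first part applies. It remains to show $B=\emptyset$. The all-true and all-false assignments both satisfy $\Phi_G$, so no atom takes the same value in every model. In particular, $\Phi_G$ does not entail $t_x$ for any $x$, whatever the truth $\sigma^\star$ is. Hence $R_{\rm gr}(\eps)=\eps n+O(\eps^2)=\Rraw(\eps)+O(\eps^2)$. This holds for both error directions, because the single-error analysis treats a flip from $1$ to $0$ and from $0$ to $1$ symmetrically through $t_a$ and $\bar t_a$.

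The only delicate point is in the single-error step. Backbone errors must contribute exactly zero, not merely be skipped in expectation. \Cref{thm:singleflip} gives this for every order, which is what makes the count $(n-|B|)$ exact.
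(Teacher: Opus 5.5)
Your proposal is correct and matches the paper's proof essentially step for step. The paper uses the same three-way split on the error set and the same tools: \cref{thm:singleflip} with \cref{prop:twocnf} for single errors, a union bound with loss at most $n$ for two or more errors, and Bernoulli's inequality for the lower bound. It likewise gets the empty backbone of $\Phi_G$ from the fact that $\emptyset$ and $V$ are both models.
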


Thus the first-order risk is $\eps(n-|B|)$; implication graphs have no fixed facts and match raw at first order. Multiple errors can still cause differences. If accepted, a false positive (false to true) changes false descendants, and a false negative (true to false) changes true ancestors. The order matters: on $v_1\to\dots\to v_n$ queried as $v_1,\dots,v_n$, a single false positive is accepted and changes every later false fact.

\begin{proposition}[ML on 2-CNF]\label{prop:ml2cnf}
On a satisfiable 2-CNF theory, the ML single-error damage at $a\notin B$ is 1 if $|K_a|\le2$ and 0 if $|K_a|\ge3$. Hence $\RML(\eps)=\Rgr(\eps)-\eps\,\#\{a\notin B:|K_a|\ge3\}+O(\eps^2)$.
\end{proposition}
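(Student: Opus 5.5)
We derive the ML damage from \cref{lem:reduction}(i) and then insert it into the first-order expansion of \cref{cor:conservation}. Fix $a\notin B$ and let $x=x^\star$, $y=x^\star\oplus e_a$. By \cref{lem:reduction}(i) the damage is $1$ if $x^\star\oplus e_a\in\cC$ and $2m_a/(1+m_a)$ otherwise. The whole task is therefore to describe, in terms of $K_a$, which codewords lie at Hamming distance at most $2$ from $x^\star$ and differ from it at $a$. By \cref{prop:twocnf}, $a$ has the single-flip property, so $x^\star\oplus\ind_{K_a}\in\cC$. By definition of $K_a$, every model of $\Phi$ with $a$ flipped must also flip all of $K_a$. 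Hence every codeword $z$ with $z_a\neq x^\star_a$ satisfies $\dH(z,x^\star)\ge|K_a|$, and $x^\star\oplus\ind_{K_a}$ attains this bound.

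We split into cases on $|K_a|$.

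If $|K_a|=1$, then $x^\star\oplus e_a\in\cC$ and the damage is $1$.

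If $|K_a|=2$, say $K_a=\{a,b\}$, then $x^\star\oplus e_a\notin\cC$. Any codeword of the form $x^\star\oplus e_a\oplus e_c$ must flip $b$, so $c=b$, and this codeword exists by the single-flip property. Thus $m_a=1$ and the damage is $2\cdot1/2=1$.

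If $|K_a|\ge3$, then $x^\star\oplus e_a\notin\cC$, and no codeword $x^\star\oplus e_a\oplus e_c$ exists, since it would flip only two atoms while at least three must change. So $m_a=0$ and the damage is $0$.

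For $a\in B$ the ML damage is also $0$. Every codeword has $a$ at its true value, so $x^\star\oplus e_a\notin\cC$. Any codeword agreeing with $y$ at $a$ would violate the backbone, so no codeword of the form $x^\star\oplus e_a\oplus e_c$ exists and $m_a=0$. (This is exactly what formula (i) gives.)

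Summing over $a$ via \cref{lem:reduction}, the ML first-order constant is $\#\{a\notin B:|K_a|\le2\}$. The greedy first-order constant is $n-|B|$ by \cref{cor:conservation}, or equivalently \cref{thm:singleflip} with \cref{lem:reduction}(ii). Subtracting, with both remainders $O(\eps^2)$ for fixed $n$, gives $\RML(\eps)=\Rgr(\eps)-\eps\,\#\{a\notin B:|K_a|\ge3\}+O(\eps^2)$.

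The main point needing care is the lower bound $\dH(z,x^\star)\ge|K_a|$ for every codeword $z$ with $z_a\ne x^\star_a$. It follows directly from the semantic definition of entailment: $\Phi\wedge\bar t_a\models\bar t_x$ means every model with $\bar t_a$ has $\bar t_x$. Beyond that, we only need to check that the formula in \cref{lem:reduction}(i) agrees with the nearest-codeword picture in each case, which the argument above does.
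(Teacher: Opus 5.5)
Your proof is correct and follows essentially the same route as the paper. You apply \cref{lem:reduction}(i), invoke \cref{prop:twocnf} to get the single-flip codeword $x^\star\oplus\ind_{K_a}$, use the definition of $K_a$ to force every codeword flipping $a$ to flip all of $K_a$, read off $m_a\in\{0,1\}$ by cases on $|K_a|$, and subtract from the conservation law; your bound $\dH(z,x^\star)\ge|K_a|$ is just a compact form of the paper's case-by-case inclusion $K_a\subseteq\{a,b\}$.
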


\paragraph{Horn rules.} The \emph{Horn hub} $H_{k,s}$ has $k$ rules $b_{i,1}\wedge\dots\wedge b_{i,s-1}\wedge a\to c_i$ ($i=1,\dots,k$), sharing premise $a$. Each rule's other premises and conclusion occur in no other rule. All $b_{i,j}$ are true and $a,c_i$ are false, so $n=1+ks$. For $s=2$ this is the pattern $\mathrm{isa}(x,y)\wedge\mathrm{has}(y,z)\to\mathrm{has}(x,z)$ with a shared premise.

\begin{theorem}[Horn hub]\label{thm:hub}
For a single false positive at the shared premise $a$ of $H_{k,s}$, $s\ge2$, and a uniformly random order,
\begin{align*}
\E_\pi D_\pi(a)&=(k+1)\int_0^1(1-u^s)^k\,du\\
&=\frac{(k+1)\,\Gamma(1+\frac1s)\,\Gamma(k+1)}{\Gamma(k+1+\frac1s)},
\end{align*}
which equals $\Gamma(1+\frac1s)\,k^{1-1/s}(1+O(1/k))$. Every other single error on $H_{k,s}$ has damage exactly 1. Hence $\Rgr(\eps)=\Rraw(\eps)+\eps\,[\E_\pi D_\pi(a)-1]+O(\eps^2)$.
\end{theorem}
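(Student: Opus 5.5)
The plan is to compute $D_\pi(a)$ exactly for every fixed order $\pi$ by tracing the greedy decoder directly on $H_{k,s}$, then average over $\pi$, and finally invoke \cref{lem:reduction}(ii). For rule $i$, call $S_i=\{b_{i,1},\dots,b_{i,s-1},c_i\}$ its \emph{private block}; the blocks are disjoint, each has $s$ atoms, and all atoms other than $a$ have correct answers.

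\textbf{Step 1 (when is $a$ skipped?).} Models of $H_{k,s}$ are the assignments in which no rule has all premises true and conclusion false. Before $a$ is reached, only correct values have been accepted: the $b$'s true and the $c$'s false. I claim that $a$ is implied, hence correctly skipped, iff some block $S_i$ lies entirely before $a$ in $\pi$. If such a block exists, the contrapositive of rule $i$ forces $a$ false. If every block has an undecided atom, setting $a$ true remains feasible by completing each rule through its undecided atom: make an undecided $c_i$ true or an undecided $b_{i,j}$ false. Setting $a$ false is always feasible.

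\textbf{Step 2 (damage when $a$ is queried).} Once $a=\text{true}$ is accepted, I will show that each rule contributes exactly one wrong fact. Consider the last undecided atom of $S_i$ to be processed. All earlier atoms of $S_i$ are answered, or forced, to their true values. They cannot be forced wrong earlier, because an atom of $S_i$ is forced only when all other atoms of $S_i$ are already fixed, which uses only $a$ and $S_i$ since blocks are disjoint. When the last atom is reached, it is forced to violate its true value: $c_i$ is forced true, or that $b_{i,j}$ is forced false. So block $i$ has exactly one wrong fact, and
\[
D_\pi(a)=(k+1)\,\ind\{\text{no block }S_i\text{ precedes }a\text{ entirely}\}.
\]

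\textbf{Step 3 (averaging).} Realize the uniform order by i.i.d.\ uniform times, and condition on $a$'s time $u$. The events ``$S_i$ entirely precedes $a$'' are then independent, each with probability $u^s$, which gives $(k+1)\int_0^1(1-u^s)^k\,du$. Substituting $v=u^s$ turns the integral into $\frac1sB(\frac1s,k+1)$, which yields the Gamma form. The asymptotic $\Gamma(1+\frac1s)k^{1-1/s}(1+O(1/k))$ follows from the standard ratio $\Gamma(k+1)/\Gamma(k+1+\frac1s)=k^{-1/s}(1+O(1/k))$.

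\textbf{Step 4 (other single errors).} The only possible error at $a$ is a false positive, since $a$ is false. A wrong answer at $c_i$ says true, and a wrong answer at $b_{i,j}$ says false. Neither value is ever implied: nothing forces a $c$ false except a contrapositive chain, which needs $a$ true, and nothing forces a $b$ true. So the wrong answer is always queried and accepted. Each such value only satisfies its rule and forces nothing else, because $a$ stays correctly false. Hence the damage is exactly $1$. Summing gives $\sum_x\E_\pi D_\pi(x)=ks+\E_\pi D_\pi(a)=n-1+\E_\pi D_\pi(a)$, and \cref{lem:reduction} yields the risk expansion.

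I expect the main obstacle to be Step 2: one must rule out cascades, for instance a forced wrong $b$ feeding back into other rules. The disjointness of blocks, together with the fact that forcing inside $S_i$ depends only on $a$ and $S_i$, is what should settle this.
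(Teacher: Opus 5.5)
Your proof is correct and follows essentially the same route as the paper: $a$ is skipped exactly when some block $S_i$ lies entirely before it, and otherwise each block's last atom is forced wrong, giving damage $k+1$. The remaining steps also match, namely the independent $u^s$ block events under uniform arrival times, the Beta/Gamma evaluation, and the reduction lemma; your informal ``forcing uses only $a$ and $S_i$'' is what the paper makes precise via its criterion that $\Phi\wedge L'$ is unsatisfiable iff $a\in L'$ and $T_i\subseteq L'$ for some $i$. In Step 4 the paper argues more cleanly that the answer vector $y$ (which has $a$ false) is itself a model, so every claim stays consistent and the output is $y$; your phrase about a ``contrapositive chain'' forcing $c_i$ false is slightly off, since $c_i$ occurs only as a head and is never forced false (it is forcing $c_i$ \emph{true} that would need $a$ true), though your conclusion is right.
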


Here $\Gamma$ is the Gamma function. The mechanism differs from \cref{thm:singleflip}: a wrong $a$ alone implies no other values. It is skipped only if one rule's other $s$ atoms were processed earlier: their true premises and false conclusion force $a$ false. If accepted, each rule's last atom is inferred wrongly and skipped; it can be a premise or the conclusion. Including $a$, the total damage is $k+1$.

Appendix~\ref{app:proofs-horn} gives the ML proof and the arrival-time derivation of the Horn formula, including its $s=2$ special case.

For Horn theories with at most $s$ premises per rule, we conjecture that mean false-positive damage at $a$ is bounded by $C_s\,c^{1-1/s}$, where $c$ counts newly true atoms after adding $a$ to the truth, including $a$ (\cref{conj:bodysize}). The hub shows this exponent cannot be lowered.

Under uniformly random query orders, acceptance probability and propagation damage balance for 2-CNF constraints, so pruning does not increase first-order risk over raw decoding. The shared-premise Horn family shows that this balance can fail, with the mean cost of one wrong answer growing with the number of rules.

\section{ENTITY RESOLUTION WITH TRANSITIVITY PRUNING}
\label{sec:er}

We now apply the greedy-decoding framework to entity resolution, where each pairwise answer states whether two records refer to the same entity. Transitivity lets ``same'' on $A$--$B$ and $B$--$C$ imply ``same'' on $A$--$C$, saving a query but potentially spreading an error. This involves three pairwise facts, so the 2-CNF conservation law in \cref{sec:horn} does not directly apply. We first study false splits within one entity, then false merges between entities.

\subsection{One Cluster}
\label{sec:single}

The $m$ records in $[m]$ all refer to one entity, so every pair's true answer is ``same''. The algorithm does not know this: its feasible outputs are all partitions of $[m]$, each specifying which records belong together. There are $\binom m2$ pairwise questions. We write the error rate as $\delta$: each answer is independently flipped to ``different'' with probability $\delta$. The loss $L_\pi$ counts pairs incorrectly separated in the final output.

The \emph{union-find pipeline} $U_\pi$ \citep{wang2013leveraging} maintains current groups of records, called blocks, and accepted ``$\neq$'' constraints between them. A block need not coincide with a true entity. It scans pairs in order $\pi$. For $\{u,v\}$, membership in one block already implies ``same'', while a $\neq$ constraint between their blocks implies ``different''; either case skips the query. Otherwise it asks the oracle: ``same'' merges the blocks, and ``different'' records a constraint. Existing constraints are carried over when blocks merge. \Cref{lem:ufgreedy} shows that this is the greedy decoder on the partition code.

Construct the \emph{Kruskal tree} $\Tpi$ from the order alone, without using the noisy answers. Scan pairs in order $\pi$, retaining a pair only when it connects two components of the edges retained so far. With all answers correct, these $m-1$ tree edges are exactly the queried pairs. For a tree $T$, let $d_T(u,v)$ count edges on the unique path from $u$ to $v$. Its Wiener index \citep{wiener1947structural} is
\[
\Wien(T)=\sum_{\{u,v\}}d_T(u,v)=\sum_{e\in T}n_e(m-n_e)
\]
where deleting edge $e$ splits the tree into parts of sizes $n_e$ and $m-n_e$. Exactly $n_e(m-n_e)$ pairwise paths use that edge, which gives the second equality.

\begin{lemma}[Sandwich]\label{lem:sandwich}
Fix any order $\pi$ and any pattern of wrong answers. For a pair $\{u,v\}$, let $k(u,v)$ be the number of edges of $\Tpi$ whose answer is wrong on the tree path from $u$ to $v$. If $k(u,v)=0$, then $u$ and $v$ end in the same block; if $k(u,v)=1$, they end in different blocks. Hence $N_1\le L_\pi\le N_{\ge1}$, where $N_j$ ($N_{\ge j}$) is the number of pairs with $k=j$ ($k\ge j$).
\end{lemma}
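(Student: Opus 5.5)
We argue directly from the union-find pipeline $U_\pi$ by following the Kruskal tree edges. All true answers are ``same'', so a wrong answer is a ``different''. The first step is an invariant. At every time $t$ of the scan, let $F_t$ be the forest of Kruskal edges scanned so far and $F_t^{\rm c}$ its subforest of \emph{correct} edges. We claim that each block of $U_\pi$ at time $t$ is exactly a connected component of $F_t^{\rm c}$. Moreover, for every wrong tree edge $e=\{u,v\}\in F_t$, there is a recorded $\neq$ constraint between the block of $u$ and the block of $v$.

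We prove the invariant by induction over the scan. Merges only happen when the oracle says ``same'', and accepted ``same'' answers never contradict the truth. So blocks are always unions of records joined by correct answers. Take a pair $\{u,v\}$ at time $t$, and split into two cases.

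\emph{Non-tree pair.} Then $u,v$ are already connected in $F_t$. Suppose the $F_t$-path between them has no wrong edges. By induction, $u$ and $v$ lie in one block, so the pair is skipped as ``same'' and the invariant is preserved. Suppose instead the path has wrong edges. Then the pair is either skipped, or queried and correctly answered ``same''. A merge could then join two components of $F_t^{\rm c}$ and break the invariant. To handle this, we weaken the invariant to the following: \emph{blocks are unions of components of $F_t^{\rm c}$, and no block contains both endpoints of any wrong tree edge in $F_t$.} The second clause holds because the recorded $\neq$ constraint between the endpoint blocks is carried over on merges. A ``same'' query between two blocks separated by a constraint is never asked, since that pair would be skipped as ``different''.

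\emph{Tree pair.} The endpoints lie in different components of $F_t$, hence in different blocks; the query is asked unless a constraint already implies ``different''. If the answer is correct, the blocks merge. If the answer is wrong, a constraint is recorded, or was already present. Either way the weakened invariant is maintained.

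The end state then gives both claims of the lemma. If $k(u,v)=0$, the whole tree path from $u$ to $v$ lies in $F^{\rm c}$, so $u$ and $v$ share a block. If $k(u,v)=1$, let $e=\{x,y\}$ be the unique wrong edge on the path. The subpaths $u\to x$ and $y\to v$ are fully correct, so $u$ shares a block with $x$ and $v$ shares a block with $y$. The invariant says $x$ and $y$ are in different blocks, hence so are $u$ and $v$. Counting pairs, every pair with $k=1$ is lost, which gives $L_\pi\ge N_1$. Every lost pair has $k\ge1$, which gives $L_\pi\le N_{\ge1}$.

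The main obstacle is the invariant's second clause. We must check that merges triggered through non-tree ``same'' answers can never place the two endpoints of a wrong tree edge in one block. This rests on the pipeline recording and carrying over the $\neq$ constraint from the moment that wrong edge is queried. We must also rule out that such an edge was skipped as ``same''. That cannot happen, because when a tree edge is scanned its endpoints are in different components of $F_t$, and so, since blocks are contained in $F^{\rm c}$-components, in different blocks.
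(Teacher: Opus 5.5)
Your overall plan matches the paper's. Every Kruskal edge is queried, so correct tree edges merge their endpoint blocks, while each wrong tree edge plants a $\neq$ constraint that is inherited and never merged across. The $k=0$ and $k=1$ cases then follow as you describe. The gap is that ``every tree edge is queried'' is never established, and your weakened invariant cannot establish it.

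\emph{The containment you invoke is false.} In the tree-pair case you infer ``different components of $F_t$, hence different blocks'' from ``blocks are contained in $F^{\mathrm{c}}$-components''. That is exactly the statement you abandoned, and it is false. Take fragments $P_1$ and $P_3$ separated by two wrong tree edges: a correct non-tree ``same'' answer between them can be queried, and it then merges two $F^{\mathrm{c}}$-components. ``Blocks are unions of $F^{\mathrm{c}}_t$-components'' only bounds blocks from below. Your second clause concerns only wrong edges already scanned. So nothing in your invariant prevents the endpoints of a new tree edge from already sharing a block.

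\emph{The skipped correct tree edge is not ruled out.} You explicitly allow a tree pair to be skipped because a $\neq$ constraint already joins its endpoint blocks. You then write ``if the answer is correct, the blocks merge''. In that case nothing is queried and nothing merges. A correct edge of $F^{\mathrm{c}}_t$ would then join two distinct blocks, breaking your first clause and the $k=0$ conclusion.

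\emph{Wrong answers on non-tree pairs are ignored.} You treat every queried non-tree pair as correctly answered ``same'', but the lemma allows wrong answers there. Those create extra constraints that you never control.

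The missing idea is the paper's pair of containment facts, stated for the \emph{full} scanned Kruskal forest $F_t$ (correct and wrong edges together):
\begin{itemize}
\item[(A)] every block lies inside one component of $F_t$;
\item[(B)] every $\neq$ constraint joins two blocks lying in the same component of $F_t$.
\end{itemize}
Both follow by induction from three observations. Every merge and every new constraint comes from a queried pair. Any pair scanned at step $t$, tree or not, has both endpoints in one component of $F_t$. These components only coarsen over time.

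With (A) and (B), at the step of a tree edge its endpoints lie in different components of $F_{t-1}$. Hence they are in different blocks with no constraint between them. So the edge is queried whatever the earlier answers were, on tree or non-tree pairs. Adding (A) and (B) to your invariant, and replacing ``$F^{\mathrm{c}}$-components'' by ``$F_t$-components'' where you argue that tree-edge endpoints lie in different blocks, closes the gap. The rest of your argument then goes through.
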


Every tree edge is queried even when errors occur, and each pipeline block stays inside a component of the forest built so far by the Kruskal scan. This yields \cref{lem:sandwich}. Paths with two or more wrong edges need not lead to separation; with a single wrong answer, the two bounds coincide and give the exact loss.

\begin{theorem}[Wiener law]\label{thm:wiener}
For every order $\pi$ and every $\delta\in[0,1]$,
\[
\delta\Wien(\Tpi)-\delta^2\Wien_2(\Tpi)\ \le\ \E L_\pi\ \le\ \delta\Wien(\Tpi),
\]
where $\Wien_2(T)=\sum_{\{u,v\}}d_T(u,v)^2$. A single wrong answer on a tree edge $e$ costs exactly $n_e(m-n_e)$ pairs, and a single wrong answer on any other pair costs nothing.
\end{theorem}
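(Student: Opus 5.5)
The plan is to derive everything from the Sandwich lemma (\cref{lem:sandwich}) by taking expectations over the independent errors, with the tree $\Tpi$ fixed by the order alone. Fix $\pi$, so $\Tpi$ is deterministic. For a pair $\{u,v\}$ at tree distance $d=d_{\Tpi}(u,v)$, the count $k(u,v)$ is the number of wrong answers among the $d$ tree edges on the path. These answers are independent $\Bern(\delta)$ variables, so $k(u,v)\sim\mathrm{Bin}(d,\delta)$. The Sandwich lemma gives
\[
\sum_{\{u,v\}}\Prob(k(u,v)=1)\ \le\ \E L_\pi\ \le\ \sum_{\{u,v\}}\Prob(k(u,v)\ge1).
\]

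\textbf{Upper bound.} A union bound gives $\Prob(k\ge1)\le\E k=d\delta$. Summing over pairs yields $\E L_\pi\le\delta\Wien(\Tpi)$.

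\textbf{Lower bound.} Here $\Prob(k=1)=d\delta(1-\delta)^{d-1}$. Bernoulli's inequality gives $(1-\delta)^{d-1}\ge1-(d-1)\delta\ge1-d\delta$, which is valid for $\delta\in[0,1]$ and $d\ge1$. Hence $\Prob(k=1)\ge d\delta-d^2\delta^2$. Summing gives $\delta\Wien(\Tpi)-\delta^2\Wien_2(\Tpi)$. One could also keep the sharper $d(d-1)$ term, but the stated bound only needs $d^2$.

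\textbf{Single-error statement.} Suppose exactly one answer is wrong. If it lies on a tree edge $e$, then $k(u,v)\in\{0,1\}$ for every pair, and $k(u,v)=1$ exactly when the path from $u$ to $v$ uses $e$. The two sandwich bounds then coincide, so $L_\pi=N_1$. The pairs whose path uses $e$ are exactly those separated by deleting $e$, and there are $n_e(m-n_e)$ of them. If the wrong answer is on a non-tree pair, then $k\equiv0$ for every pair. All records end in one block, and the loss is $0$.

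The only delicate point is already absorbed into the Sandwich lemma. That lemma needs every tree edge to be queried regardless of errors, and it needs blocks to stay inside Kruskal components. Granting it as stated, I expect no real obstacle. The remaining care is making sure that independence of the tree-edge answers is legitimate. It is, because $\Tpi$ depends only on $\pi$, and $\pi$ is fixed and independent of the noise.
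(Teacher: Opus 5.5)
Your proposal is correct and follows essentially the same route as the paper. The paper also takes expectations in \cref{lem:sandwich} with $k(u,v)\sim\mathrm{Bin}(d,\delta)$, since $\Tpi$ depends only on $\pi$; it bounds $d\delta(1-\delta)^{d-1}$ below via Bernoulli's inequality and $1-(1-\delta)^d$ above by $d\delta$, and reads off the single-error costs from $N_1=N_{\ge1}$ when only one answer is wrong.
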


\Cref{thm:wiener} makes the effect of the query tree explicit: for fixed $m$, $\E L_\pi=\delta\Wien(\Tpi)+O(\delta^2)$ as $\delta\to0$, with the displayed bounds valid for every $\delta\in[0,1]$. Raw decoding has risk $\delta\binom m2$, so first-order amplification is $\Wien(\Tpi)/\binom m2$, the average tree distance. The single-error cut identity and \cref{lem:sandwich} hold for any error placement; the expectation uses independent errors with a common rate. \Citet{galhotra2018robust} previously identified the vulnerability of tree-based inference to errors; here the bounds quantify it for this decoder.

\begin{proposition}[Pivot orders are optimal]\label{prop:star}
(i) For every spanning tree $T$ of $K_m$, $m\ge3$, $\Wien(T)\ge(m-1)^2$, with equality if and only if $T$ is a star (a classical property of the Wiener index; \citealp{dobrynin2001wiener}). So the first-order risk $\delta\Wien(\Tpi)$ is minimized exactly by the orders whose Kruskal tree is a star, such as asking every pair with a fixed pivot first \citep{vesdapunt2014crowdsourcing}; their amplification is $2(m-1)/m<2$.
(ii) Among decoders that observe the answers on a fixed spanning tree $T$ only (non-adaptive designs with $m-1$ queries, each observed answer flipped independently with probability $\delta$), if $\delta\le1/m$ the worst-case risk over the one-cluster truth and the $m-1$ two-block partitions obtained by cutting one edge of $T$ is at least $\frac12\delta(1-\delta)^{m-2}(m-1)^2$. So, to first order in $\delta$, the worst-case risk of every such design is at least half of the pivot order's risk $\delta(m-1)^2$ at the one-cluster truth.
\end{proposition}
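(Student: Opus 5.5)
For part (i) I will work from the edge-cut form $\Wien(T)=\sum_{e\in T}n_e(m-n_e)$ stated before \cref{lem:sandwich}. Each of the $m-1$ edges satisfies $1\le n_e\le m-1$, so $n_e(m-n_e)\ge m-1$, with equality exactly when $n_e\in\{1,m-1\}$, that is, when $e$ is a pendant edge. Summing gives $\Wien(T)\ge(m-1)^2$. Equality forces every edge to be pendant. When $m\ge3$, a tree in which every edge is pendant is a star: if two edges $uv$ and $vw$ share the vertex $v$ and neither $u$ nor $w$ is a leaf, some edge would have both endpoints of degree at least 2. By \cref{thm:wiener} the first-order risk of every order is $\delta\Wien(\Tpi)$, so the minimizing orders are exactly those with a star Kruskal tree. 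A pivot-first order qualifies, because its first $m-1$ pairs all contain the pivot and are all retained by the Kruskal scan. The amplification is $(m-1)^2/\binom m2=2(m-1)/m$.

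For part (ii) I will use a two-point (Le Cam type) argument for each edge $e$ of $T$. Let $H_0$ be the one-cluster truth and $H_e$ the partition obtained by cutting $e$; the two truths differ on exactly $n_e(m-n_e)\ge m-1$ pairs. The observations are the answers on the $m-1$ tree edges. Under $H_0$ all tree edges truly say ``same''. Under $H_e$ edge $e$ says ``different'' and the others say ``same''. Call the observation vector in which only $e$ reads ``different'' $y^{(e)}$. Then $\Prob_{H_0}(y^{(e)})=\delta(1-\delta)^{m-2}$, and $\Prob_{H_e}(y^{(e)})=(1-\delta)^{m-1}\ge\Prob_{H_0}(y^{(e)})$ because $\delta\le1/2$. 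Any (possibly randomized) decoder outputs some partition $\hat P$ at $y^{(e)}$. By the triangle inequality for Hamming distance, $\dH(\hat P,H_0)+\dH(\hat P,H_e)\ge n_e(m-n_e)$. Restricting both risks to this single observation therefore gives
\[
R(H_0)+R(H_e)\ \ge\ \delta(1-\delta)^{m-2}\,n_e(m-n_e).
\]
It follows that $\max\{R(H_0),R(H_e)\}\ge\frac12\delta(1-\delta)^{m-2}n_e(m-n_e)$.

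To obtain the $(m-1)^2$ factor, I will aggregate over all edges instead of picking one. The events $\{y=y^{(e)}\}$ are disjoint, so
\[
R(H_0)+\sum_e R(H_e)\ \ge\ \delta(1-\delta)^{m-2}\sum_e n_e(m-n_e)\ \ge\ \delta(1-\delta)^{m-2}(m-1)^2.
\]
The left side has $m$ terms, so averaging alone loses a factor of $m$. The main obstacle is therefore the weighting. Instead, I will give weight $1$ to $H_0$ and weight $1/(m-1)$ to each $H_e$, so that the bound reads $\max_H R(H)\cdot 2\ge\ldots$. Use the pointwise inequality $\dH(\hat P,H_0)+\dH(\hat P,H_e)\ge n_e(m-n_e)$ only at $y^{(e)}$. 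Then compare $\Prob_{H_0}(y^{(e)})$ against $\Prob_{H_e}(y^{(e)})/(m-1)$. The hypothesis $\delta\le1/m$ gives $(1-\delta)/(m-1)\ge\delta$, and this should be exactly the point where it enters, so that $\sum_e\min\bigl(\Prob_{H_0},\Prob_{H_e}/(m-1)\bigr)(y^{(e)})\cdot n_e(m-n_e)\ge\delta(1-\delta)^{m-2}(m-1)^2$. The weighted risk is at most $2\max_H R(H)$, which yields the stated $\frac12$ constant. I expect the care to lie in checking this weighting step. The first-order comparison with the pivot risk $\delta(m-1)^2$ then follows by letting $\delta\to0$.
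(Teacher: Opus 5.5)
Your proposal is correct and follows the paper's proof. Part (i) uses the same edge-cut bound $n_e(m-n_e)\ge m-1$, with equality exactly for pendant edges, and in part (ii) your final weighting (weight $1$ on the one-cluster truth and $1/(m-1)$ on each cut partition, restricted to the disjoint observations $y^{(e)}$, with $\delta\le1/m$ giving $(1-\delta)^{m-1}/(m-1)\ge\delta(1-\delta)^{m-2}$) is exactly the paper's Bayes prior $\bigl(\tfrac12,\tfrac1{2(m-1)}\bigr)$ scaled by $2$; the weighting step you flagged checks out, so the exploratory single-edge and unweighted-sum bounds can simply be dropped.
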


\begin{proposition}[Random order]\label{prop:random}
For a uniformly random order, $\Tpi$ is the minimum spanning tree $\MST_m$ of $K_m$ with i.i.d.\ weights, and $\E L=\delta\,\E\,\Wien(\MST_m)+O(\delta^2)$ with $\E\,\Wien(\MST_m)=\Theta(m^{7/3})$, an amplification $\Theta(m^{1/3})$. At $\delta=x\,m^{-1/3}$ with $x>0$ fixed, the expected fraction of lost pairs stays bounded away from 0 and 1 as $m\to\infty$.
\end{proposition}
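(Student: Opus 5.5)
We split \cref{prop:random} into three parts: identify the tree, expand the loss to first order, and estimate the Wiener index.

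\textbf{Step 1: the tree is an MST.} Realize the uniform order $\pi$ by i.i.d.\ $U(0,1)$ weights on the $\binom m2$ pairs, scanned in increasing order. Then the Kruskal scan in the definition of $\Tpi$ is exactly Kruskal's algorithm, so $\Tpi=\MST_m$ almost surely.

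\textbf{Step 2: first-order expansion.} \Cref{thm:wiener} gives, for each $\pi$,
\[
\delta\Wien(\Tpi)-\delta^2\Wien_2(\Tpi)\le\E[L_\pi\mid\pi]\le\delta\Wien(\Tpi).
\]
Average over $\pi$ and bound $\Wien_2\le (m-1)\binom m2$. For fixed $m$ this yields $\E L=\delta\,\E\Wien(\MST_m)+O(\delta^2)$. The amplification is then $\E\Wien(\MST_m)/\binom m2$, so it suffices to show $\E\Wien(\MST_m)=\Theta(m^{7/3})$. Equivalently, the mean distance between two uniform vertices of $\MST_m$ is $\Theta(m^{1/3})$.

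\textbf{Step 3: the Wiener index.} For the upper bound, use the known bound on the expected diameter of $\MST_m$, which is $\Theta(m^{1/3})$ \citep{addarioberry2009critical}. Then $\Wien\le\binom m2\,\mathrm{diam}$ gives $O(m^{7/3})$. For the lower bound, use the scaling limit of $\MST_m$ with distances rescaled by $m^{-1/3}$ \citep{addarioberry2017scaling}. It implies that $m^{-1/3}d(U,V)$ converges in distribution to a nondegenerate positive variable. So $\Prob(d(U,V)\ge c\,m^{1/3})\ge c'$, which gives $\E\Wien\ge c''m^{7/3}$.

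A more self-contained route for the lower bound works through the critical window. At weight $p=(1+\lambda m^{-1/3})/m$, the largest component of $G(m,p)$ has order $m^{2/3}$ vertices. It is already a subtree of the MST, and its typical internal distances are of order $m^{1/3}$. Since later edges never shorten tree paths inside it, the $\Theta(m^{4/3})$ pairs within it contribute $\Theta(m^{4/3}\cdot m^{1/3})=\Theta(m^{5/3})$ to $\Wien$. That is not enough. The fuller argument is that a positive fraction of pairs are at distance $\gtrsim m^{1/3}$, which is precisely the scaling-limit statement, so we simply cite it. This step is the main obstacle, and we rely on the cited results rather than reprove them.

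\textbf{Step 4: the regime $\delta=x\,m^{-1/3}$.} Here the second-order bound is useless, so we instead use \cref{lem:sandwich} directly:
\[
N_1\le L_\pi\le N_{\ge1}.
\]
Conditional on the tree, a pair at distance $d$ has $\Prob(k\ge1)=1-(1-\delta)^d$ and $\Prob(k=1)=d\delta(1-\delta)^{d-1}$.

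\emph{Away from 1.} The fraction of lost pairs is at most $\E[1-(1-\delta)^{d(U,V)}]$. Take the event $d(U,V)\le C m^{1/3}$, which has probability bounded below by tightness of the rescaled distance. On that event $1-(1-\delta)^d\le 1-e^{-2xC}$, so the fraction is bounded away from 1.

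\emph{Away from 0.} The fraction is at least $\E[d\delta(1-\delta)^{d-1}]$. Restrict to the event $c\,m^{1/3}\le d\le C m^{1/3}$, which has probability bounded below by the nondegenerate limit. On it $d\delta(1-\delta)^{d-1}\ge c\,x\,e^{-2Cx}>0$, so the fraction is bounded away from 0.

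Both limits use only the tightness and nondegeneracy of $m^{-1/3}d(U,V)$, so the whole proposition reduces to that single input from the MST scaling theory.
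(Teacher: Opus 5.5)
Your proposal is correct and follows essentially the paper's proof. It identifies $\Tpi=\MST_m$ via uniform weights, takes the upper bound from $\Wien\le\binom m2\operatorname{diam}$ and the diameter estimate, and takes the lower bound and the window from the $m^{-1/3}$ scaling limit through \cref{lem:sandwich}. Your event restrictions ($c\,m^{1/3}\le d\le C\,m^{1/3}$) replace the paper's bounded-convergence limits $\E[xDe^{-xD}]$ and $1-\E\,e^{-xD}$. Note that positivity of the limit, $\Prob(D>0)>0$, does not follow from the convergence itself; the paper adds it separately from the non-atomicity of the limiting measure.

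One small slip: $d\le m-1$ gives $\Wien_2\le(m-1)^2\binom m2$, not $(m-1)\binom m2$. This is harmless for the fixed-$m$ $O(\delta^2)$ statement.
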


Independent uniform edge weights generate a uniform pair order; known tree-distance estimates \citep{addarioberry2009critical,addarioberry2017scaling} then yield \cref{prop:random} through \cref{lem:sandwich}.

\begin{theorem}[Most-similar-first order]\label{thm:similarity}
Let items carry i.i.d.\ embeddings on $[0,1]^d$ with a density bounded below by $f_0>0$, let $\mu_f$ be the mean distance between two items, and let pairs be asked in increasing order of distance. Then $\Tpi$ is the Euclidean minimum spanning tree, and for every $m\ge2$ with $f_0m\ge2\log m$ (natural logarithm),
\[
\frac{\E\,\Wien(\EMST_m)}{\binom m2}\ \ge\ \frac{\mu_f-\sqrt d/m}{4\sqrt d}\Big(\frac{f_0\,m}{2\log m}\Big)^{1/d}.
\]
For $d=1$ the amplification is exactly $(m+1)/3$.
\end{theorem}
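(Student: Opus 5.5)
The argument has three parts: identify the Kruskal tree, reduce the lower bound to counting edge cuts, and compute the one-dimensional case exactly.

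\textbf{Identifying the tree.} Distances are almost surely distinct. Kruskal's scan in increasing weight order therefore returns the unique minimum spanning tree of the complete graph weighted by Euclidean distance, so $\Tpi=\EMST_m$. By \cref{thm:wiener}, the first-order amplification is $\Wien(\EMST_m)/\binom m2$.

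\textbf{Case $d=1$.} Sorted points on a line have the path $x_{(1)}-x_{(2)}-\dots-x_{(m)}$ as their EMST, since each consecutive gap is the minimum edge across the cut it spans. This holds deterministically. The path has $W=\sum_{j=1}^{m-1}j(m-j)=\binom{m+1}{3}$. Dividing by $\binom m2$ gives $(m+1)/3$, independent of the density.

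\textbf{General $d$.} I use the cut form $\Wien(T)=\sum_e n_e(m-n_e)$ together with the path form. For a pair $u,v$, the EMST path has total length at least $|u-v|$, so
\[
d_T(u,v)\ \ge\ |u-v|/\ell_{\max},
\]
where $\ell_{\max}$ is the longest EMST edge. Summing over pairs gives
\[
\Wien\ \ge\ \ell_{\max}^{-1}\sum_{\{u,v\}}|u-v|.
\]
The expectation of the pair sum is $\binom m2\mu_f$, but I need expectations of a product, which is not straightforward. I therefore bound $\ell_{\max}$ on a high-probability event $E$. Let $r=\big(2\log m/(f_0m)\big)^{1/d}$. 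I tile $[0,1]^d$ into cubes of side about $r$; each cube has probability mass at least $f_0r^d=2\log m/m$. A union bound shows that, with probability at least $1-m^{-1}$ roughly, every cube is occupied. The hypothesis $f_0m\ge2\log m$ keeps $r\le1$ so that the tiling is meaningful.

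On $E$, I claim every EMST edge has length at most $c\sqrt d\, r$, say $\ell_{\max}\le 2\sqrt d\,r$. Suppose an edge $uv$ were longer. The open ball (lune) around its midpoint contains a full cube, hence a point $w$ closer to both endpoints than they are to each other. This contradicts the cycle property of the MST.

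Then
\[
\E\Wien\ \ge\ \E\Big[\ind_E\sum|u-v|\Big]\big/(2\sqrt d\,r).
\]
To handle $\ind_E$ I drop the contribution of $E^c$: the pair distances are bounded by $\sqrt d$, so
\[
\E\Big[\ind_E\sum|u-v|\Big]\ \ge\ \binom m2\big(\mu_f-\sqrt d\,\Pr(E^c)\big)\ \ge\ \binom m2\big(\mu_f-\sqrt d/m\big).
\]
This yields the stated form with constant $1/(4\sqrt d)$ once the cube-side and lune constants are tuned. If the tuning leaves a factor of 2 short, I would choose the cube side $r/\sqrt d$ or adjust the union-bound exponent.

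\textbf{Main obstacle.} The hardest step is matching the exact constants: the factor $4\sqrt d$, the exponent with $2\log m$, and the $\sqrt d/m$ correction. This needs a careful geometric lemma that a long MST edge forces an empty region containing a whole grid cube, plus a clean union bound. Cubes near the boundary of $[0,1]^d$ need a check, since the lune may leave the cube. I would handle them by using cubes aligned to the grid, so that the midpoint region always contains a full grid cell inside $[0,1]^d$.
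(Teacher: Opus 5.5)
Your proposal is correct and follows the paper's proof essentially step for step. The shared steps are: Kruskal's tree is $\EMST_m$; $d_T(u,v)\ge|X_u-X_v|/\ell_{\max}$; a grid of $N=\lfloor x\rfloor$ cells per side, with $x=(f_0m/(2\log m))^{1/d}$, is fully occupied except with probability at most $1/m$; the complement event is discarded using $|X_u-X_v|\le\sqrt d$; and $d=1$ reduces to the path computation.

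The one difference is how you bound the longest edge. You use the empty-lune (cycle) property, while the paper uses the cut property together with connectivity of the threshold-$2\sqrt d/N$ geometric graph through face-adjacent cells. Both give $\ell_{\max}\le 2\sqrt d/N\le 4\sqrt d/x$, which yields the constant $1/(4\sqrt d)$ exactly, so no retuning is needed. Boundaries cause no trouble either: the cell containing the midpoint lies in $[0,1]^d$ and inside the open ball of radius $\ell/2$. Your fallback of shrinking cells to side $r/\sqrt d$ would in fact break the union bound.
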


Under these embedding assumptions, similarity-first has larger asymptotic amplification than random order in one and two dimensions. Nearby connections can create long tree paths, giving a single error more opportunities to separate a pair. In one dimension, the tree is a path, which maximizes the Wiener index \citep{dobrynin2001wiener}. Appendix~\ref{app:proofs-er} derives the tree estimates and quantitative growth-rate comparisons. With unequal error rates, semantic similarity-based querying \citep{wang2013leveraging} also depends on which pairs are answered wrongly (\cref{sec:numerics}).

\subsection{Several Clusters}
\label{sec:multi}

Now records belong to $K$ true clusters of sizes $s_1,\dots,s_K$, with $n=\sum_i s_i$ records (here $n$ counts records, not pairwise facts). All $\binom n2$ pairs are scanned in a uniformly random order. Answers are flipped independently: a within-cluster pair is wrongly answered ``different'' with probability $\eps_w$, and a cross-cluster pair ``same'' with probability $\eps_x$. Write $R_U$ for the expected number of wrong final pair decisions, counting both false merges and false splits.

To isolate a false merge, consider only two clusters of sizes $s_A,s_B$. Fix a cross pair and make its potential answer ``same'', with all other answers correct. Let $g(s_A,s_B)$ be its mean damage over random orders, counting a skipped wrong answer as zero, and write $g(s)=g(s,s)$. Accepting the error can also block later correct within-cluster merges. Thus $g$ includes both wrong cross-cluster merges and possible splits within these two clusters.

\begin{theorem}[Exact first-order decomposition]\label{thm:multicluster}
With $\eps=\max(\eps_w,\eps_x)$,
\begin{align*}
R_U={}&\eps_w\sum_{i=1}^K\E\,\Wien(\MST_{s_i})\\
&+\eps_x\sum_{i<j}s_is_j\,g(s_i,s_j)+O(\eps^2).
\end{align*}
In particular the damage of a false merge does not depend on $K$ or on the other clusters.
\end{theorem}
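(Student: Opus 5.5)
Theorem~\ref{thm:multicluster} follows from the reduction lemma applied to the union-find pipeline, which by \cref{lem:ufgreedy} is the greedy decoder on the partition code. First I will extend \cref{lem:reduction} to two flip rates. With $Z$ the error vector and $\eps=\max(\eps_w,\eps_x)$, split $\E\,\dH$ by the number of wrong answers. The no-error term is $0$. Each single-error term has probability $\eps_\bullet(1-\eps_\bullet')^{\dots}=\eps_\bullet+O(\eps^2)$. Events with at least two errors have probability $O(\eps^2)$, and their loss is at most $\binom n2$. Hence
\[
R_U=\eps_w\sum_{\text{within }a}D(a)+\eps_x\sum_{\text{cross }a}D(a)+O(\eps^2),
\]
where $D(a)$ is the mean loss over uniform orders when only answer $a$ is wrong. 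It therefore suffices to identify the single-error damages within and across clusters.

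\textbf{Within-cluster error.} Take $a$ inside cluster $i$, with all other answers correct. The key step is a locality statement. Because every cross-cluster answer is a correct ``different'', no block ever contains records from two clusters. Each cross pair either is skipped or records a $\neq$ constraint between blocks lying in different clusters. Such a constraint can never force a skip of a within-cluster pair, since within-cluster blocks are only constrained against blocks of other clusters. So the pipeline restricted to cluster $i$ behaves exactly like the one-cluster pipeline run on the induced order of the pairs of cluster $i$. I will verify this by induction over the scan, tracking the block partition restricted to each cluster. The induced order is uniform on the pairs of cluster $i$. By \cref{thm:wiener}, a single wrong answer on a tree edge $e$ costs $n_e(s_i-n_e)$, and on a non-tree edge it costs $0$. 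Summing over $a$ gives $\Wien(\Tpi)$, and averaging gives $\E\,\Wien(\MST_{s_i})$ by \cref{prop:random}. No pair in other clusters or across clusters is affected.

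\textbf{Cross-cluster error.} Take $a=\{u,v\}$ with $u\in A$, $v\in B$. This is the main obstacle: the wrong ``same'' may be accepted and then interact with the other clusters. I plan to show that the restriction of the run to records in $A\cup B$ coincides with the two-cluster run on the induced order. The argument is an induction on the scan with this invariant: every block is either contained in a single true cluster other than $A,B$, or contained in $A\cup B$, and blocks inside $A\cup B$ evolve exactly as in the two-cluster run. A pair with at least one endpoint in a third cluster $C$ has a correct answer. If it is within $C$, it only merges $C$-blocks. If it is a cross pair, it is answered ``different'' and records a constraint. Such a constraint only skips later pairs whose blocks are already constrained, and those pairs are true ``different'' anyway. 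The subtle point is the inherited constraint after merging. A constraint between a $C$-block and an $A$-block is carried to the merged $A\cup B$ block, and it can skip a $C$--$B$ pair. That skipped pair would have been answered ``different'' correctly, so the outcome coincides. I need to check that such skips never change which pairs inside $A\cup B$ are queried. They do not, since skip decisions for pairs inside $A\cup B$ depend only on constraints between blocks inside $A\cup B$, and those are generated only by pairs inside $A\cup B$.

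The order restricted to the pairs of $A\cup B$ is uniform, so the mean damage over random orders is $g(s_i,s_j)$. Pairs outside $A\cup B$ are all correct, so no other errors arise. Summing over the $s_is_j$ cross pairs gives the second term and proves that the damage of a false merge is independent of $K$.
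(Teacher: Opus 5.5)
Your proposal is correct and matches the paper's proof step for step: the per-coordinate-rate form of \cref{lem:reduction}; the reduction of a within-cluster flip to the one-cluster pipeline on the (uniform) restricted order, followed by \cref{thm:wiener}; and, for a cross flip, the invariant that every block lies in $A\cup B$ or in a single other cluster, with constraints between blocks inside $A\cup B$ created only by pairs inside $A\cup B$, so the restricted run is the two-cluster run with mean damage $g(s_i,s_j)$. One wording slip to fix: in the within case the wrong answer does create a constraint between two blocks of cluster $i$, so the correct justification is that constraints between blocks of \emph{different} clusters can never decide a within-cluster pair, which is what your argument actually uses.
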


The first term sums false-split risks within the true clusters, applying the Wiener law to each. The second sums false-merge risks: clusters $i,j$ have $s_is_j$ cross pairs, each contributing mean damage $g(s_i,s_j)$. Correct cross-cluster answers confine a single error to its incident clusters. Simultaneous errors can involve more clusters, so this locality underlies the first-order decomposition.

The following bounds on $g(s)$ count skipped errors as zero.

\begin{proposition}[Size of one false merge]\label{prop:gbounds}
Let $q$ be the event that the false pair is asked. For $s\ge2$, $\frac1{2s-1}\le\Prob(q)\le\frac{12.4}{s+1}+\frac12\big(\frac9{16}\big)^{s-1}$ and $\frac{s-1}{2s-1}\le g(s)\le13\,s$. Under standard local-limit facts for Erdős–Rényi graphs, $s\,\Prob(q)\to C_q\approx1.4286$ (\cref{prop:cq}).
\end{proposition}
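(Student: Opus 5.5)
We set up the dynamics for Proposition \ref{prop:gbounds} as follows. There are two clusters $A$ and $B$, each of size $s$. The false cross pair is $f=\{a,b\}$ with $a\in A$ and $b\in B$, and every other answer is correct. Scanning in a uniformly random order, every within-cluster pair is answered ``same'' and every cross pair other than $f$ is answered ``different''. Before $f$ is reached, the blocks are therefore exactly the components of the random graphs on $A$ and on $B$ formed by the within-cluster pairs already scanned. The accumulated $\neq$ constraints sit between any $A$-block and any $B$-block joined by an earlier scanned cross pair.

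Hence $f$ is asked exactly when, at its arrival time, no earlier cross pair joins the component $C_A(a)$ of $a$ to the component $C_B(b)$ of $b$. We realize the order by i.i.d.\ uniform arrival times and let $t$ be the time of $f$. Conditional on $t$, the event $q$ says that none of the $|C_A(a)|\,|C_B(b)|-1$ other cross pairs between these components arrived before $t$. Here $C_A(a)$ and $C_B(b)$ are components of independent Erdős–Rényi graphs $G(s,t)$.

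\textbf{Lower bounds.} We split the argument into three steps.
\begin{itemize}
\item Suppose $f$ comes first among the $2s-1$ pairs incident to $a$ or $b$ inside $A\cup B$ plus itself, that is, the $s-1$ pairs $\{a,a'\}$, the $s-1$ pairs $\{b,b'\}$, and $f$. Then $a$ and $b$ are singletons at time $t$, no constraint separates them, and $f$ is asked. This gives $\Prob(q)\ge 1/(2s-1)$.
\item On this event the merge is accepted immediately. After that, all later within-cluster pairs are skipped or merged toward a single block, because the only $\neq$ constraints are cross constraints.
\item The final output is then a partition that wrongly merges at least $a$'s final block with $b$'s. We argue that the whole of $A\cup B$ ends in one block unless the pipeline later accepts splits. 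Even pessimistically, the damage includes $\{a,b\}$ and at least the $s-1$ pairs $\{a,b'\}$ or their mirror images. We will check that this gives damage $\ge s-1$ on the event. Multiplying by $1/(2s-1)$ yields $g(s)\ge (s-1)/(2s-1)$.
\end{itemize}
If the block structure makes the $s-1$ count delicate, we fall back on counting only the pairs of $b$'s final block with $a$.

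\textbf{Upper bounds.} Damage is at most the number of pairs in $A\cup B$, namely $\binom{2s}{2}\le 2s^2$. So $g(s)\le 2s^2\,\Prob(q)$, and the $13s$ bound follows from $\Prob(q)\le 12.4/(s+1)+\tfrac12(9/16)^{s-1}$ after absorbing the exponential term for each $s$; small $s$ are checked directly.

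For $\Prob(q)$ we write $p=t$ and $X=|C_A(a)|\,|C_B(b)|$. Then
\[
\Prob(q\mid t,X)=(1-t)^{X-1}\le e^{-t(X-1)}.
\]
We split on $t$:
\begin{itemize}
\item For $t\le c/s$, bound $\Prob(q)$ by $\Prob(t\le c/s)=c/s$.
\item For $t>c/s$ (supercritical), one component typically has linear size. We use the standard exploration-process bound that $|C_A(a)|\ge \theta s$ except with probability decaying in $t$, so that $X\gtrsim\theta s$ and $e^{-tX}$ is exponentially small.
\item On the complementary event where both components are small, we still have $X\ge1$. We bound $\E[(1-t)^{X-1}]$ by integrating the small-component tail against $t$.
\end{itemize}
Optimizing $c$ should produce the constant $12.4/(s+1)$. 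The residual $\tfrac12(9/16)^{s-1}$ term accounts for the event that both $a$ and $b$ remain isolated at a large time; this is where the $9/16$ arises.

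\textbf{Limit constant.} For $s\,\Prob(q)\to C_q$ we substitute $t=\lambda/s$. The components converge locally to independent Poisson($\lambda$) Galton–Watson total progenies $T_1,T_2$ (with possible infinite size), and $(1-\lambda/s)^{X-1}\to e^{-\lambda(T_1T_2-1)/s}$. This tends to $1$ when both are finite and to $0$ otherwise, which gives
\[
C_q=\int_0^\infty \Prob(T_1<\infty)^2\,d\lambda=\int_0^\infty(1-\rho(\lambda))^2\,d\lambda ,
\]
where $\rho(\lambda)$ is the survival probability. For this substitution we would need dominated convergence plus the local-limit facts that the statement allows us to assume.

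The main obstacle is the explicit nonasymptotic upper bound on $\Prob(q)$ with the constant $12.4$. It requires quantitative component-size tails uniform over $t$; we would try a crude coupling of the exploration with a $\mathrm{Bin}(s,t)$ branching process.
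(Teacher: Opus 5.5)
Your setup and the bound $\Prob(q)\ge1/(2s-1)$ are correct and match the paper, which uses the same event that $a$ and $b$ are isolated before $f$. Both damage estimates, however, fail as written.

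\textbf{Upper bound on $g$.} The crude bound $D\le\binom{2s}{2}$ loses a factor of about $2$. Combined with the bound on $\Prob(q)$ it gives roughly $24.8\,s$ for large $s$, and it already exceeds $13s$ for every $s\ge8$. What you need is the shape of the output on the query event (\cref{lem:anatomy}(2)). The block $X\ni a,b$ is the only block meeting both clusters. Two blocks of $A$ outside $X$ can never acquire a $\neq$ constraint, because that would require a ``different'' answer on a pair inside $A$. Hence $A\setminus X$ and $B\setminus X$ each end as a single block. With $\alpha=|X\cap A|$ and $\beta=|X\cap B|$ this gives $D=\alpha\beta+\alpha(s-\alpha)+\beta(s-\beta)\le s^2$ by concavity.

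\textbf{Lower bound on $g$.} Your count ``the $s-1$ pairs $\{a,b'\}$ or their mirror images'' is false. Suppose $f$ comes first and is followed by all cross pairs $\{a',b\}$ and $\{a,b'\}$. These are answered ``different'', the output is $\{\{a,b\},A\setminus\{a\},B\setminus\{b\}\}$, and no such pair is merged. The remaining damage consists of false splits, which your second bullet misdescribes: correct cross answers against $X$ are exactly what later force within-cluster pairs to be skipped as ``different''. Your fallback yields only $|X\cap B|\ge1$. A one-line fix: $a$ and $b$ end in the same block, so each $b'\in B\setminus\{b\}$ gives either the false merge $\{a,b'\}$ or the false split $\{b,b'\}$. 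Hence $D\ge s-1$ on $q$.

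\textbf{Upper bound on $\Prob(q)$.} This is the core of the proposition, and your sketch contains no mechanism that produces $12.4$ or $9/16$. Your account of $9/16$ is also wrong: both endpoints being isolated at $t>\tfrac12$ has probability $(1-t)^{2(s-1)}\le4^{-(s-1)}$. Your split at $t=c/s$ is in effect the paper's step $\int_0^\infty\min\{1,Me^{-(s+1)t}\}\,dt=(1+\ln M)/(s+1)$. That step needs a uniform bound $\Prob(q\mid t)\le Me^{-(s+1)t}$, which rests on three ingredients missing from your plan.
(i) Let $C_t$ be the component of a fixed vertex $u$ in $G(s,t)$. The inequality $|P||Q|-1\ge(|P|-1)+(|Q|-1)$, together with independence of the two clusters, gives $\Prob(q\mid t)\le\bigl(\E(1-t)^{|C_t|-1}\bigr)^2$. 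No case analysis on giant components is needed.
(ii) Add to $G(s,t)$ a vertex $w$ joined to each vertex independently with probability $t$. This gives the exact identity $\E(1-t)^{|C_t|}=h_{s+1}(t)$, where $h_{s+1}(t)$ is the probability that $u$ and $w$ are disconnected in $G(s+1,t)$.
(iii) A union bound over $k$-sets carrying a spanning tree (Cayley's formula) gives $h_N(p)\le e^5e^{-Np/2}$ (\cref{lem:disconnect}). This yields $M=4e^{10}$ on $t\le\tfrac12$ and the constant $11+\ln4<12.4$. For $t>\tfrac12$, the absence of all $u$--$w$ paths of length at most $2$ gives $h_{s+1}(t)\le(1-t)(1-t^2)^{s-1}$, hence $\Prob(q\mid t)\le(9/16)^{s-1}$.

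\textbf{Limit constant.} Your limit is incorrect. When exactly one component is the giant, of size about $\rho(\lambda)s$, and the other has finite size $K$, then $(1-\lambda/s)^{X-1}\to e^{-\lambda\rho K}\in(0,1)$, not $0$. The same slip is behind ``$e^{-tX}$ is exponentially small'' in your supercritical case. Your formula $\int_0^\infty(1-\rho(\lambda))^2\,d\lambda$ evaluates to $\pi^2/3-2\approx1.290$, not $1.4286$. The correct integrand (\cref{prop:cq}) adds $2\rho(1-\rho)\,\E e^{-\lambda\rho K}$ with $K\sim\mathrm{Borel}(\lambda(1-\rho))$, which uses the duality fact (E3). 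Dominated convergence then uses the finite-$s$ bound from (i)--(iii).
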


Exact computation gives $g(2)=29/18$ and $g(3)=313913/150150\approx2.091$ (\cref{tab:gexact}). For comparison, the pivot pipeline processes records one at a time. It compares each new record with the first record of each existing block, in block-creation order, until ``same'' lets it join that block; if all answers are ``different'', it creates a new block \citep{vesdapunt2014crowdsourcing}. Its first-order risk is given by \cref{prop:pivotmulti}.

\begin{proposition}[Pivot pipeline]\label{prop:pivotmulti}
For $K$ clusters of size $s$, with items arriving in a uniformly random order,
\[
R_{\rm piv}=\eps_wK(s-1)^2+\eps_x\binom K2s(2s-1)+O(\eps^2).
\]
\end{proposition}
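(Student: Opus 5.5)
The plan is to apply \cref{lem:reduction}. The pivot pipeline returns the truth when no answer is wrong, so its first-order constant is $\sum_a D(a)$. We therefore fix an arbitrary arrival order, put a single wrong answer at one pair, and compute the damage exactly. It will turn out that this damage is deterministic given the order, so no averaging over orders is needed. We split the pairs into within-cluster pairs, which carry rate $\eps_w$, and cross-cluster pairs, which carry rate $\eps_x$. A version of \cref{lem:reduction} with two rates then gives the stated formula with an $O(\eps^2)$ remainder, because events with two or more wrong answers have probability $O(\eps^2)$ for fixed $n$.

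\textbf{Noiseless structure.} First we check that, with all answers correct, the blocks are exactly the true clusters. Each cluster's pivot is its first-arriving record. A later record $v$ compares with the pivots of earlier clusters, gets ``different'', and stops when it reaches its own cluster's pivot. Hence the queried within-cluster pairs form, for each cluster, the star centred at its pivot.

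\textbf{Within-cluster errors.} A wrong answer on an unqueried within-cluster pair is skipped, so it costs nothing. Now take a wrong ``different'' on a star edge $\{p,v\}$, where $p$ is the pivot. Record $v$ then finds no other matching block, since all remaining comparisons are correct cross answers, and so it opens a new block. That block is created after $p$'s block. Every later record of the cluster therefore reaches $p$'s block first and joins it, leaving $v$ as a singleton. The damage is $s-1$ split pairs. Summing over the $s-1$ star edges of each of the $K$ clusters gives the term $\eps_wK(s-1)^2$.

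\textbf{Cross-cluster errors.} Fix clusters $A$ and $B$ with pivots $p_A,p_B$, and suppose without loss of generality that $p_B$ arrives before $p_A$. We claim that the queried $A$--$B$ pairs are exactly the $s$ pairs $\{p_B,u\}$ with $u\in A$.

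To see this, note first that every $u\in A$ arrives no earlier than $p_A$, and hence after $p_B$. When $u$ arrives, $B$'s block precedes $A$'s block in creation order, or $A$'s block does not yet exist if $u=p_A$. So $u$ is compared with $p_B$. Conversely, a record of $B$ arriving after $p_A$ meets $p_B$ before $p_A$ and stops there. Thus no other cross pairs are asked, and wrong answers on them cost zero.

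Now suppose the single wrong ``same'' is on $\{p_B,u\}$.
\begin{itemize}
\item If $u=p_A$, then $u$ joins $B$'s block. Later records of $A$ meet $p_B$, receive a correct ``different'', and form a new block without $u$.
\item If $u\neq p_A$, then $u$ still joins $B$'s block and is separated from $A$'s block.
\end{itemize}
In both cases $u$ is wrongly merged with the $s$ records of $B$ and wrongly split from the other $s-1$ records of $A$. All other decisions are unaffected, since the other pivots are untouched. The damage is therefore $2s-1$, and summing over the $s$ queried pairs gives $s(2s-1)$ per cluster pair. Summing over the $\binom K2$ cluster pairs yields the cross term.

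\textbf{Main obstacle.} I expect the delicate step to be the block-order bookkeeping. One has to check that the spurious block created in each error case never intercepts later records, and that the ``first-arriving pivot'' asymmetry gives exactly $s$ queried cross pairs whichever pivot arrives first. It also remains to verify that merged-block pivots behave as described when $u=p_A$. In that case $B$'s block keeps $p_B$ as its first record, so $A$'s later records compare with $p_B$ and are correctly rejected.
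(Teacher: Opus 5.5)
Your proposal is correct and follows essentially the same route as the paper: apply \cref{lem:reduction} with per-coordinate rates, and read off the noiseless query set. The within pairs queried are the $K(s-1)$ star edges to each cluster's founder, each costing $s-1$ when wrong; the cross pairs queried number $\binom K2 s$, each costing $2s-1$ when wrong. The only difference is bookkeeping: you count cross comparisons per cluster pair by which pivot arrives first, and so see that the count is the same for every arrival order, whereas the paper counts per item through the creation rank $\rho$ of its cluster and states the count as the expectation $n(K-1)/2$.
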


The coefficient of $\eps_x$ in \cref{prop:pivotmulti}, divided by the $\binom K2s^2$ cross pairs, gives mean damage $(2s-1)/s$ per potential wrong cross-pair answer. The random-order mean is $g(s)$, giving a loss ratio $g(s)\,s/(2s-1)$.

\begin{conjecture}[Contamination law]\label{conj:contamination}
$g(s)=\Theta(s^{2/3})$. Consequently, for $K\ge2$ clusters of size $s$, $R_U=\Theta(\eps_wKs^{7/3})+\Theta(\eps_xK^2s^{8/3})$ at first order.
\end{conjecture}

Simulation supports $s^{2/3}$ growth but does not determine the asymptotic constant (\cref{fig:main}, lower panel C; \cref{tab:gmc}). A proof would need a scaling limit describing the constrained merging process in the critical window of the within-cluster random graphs.

In entity resolution, the noiseless query tree determines one-cluster first-order risk; with independent errors at a common rate, pivot stars minimize this risk among greedy query orders. With several entities, a single error remains confined to the one or two entities involved, yielding separate first-order contributions from within-entity and cross-entity errors.

\section{EXPERIMENTAL VALIDATION}
\label{sec:numerics}

We check the formulas in simulation, compare query orders at higher error rates allowing multiple wrong answers, and evaluate final losses using LLM answers on Cora. Loss counts wrong final decisions, including inferred ones. In record matching, false splits separate records of the same entity, while false merges group records of different entities. Amplification is the final wrong-pair count divided by the raw baseline's wrong-answer count. All single-error averages count skipped errors as zero.

\Cref{fig:main} connects the propagation examples with numerical checks of the preceding laws. (A) contrasts 2-CNF conservation with shared-premise Horn amplification. In (B), splitting eight records into $1+7$ or $4+4$ produces 7 or 16 wrong pairs. Its lower panel compares first-order amplification, equal to mean noiseless-tree distance under independent, equal-rate errors. In (C), one false merge can cause both false merges and false splits within the two involved entities; the lower panel compares $g(s)$ with pivot's $(2s-1)/s$. The $2.38m^{1/3}$ and $0.9s^{2/3}$ curves are guides; the finite-size slope $\approx0.43$ in (B) does not determine the asymptotic exponent, and (C) does not establish its asymptotic constant.

\begin{figure*}[!t]
\centering
\includegraphics[width=.94\textwidth]{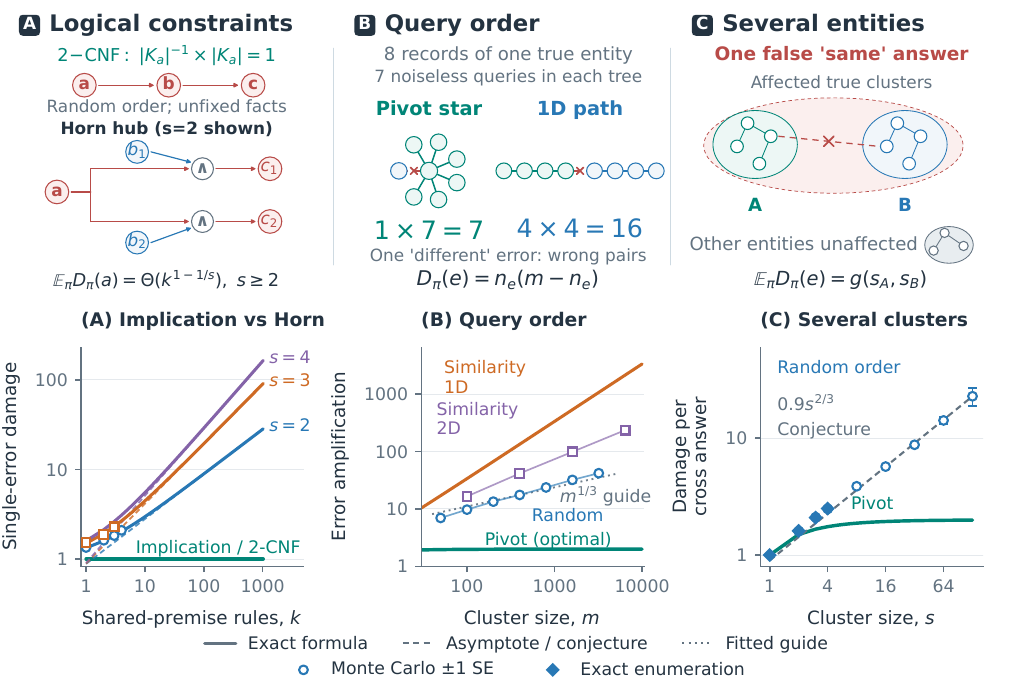}
\caption{Single-answer error propagation (top) and numerical checks (bottom): (A) logical constraints, (B) within-entity query order, (C) cross-entity false merges. Horn example: $k=s=2$, $a,c_i$ false, $b_i$ true. Shading marks affected true clusters, not the final partition. Thick solid: exact formulas; thin connectors: visual guides; dashed: asymptotes (A), conjecture (C); dotted: fitted guide (B). Open markers: simulations $\pm1$ standard error; diamonds: exact enumeration.}
\label{fig:main}
\end{figure*}

\paragraph{Checking the analytical predictions.} Enumeration and direct decoding test conservation, Horn damage and false-merge locality independently of the formulas. The conservation estimates are within 2.66 Monte Carlo standard errors (MCSE) of their predictions; the other tabulated estimates are within 1.62 MCSE. At $\eps=10^{-3}$, dividing the full decoder's mean false-merge (FP) and false-split (FN) counts by $\eps$ also gives estimates close to the first-order coefficients. Appendix~\ref{app:numerics} gives the full comparison (\cref{tab:checks}), simulation settings and statistical details.

\paragraph{Higher error rates.} We also test the full decoder where multiple errors may interact. One entity has $m=100$ records, so every pair's correct answer is ``same''. Each answer is independently flipped to ``different'' at rate $\delta=1\%$--$20\%$, with 200 runs per configuration. \Cref{tab:highnoise} reports the fraction of pairs wrongly separated: at a 10\% answer-error rate, this fraction averages 17.4\% for pivot and 45.8\% for random order. Across the tested rates, random-order mean loss is $1.8$--$4.3$ times the pivot value; at $m=200$ the ratios are $1.9$--$5.4$ (\cref{tab:highnoise-scale}). Pivot's advantage therefore persists in these settings. At higher rates, the first-order sum overestimates random-order loss: several errors may affect the same pairs, which are counted only once.

\begin{table}[t]
\centering
\small
\caption{Fraction of wrongly separated pairs on one cluster of $m=100$ records when each answer is flipped to ``different'' with probability $\delta$. Means $\pm1$ Monte Carlo standard error over 200 runs; Random/Pivot is the ratio of the mean losses.}
\label{tab:highnoise}
\setlength{\tabcolsep}{4pt}
\begin{tabular}{@{}llll@{}}
\toprule
$\delta$ & Pivot & Random & Random/Pivot\\
\midrule
0.01 & $0.019\pm0.002$ & $0.082\pm0.008$ & 4.2\\
0.02 & $0.039\pm0.002$ & $0.170\pm0.012$ & 4.3\\
0.05 & $0.094\pm0.003$ & $0.320\pm0.010$ & 3.4\\
0.10 & $0.174\pm0.004$ & $0.458\pm0.008$ & 2.6\\
0.20 & $0.335\pm0.004$ & $0.604\pm0.006$ & 1.8\\
\bottomrule
\end{tabular}
\end{table}

\begin{table}[!t]
\centering
\small
\caption{Amplification on Qwen2.5-1.5B answers for Cora: final wrong pairs per wrong answer. Wrong: answer-error rate. Pivot/random use 200 orders in rows~1 and~4; rows~2--3 use 30 matched orders before and after uniformly redistributing errors within each entity, preserving its error count.}
\label{tab:realllm}
\setlength{\tabcolsep}{3pt}
\begin{tabular}{@{}lcccc@{}}
\toprule
Setting & Wrong & Pivot & Random & Sim.-first\\
\midrule
One entity & 6.6\% & 1.23 & 2.31 & 0.85\\
One entity, 30 orders & 6.6\% & 1.37 & 2.19 & 0.85\\
Shuffled, 30 orders & 6.6\% & 1.73 & 3.55 & 3.48\\
Multi-entity blocks & 6.3\% & 1.14 & 1.46 & 1.20\\
\bottomrule
\end{tabular}
\end{table}

\paragraph{Real LLM answers.} In Cora \citep{hpicora}, each entity comprises citation records of one paper. We use 62,423 within-entity pairs from 23 entities with at least 20 records. Qwen2.5-1.5B (base) \citep{qwen25model} answers 6.6\% wrongly. All orders reuse cached answers; similarity-first follows decreasing textual similarity.

Pivot loses less than random order, while similarity-first has the lowest point estimate (\cref{tab:realllm}). Pooled answer-error rates on noiseless query-tree edges are 2.2\% for similarity-first, 8.4\% for pivot and 8.6\% for random order, linking loss to both tree shape and edge accuracy. Pivot beats random order at all tested thresholds (approximately 1\%--20\% answer-error rates; \cref{tab:sweep}).

Redistributing each entity's errors uniformly, with error counts fixed and query orders matched, raises all three losses and removes similarity-first's advantage over pivot (\cref{tab:realllm}, rows~2--3). Thus error locations matter.

In six blocks of three or four similar entities, both false merges and false splits can occur; pivot has the lowest pooled loss. Appendix~\ref{app:realpilot} gives the prompting protocol, threshold sweep and detailed results for these settings.

\section{DISCUSSION}
\label{sec:discussion}

We analyzed error propagation in greedy, constraint-based decoding. Under uniformly random query orders, 2-CNF constraints do not increase first-order risk over raw answers, while shared-premise Horn rules can amplify errors. In record matching, one-entity first-order risk is determined by pairwise distances on the noiseless query tree. Under independent, equal-rate errors, pivot minimizes this risk among greedy orders. With one wrong answer, false merges affect only two entities. Simulations support the formulas and show pivot's advantage over random order at the tested higher error rates. The Cora results show that tree shape and error locations both matter: uniformly redistributing errors removes similarity-first's advantage over pivot. Future work could study interacting errors, especially false merges, across more models and datasets. The general Horn false-positive damage bound and false-merge growth law remain conjectures.

\clearpage 
\subsection*{AI Use Statement}
The research ideas, mathematical claims, experimental design, and interpretation of the results were developed by the authors. During the preparation of this paper, ChatGPT and Codex assisted with editing the prose, restructuring and improving portions of the code, and organizing data and relevant literature. We also consulted these tools to identify proof arguments that could benefit from clearer explanations. All AI-assisted work was independently reviewed and verified by the authors, including the proofs, code, data, references, and changes to the manuscript. The authors accept full responsibility for the final paper.

\bibliographystyle{plainnat}
\bibliography{refs}

\clearpage
\appendix
\thispagestyle{empty}
\onecolumn
\aistatstitle{How Much Does One Wrong Answer Cost?\\ Supplementary Materials}


\section{Proofs}\label{app:proofs}
\subsection{Setting and reduction}\label{app:proofs-setup}

We first express small-noise risk in terms of single-error damage, then prove the results for logical constraints, entity resolution and sorting. Numerical checks are in \cref{app:numerics}.

A \emph{code} is a finite nonempty set $\cC\subseteq\{0,1\}^n$.
Its elements are \emph{codewords}, and its $n$ positions are \emph{coordinates} (the binary queries).
The truth is a codeword $x^\star\in\cC$.
The answers are $y=x^\star\oplus Z$, where $Z_1,\dots,Z_n$ are independent and $Z_i\sim\Bern(\eps)$ with $\eps\in[0,\tfrac12]$.
We write $e_a$ for the $a$-th unit vector of $\{0,1\}^n$ and $\dH$ for the Hamming distance.
A decoder maps the answers $y$ to an estimate, possibly using a random seed $\pi$ independent of $Z$ to choose the query order or break ties; its output is $\hat x$.
Its risk is $R(\eps)=\E\,\dH(\hat x,x^\star)$, where the expectation is over $Z$ and $\pi$.
We use three decoders.
\begin{itemize}
\item The \emph{raw} decoder outputs $\hat x=y$. Hence $\Rraw(\eps)=\eps n$ exactly.
\item The \emph{ML} decoder outputs a codeword at minimum Hamming distance from $y$, with ties broken uniformly at random. For $\eps<\tfrac12$ this is maximum-likelihood decoding.
\item The \emph{greedy} decoder $G_\pi$ processes the coordinates in an order $\pi$. It starts from the empty set $L_0=\emptyset$ of accepted answers. At step $t$, with $i=\pi(t)$, it skips $i$ if all codewords consistent with $L_{t-1}$ have the same value at $i$ (the query is \emph{implied}); otherwise it accepts the answer and sets $L_t=L_{t-1}\cup\{(i,y_i)\}$. It outputs the unique codeword consistent with $L_n$. \Cref{lem:welldef} in \cref{app:proofs-horn} shows that this codeword exists and is unique. Unless stated otherwise, $\pi$ is uniform over the $n!$ orders and independent of $Z$.
\end{itemize}
For a codeword $x$ and a coordinate $a$ we define
\[
B_a(x)=\{b\neq a:\ x\oplus e_a\notin\cC,\ x\oplus e_a\oplus e_b\in\cC\},\qquad
m_a(x)=|B_a(x)|.
\]
If $m_a(x)\ge1$, an error at $a$ leaves the observed word at distance~1 from $x$ and from $m_a(x)$ alternative codewords. Each alternative differs from the original codeword in two positions and is therefore at distance~2 from $x$.
For a decoder with seed $\pi$, the \emph{single-error damage} at $a$ is the expected loss when the only wrong answer is at $a$:
\[
D(a)=\E_\pi\,\dH\big(\hat x(x^\star\oplus e_a,\pi),\,x^\star\big).
\]

Probabilistic unclean databases \citep{desa2019pud} formalize a noisy-channel model of database cleaning and seek the most likely intended database.
Their Theorems~7 and~8 relate this optimum to weighted minimum repairs under sufficiently high constraint weights and specified prior/channel assumptions; the update-repair result also requires attribute independence.
Union-bound analyses of ML decoding are surveyed by \citet{sason2006performance}; part~(i) of \cref{lem:reduction} gives an elementary first-order expansion for our finite code setting.
Part~(ii) applies the same reduction to greedy decoding.

\begin{proof}[Proof of \cref{lem:reduction}, remainder bound]
We show that every decoder that is exact on the noiseless input, that is, $\hat x(x^\star,\pi)=x^\star$ for every seed, satisfies for all $\eps\in[0,\tfrac12]$
\begin{equation}\label{eq:remainder}
\Big|\,R(\eps)-\eps\sum_{a=1}^n D(a)\,\Big|\ \le\ \eps^2\,n(n-1)\Big(\max_a D(a)+\frac n2\Big)\ \le\ \tfrac32\,\eps^2 n^2(n-1).
\end{equation}
Condition on the error set $S=\{i:\ Z_i=1\}$.
If $S=\emptyset$, then $y=x^\star$ and the loss is~0.
The event $S=\{a\}$ has probability $\eps(1-\eps)^{n-1}$.
On this event the conditional expected loss is $D(a)$, because $\pi$ is independent of $Z$.
If $|S|\ge2$, the loss lies in $[0,n]$, and a union bound over pairs of coordinates gives $\Prob(|S|\ge2)\le\binom n2\eps^2$.
Therefore
\[
R(\eps)-\eps\sum_a D(a)\ =\ -\eps\,\big[1-(1-\eps)^{n-1}\big]\sum_a D(a)\ +\ r,\qquad 0\le r\le\tfrac12\,n^2(n-1)\,\eps^2 .
\]
Bernoulli's inequality gives $0\le 1-(1-\eps)^{n-1}\le(n-1)\eps$.
So the first term lies in $[-\eps^2n(n-1)\max_aD(a),\,0]$, and the first inequality in \eqref{eq:remainder} follows.
The second inequality follows from $D(a)\le n$.
For ML, part~(i) below gives $D(a)<2$, which yields the tighter bound $\eps^2n(n-1)(2+n/2)$.

The same argument applies when coordinate $i$ has its own error rate $\eps_i\le\eps$.
Then $\Prob(S=\{a\})=\eps_a\prod_{b\neq a}(1-\eps_b)$, $0\le 1-\prod_{b\neq a}(1-\eps_b)\le(n-1)\eps$, and $\Prob(|S|\ge2)\le\binom n2\eps^2$.
Hence \eqref{eq:remainder} holds with $\eps\sum_aD(a)$ replaced by $\sum_a\eps_aD(a)$.
\end{proof}

\begin{proof}[Proof of \cref{lem:reduction}(i)]
We show that the ML decoder has $D_{\mathrm{ML}}(a)=1$ if $x^\star\oplus e_a\in\cC$ and $D_{\mathrm{ML}}(a)=2m_a(x^\star)/(1+m_a(x^\star))$ otherwise, so that
\[
\RML(\eps)=\eps\Big[N_1(x^\star)+\sum_{a:\,x^\star\oplus e_a\notin\cC}\frac{2m_a(x^\star)}{1+m_a(x^\star)}\Big]+O(\eps^2).
\]
Here $N_1(x^\star)=|\{a:\ x^\star\oplus e_a\in\cC\}|$ counts the codewords at Hamming distance one from the truth.
The ML decoder is exact on the noiseless input, because $x^\star$ is the only codeword at distance~0 from $y=x^\star$.
Let $y=x^\star\oplus e_a$.
If $y\in\cC$, the decoder returns $y$, which is at distance~0, and the loss is $\dH(y,x^\star)=1$.
Otherwise $\dH(y,\cC)=1$.
The codewords at distance~1 from $y$ are $x^\star=y\oplus e_a$ and the codewords of the form $y\oplus e_b=x^\star\oplus e_a\oplus e_b$ with $b\neq a$.
The latter are exactly the $m_a(x^\star)$ words indexed by $B_a(x^\star)$, and each of them is at distance~2 from $x^\star$.
Uniform tie-breaking among these $1+m_a(x^\star)$ words gives the expected loss $2m_a(x^\star)/(1+m_a(x^\star))$.
The displayed form of $\RML$ follows from \eqref{eq:remainder}.
If $m_a(x^\star)\le1$ for every $a$, as on 2-CNF theories (see the proof of \cref{prop:ml2cnf}), the bracket equals $N_1(x^\star)+\#\{a:\ m_a(x^\star)=1\}$.
In words, each codeword at distance~1 counts~1, and each two-way tie counts $2\times\frac12$.
\end{proof}

\begin{proof}[Proof of \cref{lem:reduction}(ii)]
We show that $\Rgr(\eps)=\eps\sum_a\E_\pi D_\pi(a)+O(\eps^2)$, with the remainder bound \eqref{eq:remainder}, where $D_\pi(a)=\dH(G_\pi(x^\star\oplus e_a),x^\star)$ is the damage of one wrong answer at $a$, whether this answer is accepted or not.
The output of $G_\pi$ is a function of $y$ and of the order $\pi$, and $\pi$ is independent of $Z$.
The decoder is exact on the noiseless input.
Indeed, if $y=x^\star$, every accepted pair $(i,y_i)$ agrees with $x^\star$; so $x^\star$ is consistent with $L_n$, and it is therefore the unique codeword consistent with $L_n$ (\cref{lem:welldef}).
So \eqref{eq:remainder} applies with $D(a)=\E_\pi D_\pi(a)$.
Let $\bar D=\frac1n\sum_a\E_\pi D_\pi(a)$ be the mean single-error damage over positions and orders.
Then $\Rgr(\eps)=\eps n\bar D+O(\eps^2)$.
Since $\Rraw(\eps)=\eps n$, the amplification of the greedy decoder over the raw decoder is $\bar D$.
\end{proof}

\subsection{Logical constraints}\label{app:proofs-horn}

Let $\Phi$ be a satisfiable propositional formula, that is, a conjunction of clauses, over a finite set $V$ of atoms with $|V|=n$.
The code is the set $\cC=\operatorname{Mod}(\Phi)\subseteq\{0,1\}^V$ of models (satisfying assignments) of $\Phi$.
Every nonempty code $\cC\subseteq\{0,1\}^n$ has this form: take for $\Phi$ the conjunction, over the words $w\notin\cC$, of the clause that is false exactly at $w$.
The truth is a model $\sigma^\star\in\cC$.
For an atom $x$, the \emph{true literal} $t_x$ is $x$ if $\sigma^\star(x)=1$ and $\neg x$ otherwise; the \emph{false literal} $\bar t_x$ is the other literal on $x$.
The answer on $x$ is $y_x=\sigma^\star(x)\oplus Z_x$, where the $Z_x$ are i.i.d.\ $\Bern(\eps)$.
The \emph{claim literal} of $x$ is $x$ if $y_x=1$ and $\neg x$ if $y_x=0$.
A set $L$ of literals records accepted answers; requiring all its members to hold is equivalent to taking their conjunction.
We write $L\models_\Phi m$ if $\Phi\wedge L\models m$, that is, if every model of $\Phi$ that satisfies all literals of $L$ also satisfies the literal $m$.

For an order $\pi$ of $V$, the greedy decoder $G_\pi$ (skip if implied, accept otherwise) starts with $L_0=\emptyset$.
At step $t$, with $x=\pi(t)$, it sets $L_t=L_{t-1}$ if $L_{t-1}\models_\Phi x$ or $L_{t-1}\models_\Phi\neg x$ (the atom is \emph{skipped}); otherwise it sets $L_t=L_{t-1}\cup\{\text{claim literal of }x\}$ (the claim is \emph{accepted}).
Its output is the unique model of $\Phi\wedge L_n$.
On the code $\operatorname{Mod}(\Phi)$ this is the greedy decoder of \cref{app:proofs-setup}, because ``all models of $\Phi$ that satisfy $L$ have the same value at $x$'' means ``$L\models_\Phi x$ or $L\models_\Phi\neg x$'' whenever $\Phi\wedge L$ is satisfiable.
The loss is the Hamming distance, and $\Rgr(\eps)=\E_{Z,\pi}\,\dH(G_\pi(y),\sigma^\star)$ with $\pi$ uniform and independent of $Z$.
The raw decoder outputs $y$ and has $\Rraw(\eps)=\eps n$ exactly.

\begin{lemma}[Well-definedness; equivalence with accept-if-consistent]\label{lem:welldef}
Let $\Phi$ be a satisfiable formula over $V$, let $\pi$ be an order of $V$, and let $y\in\{0,1\}^V$ be arbitrary answers.
\begin{enumerate}
\item[(i)] $\Phi\wedge L_t$ is satisfiable for every $t$.
\item[(ii)] $\Phi\wedge L_n$ has exactly one model, so the output $G_\pi(y)$ is well defined.
\item[(iii)] $G_\pi(y)$ equals the output of the accept-if-consistent decoder. This decoder processes the atoms in the same order, starts with $L'_0=\emptyset$, adds the claim literal $\ell$ of the current atom to $L'$ if and only if $\Phi\wedge L'\wedge\ell$ is satisfiable, and outputs the unique model of $\Phi\wedge L'_n$.
\end{enumerate}
\end{lemma}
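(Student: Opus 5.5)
The three parts follow by a single induction over the steps of the greedy decoder, using only the definition of $\models_\Phi$.

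For (i), induct on $t$. $\Phi\wedge L_0=\Phi$ is satisfiable by assumption. Suppose $\Phi\wedge L_{t-1}$ is satisfiable and let $x=\pi(t)$. If $x$ is skipped, then $L_t=L_{t-1}$ and there is nothing to show. If $x$ is not skipped, then neither $L_{t-1}\models_\Phi x$ nor $L_{t-1}\models_\Phi\neg x$ holds. Failure of $L_{t-1}\models_\Phi\neg x$ means some model of $\Phi\wedge L_{t-1}$ satisfies $x$; failure of $L_{t-1}\models_\Phi x$ means some model satisfies $\neg x$. So both literals on $x$ are consistent with $\Phi\wedge L_{t-1}$, and whichever claim literal is added, $\Phi\wedge L_t$ remains satisfiable. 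This is the only place where the skip rule matters: an answer is accepted only when both values are feasible.

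For (ii), existence follows from (i) at $t=n$. For uniqueness, take any atom $x$ and let $t$ be its step. Either $x$ was skipped, in which case $L_{t-1}$, and hence $L_n\supseteq L_{t-1}$, entails a fixed literal on $x$; or a literal on $x$ was added to $L_t\subseteq L_n$. In both cases every model of $\Phi\wedge L_n$ has the same value at $x$. Since this holds for every atom, any two models coincide.

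For (iii), I will show by induction that the two decoders have the same set of models, $\operatorname{Mod}(\Phi\wedge L_t)=\operatorname{Mod}(\Phi\wedge L'_t)$, for every $t$; equal outputs then follow from (ii). Assume equality at $t-1$, let $x=\pi(t)$, and let $\ell$ be its claim literal.

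If $x$ is not skipped by $G_\pi$, then by the argument in (i) $\Phi\wedge L'_{t-1}\wedge\ell$ is satisfiable. Both decoders therefore add $\ell$, and equality is preserved.

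If $x$ is skipped, say $L_{t-1}\models_\Phi m$ for a literal $m$ on $x$, there are two cases:
\begin{itemize}
\item If $\ell=m$, then accept-if-consistent adds $\ell$, but this changes nothing: every model of $\Phi\wedge L'_{t-1}$ already satisfies $\ell$, so the model set is unchanged.
\item If $\ell=\neg m$, then $\Phi\wedge L'_{t-1}\wedge\ell$ is unsatisfiable, so accept-if-consistent also rejects $\ell$.
\end{itemize}
In both cases the model sets stay equal.

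I expect no real obstacle. The only care needed is in (iii): the literal sets $L_t$ and $L'_t$ may genuinely differ, because accept-if-consistent also adds answers that are already entailed. The induction must therefore track model sets, that is, $\Phi\wedge L$ up to equivalence, rather than the literal sets themselves.
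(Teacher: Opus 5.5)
Your proposal is correct and follows essentially the same route as the paper: (i) holds because a claim is accepted only when both values of the atom remain satisfiable, (ii) holds because every atom ends up either entailed or fixed in $L_n$, and (iii) is an induction on equality of model sets through the same three cases (neither value entailed, the claim entailed, its negation entailed). Your remark that the induction must track $\operatorname{Mod}(\Phi\wedge L)$ rather than the literal sets themselves is exactly the point the paper's proof relies on as well.
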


\begin{proof}
(i) We use induction on $t$. The formula $\Phi\wedge L_0=\Phi$ is satisfiable by assumption. A literal $m$ is added at step $t$ only if $L_{t-1}\not\models_\Phi\neg m$, that is, only if $\Phi\wedge L_{t-1}\wedge m$ is satisfiable.

(ii) At its own step $t$, each atom $x$ either has its value entailed by $\Phi\wedge L_{t-1}$ or receives a literal in $L_t$.
Entailment is monotone in $L$, and $L_{t-1}\subseteq L_n$.
So $\Phi\wedge L_n$ entails a value for every atom.
By (i) it has a model, and this model is unique because every atom's value is entailed.

(iii) We show by induction on $t$ that $\Phi\wedge L_t$ and $\Phi\wedge L'_t$ have the same set of models.
This holds for $t=0$.
Suppose it holds for $t-1$; then both runs entail the same literals before step $t$.
Let $\ell$ be the claim literal of the atom processed at step $t$.
If neither value of the atom is entailed, then $\neg\ell$ is not entailed, so $\ell$ is consistent, and both decoders add $\ell$.
If $\ell$ is entailed, the greedy decoder skips the atom and the accept-if-consistent decoder adds $\ell$; adding an entailed literal does not change the set of models.
If $\neg\ell$ is entailed, $\ell$ is inconsistent, and both decoders leave their sets unchanged.
In all cases the two sets of models remain equal.
At $t=n$ this set is a single model by (ii), so the two outputs coincide.
\end{proof}

Since every nonempty code is $\operatorname{Mod}(\Phi)$ for some satisfiable $\Phi$, \cref{lem:welldef} also shows that the greedy decoder of \cref{app:proofs-setup} is well defined on every code.

The \emph{backbone} is $B=\{x\in V:\ \Phi\models t_x\}$, the set of atoms that take the same value in every model.
For $a\notin B$ we put
\[
K_a=\{x\in V:\ \Phi\wedge\bar t_a\models\bar t_x\},\qquad \sigma^{(a)}=\sigma^\star\oplus\ind_{K_a},
\]
and for $a\in B$ we put $K_a=\emptyset$.
When $a\notin B$, the set $K_a$ contains $a$, and we call it the \emph{flip set} of $a$.
In this case $\Phi\wedge\bar t_a$ is satisfiable, so it cannot entail both $t_x$ and $\bar t_x$, and $\sigma^{(a)}$ makes $\bar t_x$ true for every $x\in K_a$.
We say that $(\Phi,\sigma^\star)$ has the \emph{single-flip property at $a$}, written (P$_a$), if $\sigma^{(a)}$ is a model of $\Phi$. This means that flipping the forced atoms is enough to accommodate the wrong answer, without changing any other atom.
Under (P$_a$), $\sigma^{(a)}$ is the unique model of $\Phi\wedge\bar t_a$ that agrees with $\sigma^\star$ outside $K_a$, because every model of $\Phi\wedge\bar t_a$ satisfies $\bar t_x$ for all $x\in K_a$.
The single-error damage of the greedy decoder at $a$ is $D_\pi(a)=\dH(G_\pi(y),\sigma^\star)$ for the answers $y=\sigma^\star\oplus e_a$, in which only the answer on $a$ is wrong.

\begin{proof}[Proof of \cref{thm:singleflip}]
Suppose that exactly one answer is wrong, at the atom $a$. Then $D_\pi(a)=0$ for every order $\pi$ when $a\in B$.
If $a\notin B$ and (P$_a$) holds, then for every order $\pi$
\[
D_\pi(a)=|K_a|\cdot\ind\{a\text{ precedes every element of }K_a\setminus\{a\}\text{ in }\pi\},
\]
so that $\E_\pi D_\pi(a)=1$ for the uniform order, and indeed for every exchangeable random order.

\emph{Case $a\in B$.}
Here $\emptyset\models_\Phi t_a$, so the atom $a$ is always skipped and its wrong claim $\bar t_a$ is never accepted.
All other claims are true literals.
Hence all accepted literals are true, $\sigma^\star$ is a model of $\Phi\wedge L_n$, and by \cref{lem:welldef}(ii) the output is $\sigma^\star$.

\emph{Case $a\notin B$ under (P$_a$).}
Let $\tau$ be the step at which $a$ is processed.
For $t<\tau$, every accepted literal is the claim of an atom with a correct answer, so $L_t$ consists of true literals.

\emph{Step 1 (when $t_a$ is entailed).}
For every set $S$ of true literals, $S\models_\Phi t_a$ if and only if $S$ contains a literal on some atom of $K_a$.
Indeed, $S\models_\Phi t_a$ if and only if $\Phi\wedge S\wedge\bar t_a$ is unsatisfiable.
If $S$ contains no literal on an atom of $K_a$, then $\sigma^{(a)}$ satisfies $S$, because it agrees with $\sigma^\star$ outside $K_a$; it also satisfies $\bar t_a$, and it satisfies $\Phi$ by (P$_a$).
So $\Phi\wedge S\wedge\bar t_a$ is satisfiable, and $S\not\models_\Phi t_a$.
If $S$ contains $t_x$ with $x\in K_a$, then $\Phi\wedge\bar t_a\models\bar t_x$ contradicts $t_x$, so $\Phi\wedge S\wedge\bar t_a$ is unsatisfiable.

\emph{Step 2 (when the wrong claim is accepted).}
A literal enters $L$ only at the step of its own atom.
So if $L_{\tau-1}$ contains a literal on an atom of $K_a$, this atom lies in $K_a\setminus\{a\}$ and precedes $a$.
Conversely, suppose that some $x\in K_a\setminus\{a\}$ is processed at a step $t<\tau$.
Either $t_x$ was added, and then $L_{\tau-1}$ contains a literal on $K_a$; or $x$ was skipped.
In the second case $L_{t-1}$ entails $t_x$ or $\bar t_x$; it cannot entail $\bar t_x$, because $\sigma^\star$ is a model of $\Phi\wedge L_{t-1}$; so $L_{t-1}\models_\Phi t_x$.
Then $\Phi\wedge L_{t-1}\wedge\bar t_a$ entails both $t_x$ and $\bar t_x$, so it is unsatisfiable, that is, $L_{t-1}\models_\Phi t_a$.
By Step~1, $L_{t-1}\subseteq L_{\tau-1}$ contains a literal on $K_a$.
Together with Step~1 this shows: at step $\tau$, $L_{\tau-1}\models_\Phi t_a$ if and only if some element of $K_a\setminus\{a\}$ precedes $a$.
The wrong literal $\bar t_a$ is never entailed at step $\tau$, because $\sigma^\star$ is a model of $\Phi\wedge L_{\tau-1}$.
Hence $\bar t_a$ is accepted if and only if $a$ is the first atom of $K_a$ in $\pi$; otherwise $a$ is skipped.

\emph{Step 3 (rejected case).}
If $\bar t_a$ is not accepted, every literal ever added is true, so the output is $\sigma^\star$ and $D_\pi(a)=0$.

\emph{Step 4 (accepted case).}
Now $L_n=\{\bar t_a\}\cup S$, where $S$ consists of true literals of atoms other than $a$.
No atom of $K_a\setminus\{a\}$ contributes to $S$.
Indeed, such an atom $x$ comes after $a$; at its step $L$ contains $\bar t_a$, and $\{\bar t_a\}\models_\Phi\bar t_x$ by the definition of $K_a$; so $x$ is skipped.
Hence $\sigma^{(a)}$ satisfies $S$ (it agrees with $\sigma^\star$ outside $K_a$), satisfies $\bar t_a$, and satisfies $\Phi$ by (P$_a$).
By \cref{lem:welldef}(ii) the output is $\sigma^{(a)}$, and $D_\pi(a)=|K_a|$.

\emph{Expectation.}
For the uniform order, and for every exchangeable random order, each of the $|K_a|$ atoms of $K_a$ is equally likely to come first among them.
So $a$ comes first with probability $1/|K_a|$, and $\E_\pi D_\pi(a)=|K_a|\cdot|K_a|^{-1}=1$.
\end{proof}

\begin{proof}[Proof of \cref{prop:twocnf}]
We show that if every clause of $\Phi$ has at most two literals, then (P$_a$) holds for every truth $\sigma^\star\in\cC$ and every atom $a\notin B$.
Let $E$ be the set of literals entailed by $\Phi\wedge\bar t_a$.
Since $\Phi\wedge\bar t_a$ is satisfiable, $E$ contains no complementary pair.
By definition, for $x\in K_a$ the literal $\bar t_x$ lies in $E$, and $\sigma^{(a)}$ makes it true.
Take a clause $p\vee q$ of $\Phi$ and suppose that $\sigma^{(a)}$ falsifies both $p$ and $q$.
If neither atom lies in $K_a$, then $\sigma^{(a)}$ agrees with $\sigma^\star$ on both atoms, and $\sigma^\star$ satisfies the clause; this is a contradiction.
So we may assume that $\operatorname{atom}(p)\in K_a$, where $\operatorname{atom}(p)$ denotes the atom of the literal $p$.
Then $\neg p$ is true under $\sigma^{(a)}$, and it equals $\bar t_{\operatorname{atom}(p)}$, which lies in $E$.
From $\Phi\models p\vee q$ we get $q\in E$.
If $\operatorname{atom}(q)\in K_a$, the unique literal of $E$ on this atom is $\bar t_{\operatorname{atom}(q)}$, which $\sigma^{(a)}$ makes true; so $q$ is true, a contradiction.
If $\operatorname{atom}(q)\notin K_a$, then $\bar t_{\operatorname{atom}(q)}\notin E$, so $q=t_{\operatorname{atom}(q)}$; this literal is true under $\sigma^\star$ and hence under $\sigma^{(a)}$, again a contradiction.
Finally, take a unit clause $(p)$.
Then $p\in E$, and $p=t_{\operatorname{atom}(p)}$ because $\sigma^\star\models p$.
If $\operatorname{atom}(p)$ were in $K_a$, then $E$ would contain both $p$ and $\bar t_{\operatorname{atom}(p)}=\neg p$; so $\operatorname{atom}(p)\notin K_a$, and $\sigma^{(a)}$ satisfies $p$.
\end{proof}

\paragraph{Implication theories.}
For a directed graph $G=(V,A)$ let $\Phi_G=\bigwedge_{(u,v)\in A}(\neg u\vee v)$.
Identify an assignment with its set of true atoms.
The models of $\Phi_G$ are then the \emph{closed} sets: the sets $M$ such that $u\in M$ and $(u,v)\in A$ imply $v\in M$ (the up-closed sets of the preorder generated by $G$).
For $X\subseteq V$, let $\operatorname{cl}(X)$ be the smallest closed set that contains $X$.
Since $\emptyset$ and $V$ are models, the backbone is empty.
Let $M^\star$ be the true set, let $\operatorname{reach}(a)$ be the set of vertices reachable from $a$ (including $a$), and let $\operatorname{anc}(a)$ be the set of vertices from which $a$ is reachable (including $a$).
Then:
\begin{itemize}
\item for a false positive ($a\notin M^\star$), $K_a=\operatorname{reach}(a)\setminus M^\star$ and $\sigma^{(a)}=M^\star\cup\operatorname{reach}(a)$; hence $|K_a|=c_a:=|\operatorname{cl}(M^\star\cup\{a\})\setminus M^\star|$, the \emph{cone} of $a$;
\item for a false negative ($a\in M^\star$), $K_a=\operatorname{anc}(a)\cap M^\star$ and $\sigma^{(a)}=M^\star\setminus\operatorname{anc}(a)$.
\end{itemize}
Indeed, the models of $\Phi_G\wedge a$ are the closed sets that contain $\operatorname{reach}(a)$; the smallest is $\operatorname{reach}(a)$ and the largest is $V$.
So the literals entailed by $\Phi_G\wedge a$ are exactly the positive literals on $\operatorname{reach}(a)$.
For $a\notin M^\star$ we have $\bar t_a=a$, and $\bar t_x$ is the positive literal $x$ exactly when $x\notin M^\star$; this gives $K_a=\operatorname{reach}(a)\setminus M^\star$.
Since $M^\star$ is closed, $M^\star\cup\operatorname{reach}(a)$ is closed and equals $\operatorname{cl}(M^\star\cup\{a\})$.
Dually, the models of $\Phi_G\wedge\neg a$ are the closed sets contained in $V\setminus\operatorname{anc}(a)$, which is itself closed; the smallest is $\emptyset$ and the largest is $V\setminus\operatorname{anc}(a)$.
So the literals entailed by $\Phi_G\wedge\neg a$ are exactly the negative literals on $\operatorname{anc}(a)$.
For $a\in M^\star$ we have $\bar t_a=\neg a$, and $\bar t_x=\neg x$ exactly when $x\in M^\star$; this gives $K_a=\operatorname{anc}(a)\cap M^\star$.

\begin{proof}[Proof of \cref{cor:conservation}]
We show that for a satisfiable 2-CNF $\Phi$ with backbone $B$, every truth $\sigma^\star\in\cC$ and every $\eps\in[0,\tfrac12]$,
\[
-\eps^2n(n-1)\ \le\ \Rgr(\eps)-\eps\,(n-|B|)\ \le\ \tfrac12\eps^2n^2(n-1).
\]
We split on the error set $S=\{x:\ Z_x=1\}$.
With no error the output is $\sigma^\star$ (\cref{lem:welldef}(ii)), and the loss is~0.
With exactly one error, at $a$ (probability $\eps(1-\eps)^{n-1}$), \cref{thm:singleflip,prop:twocnf} give $\E_\pi D_\pi(a)=\ind\{a\notin B\}$.
Summed over $a$, these events contribute $\eps(1-\eps)^{n-1}(n-|B|)$.
With two or more errors the loss lies in $[0,n]$ and the probability is at most $\binom n2\eps^2$; these events contribute a term $r\in[0,\tfrac12n^2(n-1)\eps^2]$.
Hence $\Rgr(\eps)-\eps(n-|B|)=-\eps(n-|B|)\big[1-(1-\eps)^{n-1}\big]+r$.
Finally, $0\le\eps(n-|B|)[1-(1-\eps)^{n-1}]\le\eps^2(n-1)n$ by Bernoulli's inequality, which gives both bounds.

For an implication theory $\Phi_G$ the backbone is empty, so $\Rgr(\eps)=\eps n+O(\eps^2)=\Rraw(\eps)+O(\eps^2)$ for every truth.
The per-atom statement $\E_\pi D_\pi(a)=1$ holds for each atom separately, so the conclusion holds for false-positive and false-negative errors alike.
\end{proof}

\begin{remark}[The random order is essential]\label{rem:order}
For a fixed order the single-error damage takes only the values $0$ and $|K_a|$.
Under a random order, the wrong claim is accepted with probability $1/|K_a|$, and then it damages exactly $|K_a|$ atoms.
A fixed order can make every error of one type maximally damaging.
Consider the chain $v_1\to v_2\to\dots\to v_n$ (clauses $\neg v_i\vee v_{i+1}$), whose models are the up-sets $\{v_j,\dots,v_n\}$ and $\emptyset$.
For a false positive at $a$, the flip set $K_a=\operatorname{reach}(a)\setminus M^\star$ consists of $a$ and the false atoms after it on the chain, and the wrong claim is accepted if and only if $a$ is asked before every other atom of $K_a$.
So the topological order $v_1,v_2,\dots,v_n$ accepts every false positive and lets it corrupt every false atom reachable from it, while the reverse order $v_n,\dots,v_1$ rejects every false positive with $|K_a|\ge2$.
For example, with the chain $a\to b$, the truth ``both false'' and the wrong answer $a=1$, the order $(a,b)$ accepts $a$, then finds $b$ implied true, and has damage~2; the order $(b,a)$ accepts $b=0$, then finds $a$ implied false, and has damage~0.
For false negatives the roles of the two orders are exchanged: $K_a=\operatorname{anc}(a)\cap M^\star$ consists of $a$ and the true atoms before it, and the reverse topological order is the damaging one.
\end{remark}

\begin{proof}[Proof of \cref{prop:ml2cnf}]
We show that on a satisfiable 2-CNF with truth $\sigma^\star$, a single wrong answer at an atom $a\notin B$ costs the ML decoder exactly~1 in expectation if $|K_a|\le2$ and~0 if $|K_a|\ge3$, and that a single wrong answer at an atom $a\in B$ costs~0.
Consequently $\RML(\eps)=\eps\,|\{a\notin B:\ |K_a|\le2\}|+O(\eps^2)$.

We use \cref{lem:reduction}(i).
Let $y=\sigma^\star\oplus e_a$, and let $m_a$ be the number of codewords of the form $y\oplus e_b$ with $b\neq a$ when $y\notin\cC$.
Then $D_{\mathrm{ML}}(a)=1$ if $y\in\cC$ and $D_{\mathrm{ML}}(a)=2m_a/(1+m_a)$ otherwise.
Every word that differs from $\sigma^\star$ at $a$ satisfies $\bar t_a$.

\emph{Case $a\in B$.}
No model satisfies $\bar t_a$.
So $y\notin\cC$, no word $y\oplus e_b$ is a codeword, $m_a=0$ and $D_{\mathrm{ML}}(a)=0$.

\emph{Case $a\notin B$.}
By \cref{prop:twocnf}, (P$_a$) holds.
Every model that satisfies $\bar t_a$ satisfies $\bar t_x$ for all $x\in K_a$, so it differs from $\sigma^\star$ on all of $K_a$.
If $y\in\cC$, then $y$ differs from $\sigma^\star$ only at $a$, so $K_a=\{a\}$.
Conversely, if $K_a=\{a\}$, then $y=\sigma^{(a)}$ is a model by (P$_a$).
So $y\in\cC$ if and only if $|K_a|=1$, and in this case $D_{\mathrm{ML}}(a)=1$.
Now let $|K_a|\ge2$, so that $y\notin\cC$.
A codeword $y\oplus e_b=\sigma^\star\oplus e_a\oplus e_b$ with $b\neq a$ differs from $\sigma^\star$ exactly on $\{a,b\}$ and satisfies $\bar t_a$; hence $K_a\subseteq\{a,b\}$, that is, $K_a=\{a,b\}$.
Conversely, if $K_a=\{a,b\}$, then $\sigma^\star\oplus e_a\oplus e_b=\sigma^{(a)}$ is a model by (P$_a$).
So $m_a=1$ if $|K_a|=2$, and $m_a=0$ if $|K_a|\ge3$.
With uniform tie-breaking this gives $D_{\mathrm{ML}}(a)=2\cdot\frac12=1$ and $D_{\mathrm{ML}}(a)=0$, respectively.

Summing over the atoms and applying \eqref{eq:remainder} gives the formula for $\RML$.
In particular $m_a\le1$ for every atom, as used in the proof of \cref{lem:reduction}(i).
Comparing with \cref{cor:conservation}, $\Rgr(\eps)-\RML(\eps)=\eps\,|\{a\notin B:\ |K_a|\ge3\}|+O(\eps^2)$.
On an implication theory ($B=\emptyset$), the greedy and raw decoders therefore have the same first-order risk, while the greedy risk exceeds the ML risk by $\eps\,|\{a:\ |K_a|\ge3\}|$ at first order.
\end{proof}

\paragraph{Horn hubs.}
For integers $k\ge1$ and $s\ge2$, the Horn hub $H_{k,s}$ has the atom $a$ and, for each $i=1,\dots,k$, the atoms $b_{i,1},\dots,b_{i,s-1},c_i$; so $n=1+ks$.
Its rules are $b_{i,1}\wedge\dots\wedge b_{i,s-1}\wedge a\to c_i$ for $i=1,\dots,k$, each with a body of $s$ atoms.
The corresponding clauses are $C_i=\neg b_{i,1}\vee\dots\vee\neg b_{i,s-1}\vee\neg a\vee c_i$.
The truth is $M^\star=\{b_{i,j}:\ i\le k,\ j\le s-1\}$, the assignment in which exactly the atoms $b_{i,j}$ are true; it is a model because $a$ is false.
Block $i$ is $S_i=\{b_{i,1},\dots,b_{i,s-1},c_i\}$, and $T_i=\{b_{i,1},\dots,b_{i,s-1},\neg c_i\}$ is the set of its true literals.
The cone of $a$ is $c_a=k+1$: the closure of $M^\star\cup\{a\}$ adds $a$ and $c_1,\dots,c_k$.
For $s=2$ the rules have the form of the rule $\mathrm{isa}(x,y)\wedge\mathrm{has}(y,z)\to\mathrm{has}(x,z)$ with a shared premise.

\begin{proof}[Proof of \cref{thm:hub}]
For uniform $\pi$, a single false positive on $a$ (all other answers correct) has
\[
\E_\pi D_\pi(a)=(k+1)\int_0^1(1-u^s)^k\,du=(k+1)\,\frac{\Gamma(1+1/s)\,\Gamma(k+1)}{\Gamma(k+1+1/s)}=\Gamma\big(1+\tfrac1s\big)\,k^{1-1/s}\,\big(1+O(1/k)\big),
\]
where $\Gamma$ is the Gamma function.
For $s=2$ this equals $(k+1)\,4^k(k!)^2/(2k+1)!\sim\frac{\sqrt\pi}{2}\sqrt k$.
Every other single error has damage exactly~1 for every $\pi$.
Hence
\[
\Rgr(\eps)=\eps\,\big[ks+\E_\pi D_\pi(a)\big]+O(\eps^2)=\Rraw(\eps)+\eps\,\big[\E_\pi D_\pi(a)-1\big]+O(\eps^2).
\]

\emph{Satisfiability criterion.}
Let $L'$ be a set of literals without complementary pairs.
We claim that $\Phi\wedge L'$ is unsatisfiable if and only if $a\in L'$ and $T_i\subseteq L'$ for some $i$, that is, $L'$ falsifies every literal of some clause $C_i$ other than $\neg a$.
The ``if'' direction is clear.
For the ``only if'' direction, suppose first that $a\notin L'$.
Setting $a$ false and every other atom as $L'$ prescribes (free atoms arbitrarily) satisfies every clause, since each clause contains $\neg a$.
Suppose next that $a\in L'$ but $T_i\not\subseteq L'$ for every $i$.
Then each clause $C_i$ has a literal on $S_i$ that is either true under $L'$ or lies on an atom that $L'$ leaves free.
Set each free atom of $S_i$ so as to satisfy $C_i$.
The atoms of $S_i$ occur only in $C_i$, so these choices do not interact, and $\Phi\wedge L'$ is satisfiable.

\emph{The run with the wrong claim $a$.}
All other claims are the true literals, the members of the sets $T_i$.
Let $\tau$ be the step of $a$.
\begin{itemize}
\item Before $\tau$, the atom $a$ has no literal in $L$. For an atom $x\neq a$ processed before $\tau$, neither $L\cup\{x\}$ nor $L\cup\{\neg x\}$ contains $a$, so both are consistent with $\Phi$ by the criterion. Hence nothing is entailed and every claim is accepted. So $L_{\tau-1}$ is the set of true literals of the atoms processed before $a$.
\item At $\tau$, the literal $a$ is not entailed, because $L_{\tau-1}\cup\{\neg a\}$ does not contain $a$ and is consistent. The literal $\neg a$ is entailed if and only if $\Phi\wedge L_{\tau-1}\wedge a$ is unsatisfiable, that is, if and only if $T_i\subseteq L_{\tau-1}$ for some $i$. This means that some block lies entirely before $a$ in $\pi$. In that case the wrong claim is skipped, all accepted literals are true, and the output is $\sigma^\star$ (damage~0).
\item Otherwise the wrong claim $a$ is accepted. Consider block $i$ and an atom $x\in S_i$ processed after $a$, and let $L$ be the set of accepted literals at the step of $x$. Its true literal $t_x$ is not entailed. Indeed, unsatisfiability of $\Phi\wedge L\wedge\bar t_x$ would require $T_j\subseteq L\cup\{\bar t_x\}$ for some $j$. The case $j=i$ is impossible because $t_x\in T_i$ but $t_x\notin L\cup\{\bar t_x\}$. The case $j\neq i$ would give $T_j\subseteq L$, which contradicts the satisfiability of $\Phi\wedge L$ (\cref{lem:welldef}(i)). By the same argument, the false literal $\bar t_x$ is entailed if and only if $\Phi\wedge L\wedge t_x$ is unsatisfiable, that is, if and only if $T_i\setminus\{t_x\}\subseteq L$.
\end{itemize}
Now follow block $i$ along the order $\pi$.
Every atom of $S_i$ other than the last one in $\pi$ is accepted with its true literal.
For atoms before $a$ this follows from the first bullet.
For an atom $x$ after $a$ that is not the last atom of $S_i$, the literal of the last atom is not yet in $L$, so $T_i\setminus\{t_x\}\not\subseteq L$, and neither literal of $x$ is entailed.
The last atom of $S_i$ comes after $a$, because $S_i$ does not lie entirely before $a$.
At its step the other $s-1$ literals of $T_i$ are in $L$, so its false literal is entailed, and the output gives it its false value.
So each block contributes damage exactly~1, and together with $a$ the damage is $k+1$.

\emph{Probability of acceptance.}
Attach independent $\mathrm{Unif}[0,1]$ times to all atoms; sorting the times gives a uniform order.
Conditionally on the time $u$ of $a$, block $i$ lies entirely before $a$ with probability $u^s$, independently over the blocks.
Hence the wrong claim is accepted with probability $\int_0^1(1-u^s)^k\,du$, and $\E_\pi D_\pi(a)=(k+1)\int_0^1(1-u^s)^k\,du$.
The substitution $t=u^s$ turns the integral into $\frac1s\,\mathrm B(\frac1s,k+1)$, where $\mathrm B(\alpha,\beta)=\int_0^1t^{\alpha-1}(1-t)^{\beta-1}dt=\Gamma(\alpha)\Gamma(\beta)/\Gamma(\alpha+\beta)$ is the Beta function.
Since $\frac1s\Gamma(\frac1s)=\Gamma(1+\frac1s)$, the integral equals $\Gamma(1+\frac1s)\Gamma(k+1)/\Gamma(k+1+\frac1s)$.
For $s=2$, the identity $\Gamma(k+\frac32)=\frac{(2k+2)!}{4^{k+1}(k+1)!}\sqrt\pi$ and $\Gamma(\frac32)=\frac{\sqrt\pi}2$ give
\[
\frac{\Gamma(\frac32)\,k!}{\Gamma(k+\frac32)}=\frac{2\cdot4^k\,k!\,(k+1)!}{(2k+2)!}=\frac{4^k(k!)^2}{(2k+1)!}.
\]
The asymptotic form follows from $\Gamma(k+1)/\Gamma(k+1+\alpha)=k^{-\alpha}(1+O(1/k))$ for fixed $\alpha\in(0,1]$, together with $k+1=k(1+1/k)$.
For $s=2$ it reads $\E_\pi D_\pi(a)\sim\Gamma(\frac32)\sqrt k=\frac{\sqrt\pi}2\sqrt k$.

\emph{Other single errors.}
The atom $a$ is false in $M^\star$, so its only possible error is the false positive treated above.
Consider a false negative on some $b_{i,j}$ or a false positive on some $c_i$, with all other answers correct.
The answer vector $y$ then has $a$ false, so it satisfies every clause and is itself a model.
Hence every claim is consistent with every set of accepted claims, since all of them are literals of the model $y$.
So the negation of a claim is never entailed, and every atom either has its claim accepted or has its claim entailed.
Thus $y$ is a model of $\Phi\wedge L_n$, and by \cref{lem:welldef}(ii) the output is $y$, with damage $\dH(y,\sigma^\star)=1$, for every $\pi$.

\emph{Summation.}
There are $ks$ atoms other than $a$, each with single-error damage~1, and $n=ks+1$.
By \cref{lem:reduction}(ii), $\Rgr(\eps)=\eps[ks+\E_\pi D_\pi(a)]+O(\eps^2)=\eps n+\eps[\E_\pi D_\pi(a)-1]+O(\eps^2)$.
Since $\max_x\E_\pi D_\pi(x)\le k+1$, the remainder bound \eqref{eq:remainder} gives $|\Rgr(\eps)-\eps[ks+\E_\pi D_\pi(a)]|\le\eps^2n(n-1)(k+1+\frac n2)$.
The excess $\E_\pi D_\pi(a)-1$ is $\Theta(k^{1-1/s})$, so the greedy risk exceeds the raw risk by $\eps\,\Theta(k^{1-1/s})$ at first order.
\end{proof}

\begin{remark}[Where the single-flip property fails]\label{rem:singleflipfails}
In $H_{k,s}$ the formula $\Phi\wedge a$ entails nothing new.
Indeed, by the satisfiability criterion $\{a,\ell\}$ is consistent for every literal $\ell$ on another atom, because each $T_i$ has $s\ge2$ literals.
So $K_a=\{a\}$ and $\sigma^{(a)}=M^\star\cup\{a\}$, which violates every clause $C_i$ and is not a model.
The wrong literal alone does not imply additional incorrect values. These values are implied later, when the wrong literal is combined with correct literals. Acceptance is determined by the $s$-element blocks rather than by the flip set.
The single-flip property can also fail for false negatives.
Change the signs of the literals on $a$ and on $c_i$ in every clause; this gives the rules $b_{i,1}\wedge\dots\wedge b_{i,s-1}\wedge c_i\to a$ (for $s=2$, the rules $b_i\wedge c_i\to a$).
Take the truth in which all atoms are true, and a false negative on $a$.
The unsatisfiability criterion becomes: $\neg a\in L'$ and some block has all its true literals in $L'$.
The proof above, with the literals of each block exchanged, gives the same damage law.
For comparison with ML, when $k\ge2$ the word $M^\star\cup\{a\}$ has $M^\star$ as its unique nearest codeword, because flipping a single atom of a block repairs only one of the $k$ violated clauses.
So the ML single-error damage at $a$ is~0.
\end{remark}

\paragraph{General Horn theories.}
A \emph{definite Horn theory} is a conjunction of rules $b_1\wedge\dots\wedge b_r\to h$ over atoms, where $\{b_1,\dots,b_r\}$ is the \emph{body} and $h$ the \emph{head} of the rule.
Identify a model with its set of true atoms.
The models are then the \emph{closed} sets, that is, the sets that contain the head of every rule whose body they contain.
For $X\subseteq V$, $\operatorname{cl}(X)$ is the smallest closed set that contains $X$, obtained by repeatedly adding conclusions whose premises are already present.
For a closed truth $M^\star$ and an atom $a\notin M^\star$, the \emph{cone} of $a$ is $c_a=|\operatorname{cl}(M^\star\cup\{a\})\setminus M^\star|$; this extends the cone of implication theories.

\begin{conjecture}[Body-size law]\label{conj:bodysize}
For every $s\ge2$ there is a constant $C_s<\infty$ with the following property.
Let $\Phi$ be a definite Horn theory whose rule bodies have at most $s$ atoms, let $M^\star$ be a closed set, and let $a\notin M^\star$.
Then a single false positive at $a$ (all other answers correct) has, for uniform $\pi$,
\[
\E_\pi D_\pi(a)\ \le\ C_s\,c_a^{\,1-1/s}.
\]
\end{conjecture}

By \cref{thm:hub} the exponent cannot be lowered: the hub $H_{k,s}$ has $c_a=k+1$ and $\E_\pi D_\pi(a)\sim\Gamma(1+\frac1s)\,k^{1-1/s}$.
For $s=1$, that is, for implication theories, the statement holds with $C_1=1$ and with equality: there $|K_a|=c_a$ for a false positive, and $\E_\pi D_\pi(a)=1$ by \cref{thm:singleflip,prop:twocnf}.
A corresponding bound for false negatives remains open; it requires a way to measure the atoms affected when a true premise is removed.
In the simulations of \cref{app:numerics}, which also test \cref{conj:bodysize}, false-negative damages reach $1.48$.

\subsection{Entity resolution}\label{app:proofs-er}

The items are $[m]=\{1,\dots,m\}$.
The code is the set of partitions of $[m]$, written as vectors of $\binom m2$ pair indicators $x_{uv}=\ind\{u\sim v\}$, where $u\sim v$ means that $u$ and $v$ lie in the same block.
The constraints are transitivity, $x_{uv}\wedge x_{vw}\to x_{uw}$.
Each pair $e$ carries an independent answer $y_e=x^\star_e\oplus\xi_e$ with $\xi_e\sim\Bern(\delta)$.
In this section the true partition $x^\star$ has a single cluster, so every flip is a false ``different'', and $\delta$ is the false-split rate.
The loss $L$ is the number of pairs on which the output partition disagrees with the truth; here it is the number of pairs that the output separates.

The pipeline $U_\pi$ is the transitivity pruning of \citet{wang2013leveraging}.
It maintains a partition of $[m]$ into blocks and a set of $\neq$ constraints between blocks.
It processes the pairs in the order $\pi$.
For a pair $e=\{u,v\}$, it skips $e$ if $u$ and $v$ lie in the same block or if their blocks carry a $\neq$ constraint.
Otherwise it queries the answer $y_e$.
On ``same'' it merges the two blocks, and the merged block inherits all $\neq$ constraints of both; on ``different'' it adds a $\neq$ constraint between the two blocks.
Its output is the final partition into blocks.

For an order $\pi$ of the pairs, $\Tpi$ is the spanning tree obtained by scanning $\pi$ and keeping a pair if and only if it joins two different components of the pairs kept so far.
This is Kruskal's algorithm \citep{kruskal1956shortest} for the edge weights ``position in $\pi$'', so $\Tpi$ is the minimum spanning tree for these weights.
It depends on $\pi$ only, not on the answers. With all answers correct, its edges are exactly the queried pairs; we therefore use it as a reference tree even in noisy runs.
For a spanning tree $T$ of the complete graph $K_m$, $d_T(u,v)$ is the number of edges on the path from $u$ to $v$ in $T$.
For an edge $e\in T$, $n_e$ and $m-n_e$ are the numbers of vertices of the two components of $T-e$.
The \emph{Wiener index} of $T$ is
\[
\Wien(T)=\sum_{\{u,v\}}d_T(u,v)=\sum_{e\in T}n_e(m-n_e),
\]
where the second form counts, for each edge $e$, the $n_e(m-n_e)$ pairs whose path uses $e$.
We also write $\Wien_2(T)=\sum_{\{u,v\}}d_T(u,v)^2$.

\begin{lemma}[Union-find is the greedy decoder]\label{lem:ufgreedy}
For every order $\pi$ of the pairs and all answers, the union-find pipeline $U_\pi$ equals the greedy decoder on the partition code.
That is, it skips exactly the pairs whose relation is entailed by the answers accepted so far, it accepts the answer on every other pair, and its output is the unique partition that satisfies all accepted answers.
\end{lemma}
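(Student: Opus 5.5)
We prove \cref{lem:ufgreedy} by induction on the number of processed pairs. The invariant links the pipeline state to the set $L_t$ of accepted answers: the blocks are the connected components of the graph whose edges are the accepted ``same'' pairs, and two blocks carry a $\neq$ constraint exactly when some accepted ``different'' pair has one endpoint in each. It holds at $t=0$, since every block is a singleton and there are no constraints. Merging blocks on ``same'' and inheriting constraints preserves it, and so does recording a constraint on ``different''.

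The core step is a characterization of entailment in the partition code for a consistent state. Let the partition code with transitivity, together with $L_t$, be satisfiable. Write $P$ for the block partition and $F$ for the set of $\neq$ constraints. We claim three things.
\begin{itemize}
\item[(a)] The models of $L_t$ are exactly the partitions that are coarsenings of $P$ and never join two blocks linked by a constraint in $F$.
\item[(b)] $L_t$ entails ``same'' on $\{u,v\}$ if and only if $u$ and $v$ lie in one block.
\item[(c)] $L_t$ entails ``different'' if and only if the blocks of $u$ and $v$ carry a constraint.
\end{itemize}
Claim (a) holds because transitivity forces accepted ``same'' edges to be closed under connectivity. The ``if'' directions of (b) and (c) follow directly from (a).

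For the converses we exhibit witnesses. $P$ itself is a model by consistency, and it separates $u,v$ whenever they lie in different blocks, so ``same'' is not entailed. Conversely, when the two blocks carry no constraint, the coarsening of $P$ that merges exactly these two blocks is a model. It joins no constrained pair, because constraints only link distinct blocks and the single new merge is unconstrained. This model puts $u\sim v$, so ``different'' is not entailed. Together these show that the pipeline's skip test coincides with the greedy skip test.

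Consistency itself is maintained inductively. A pair is queried only when neither relation is entailed, so adding either answer keeps $L_t$ satisfiable. This matches \cref{lem:welldef}(i), but here it must also be checked in the pipeline's own terms: after a merge, no block may carry a constraint to itself. That holds because a merge happens only between unconstrained blocks.

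Finally, for the output, after all pairs are processed every pair is either inside one block or between constrained blocks. Hence the final partition $P$ is the unique model of $L_n$ by (a), and it agrees with $G_\pi$ by \cref{lem:welldef}(ii). The main obstacle is the witness argument in the converse of (c), in particular verifying that merging two unconstrained blocks cannot indirectly violate another ``different'' answer. The block-level invariant on $F$ is what makes this immediate.
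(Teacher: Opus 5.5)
Your proposal is correct and follows essentially the same route as the paper: you identify blocks with the components of the accepted ``same'' answers and constraints with accepted ``different'' answers. You then characterize entailment with the same two witnesses ($P$ itself, and $P$ with the two unconstrained blocks merged), and conclude by induction that both decoders accept the same literals, with uniqueness of the final partition read off your description of the models as coarsenings of $P$ where the paper cites \cref{lem:welldef}(ii).
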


\begin{proof}
Let $L$ be a set of accepted literals, each an equality $u\sim v$ or an inequality $u\not\sim v$, and assume that $L$ is satisfiable, that is, that some partition satisfies all of them.
Call the connected components of the graph of accepted equalities the \emph{equality-components}.
We claim: (a) $L$ entails $u\sim v$ if and only if $u$ and $v$ lie in the same equality-component; (b) $L$ entails $u\not\sim v$ if and only if some accepted inequality joins the equality-component of $u$ to that of $v$.

(a) If $u$ and $v$ are joined by a path of accepted equalities, transitivity gives $u\sim v$ in every model.
Conversely, consider the partition into equality-components.
It satisfies every accepted equality.
It also satisfies every accepted inequality: in any model every equality-component lies in one block, so a satisfiable $L$ contains no inequality inside an equality-component.
This partition separates any two points of different equality-components, so for such points $L$ does not entail $u\sim v$.

(b) If an accepted inequality joins a point $x$ of the component of $u$ to a point $y$ of the component of $v$, then every model has $u\sim x$, $x\not\sim y$ and $y\sim v$, hence $u\not\sim v$ by transitivity.
Conversely, suppose that no accepted inequality joins the two components.
If the two components coincide, $L$ entails $u\sim v$ and, being satisfiable, does not entail $u\not\sim v$.
Otherwise, merge these two components and keep all other equality-components as blocks.
The resulting partition satisfies every accepted equality and every accepted inequality, and in it $u\sim v$; so $L$ does not entail $u\not\sim v$.

We now compare the two decoders step by step.
Suppose that before some step both have accepted the same satisfiable set $L$; this holds at the start, with $L=\emptyset$.
The blocks of the pipeline are the equality-components of $L$, because a merge happens exactly when an equality is accepted.
Two blocks carry a $\neq$ constraint if and only if an accepted inequality joins them, because constraints are created by accepted inequalities and inherited through merges.
By (a) and (b), the pipeline skips the current pair exactly when $L$ entails its relation, which is the rule of the greedy decoder; otherwise both decoders accept the answer.
An accepted answer is consistent with $L$, because its negation is not entailed, so the new set is again satisfiable.
By induction, the two decoders accept the same literals at every step.
At the end, the final blocks form the partition into equality-components, which satisfies $L_n$ by the proof of (a); by \cref{lem:welldef}(ii) it is the unique model of $L_n$, that is, the output of the greedy decoder.
\end{proof}

The full flip vector $\xi$ assigns a potential error to every pair, but the pipeline reads $\xi_e$ only when it queries that pair.
The decision to query $e$ uses only earlier answers, so its unobserved error is still independent of that decision. This gives the same distribution as independently flipping each queried answer.

\begin{proof}[Proof of \cref{lem:sandwich}]
Let the truth be one cluster, and fix an order $\pi$ and a flip vector $\xi$.
Let $F=\{e\in\Tpi:\ \xi_e=1\}$ be the set of flipped tree edges.
For a pair $\{u,v\}$, let $k(u,v)$ be the number of edges of $F$ on the path from $u$ to $v$ in $\Tpi$.
We show that $k(u,v)=0$ implies that $u$ and $v$ end in the same block, and that $k(u,v)=1$ implies that they end in different blocks.
Consequently $N_1\le L_\pi\le N_{\ge1}$, where $N_j$ (respectively $N_{\ge j}$) is the number of pairs with $k(u,v)=j$ (respectively $k(u,v)\ge j$).

Let $\kappa_t$ be the partition of $[m]$ into the components of the edges of $\Tpi$ among the first $t$ pairs of $\pi$.
It equals the partition into the components of all of the first $t$ pairs, because Kruskal's algorithm keeps a pair exactly when it joins two components.
Let $P_t$ be the partition of the pipeline into blocks after $t$ pairs.

(A) $P_t$ refines $\kappa_t$: every pipeline block lies inside one $\kappa_t$-component.
We use induction on $t$; blocks change only by merges.
A merge at step $t$ happens on a queried pair $e=\{u,v\}$.
After step $t$, $u$ and $v$ lie in the same $\kappa_t$-component: either $e$ joins two $\kappa_{t-1}$-components, and then $e\in\Tpi$, or $u$ and $v$ were already in one component.
The merged block is the union of two blocks, each contained in a $\kappa_{t-1}$-component by the induction hypothesis, and both of these components lie inside the $\kappa_t$-component of $u$.

(B) At every time $t$, every $\neq$ constraint joins two blocks that lie in the same $\kappa_t$-component.
A constraint is created at some step on a queried pair $\{u,v\}$, and from then on it joins the current block of $u$ and the current block of $v$, because constraints are inherited through merges.
As in (A), $u$ and $v$ lie in one $\kappa$-component from that step on, since the partitions $\kappa_t$ only become coarser.
By (A), the current blocks of $u$ and $v$ lie inside this component.

(C) Every edge $e=\{u,v\}\in\Tpi$ is queried.
At its step $t$, the points $u$ and $v$ lie in different $\kappa_{t-1}$-components, by the definition of $\Tpi$.
By (A) they lie in different blocks.
By (B) no $\neq$ constraint joins these two blocks.
So $e$ is queried.
Since the truth is one cluster, the query merges the two blocks if $\xi_e=0$, and it creates a $\neq$ constraint between them if $\xi_e=1$.

Blocks never split.
By (C), the two ends of every edge of $\Tpi\setminus F$ end in the same block.
So every component of $\Tpi\setminus F$ (a \emph{fragment}) lies inside one final block.
This is the first claim, because $k(u,v)=0$ means that $u$ and $v$ lie in the same fragment.
For the second claim, let $f=\{x,y\}$ be the unique flipped edge on the path from $u$ to $v$, with $u$ on the side of $x$.
By (C), at the step of $f$ a $\neq$ constraint is placed between the block of $x$ and the block of $y$.
Two blocks joined by a $\neq$ constraint are never merged, because a merge follows only a query, and a query is made only between unconstrained blocks.
The constraint is inherited by every later block that contains $x$ or $y$.
So $x$ and $y$ end in different blocks.
The paths from $u$ to $x$ and from $y$ to $v$ contain no flipped edge, so by the first claim $u$ ends with $x$ and $v$ ends with $y$.
Hence $u$ and $v$ end in different blocks.
Since the loss counts the separated pairs, the pairs with $k=1$ are counted and the pairs with $k=0$ are not, which gives $N_1\le L_\pi\le N_{\ge1}$.
\end{proof}

\begin{proof}[Proof of \cref{thm:wiener}]
We show that for every order $\pi$ and every $\delta\in[0,1]$, with $d=d_{\Tpi}(u,v)$ in each summand,
\[
\delta\,\Wien(\Tpi)-\delta^2\Wien_2(\Tpi)\ \le\ \sum_{\{u,v\}}d\,\delta(1-\delta)^{d-1}\ \le\ \E_\xi L_\pi\ \le\ \sum_{\{u,v\}}\big(1-(1-\delta)^{d}\big)\ \le\ \delta\,\Wien(\Tpi).
\]
In particular $\E_\xi L_\pi=\delta\,\Wien(\Tpi)+O(\delta^2)$, with remainder at most $\delta^2\Wien_2(\Tpi)\le\delta^2\binom m2(m-1)^2$.
Moreover, a single flip on a tree edge $e$ costs exactly $n_e(m-n_e)$, and a single flip on a pair outside the tree costs nothing.

Take expectations over $\xi$ in \cref{lem:sandwich}.
The tree $\Tpi$ does not depend on $\xi$, so the flips on the $d$ edges of the path from $u$ to $v$ are i.i.d.\ $\Bern(\delta)$, and $k(u,v)$ is binomial with parameters $d$ and $\delta$.
Hence $\Prob(k(u,v)=1)=d\delta(1-\delta)^{d-1}$ and $\Prob(k(u,v)\ge1)=1-(1-\delta)^d$, which gives the two middle inequalities.
For the outer ones, $d\delta(1-\delta)^{d-1}\ge d\delta(1-(d-1)\delta)\ge d\delta-d^2\delta^2$ by Bernoulli's inequality, and $1-(1-\delta)^d\le d\delta$.
Summing over pairs gives the bounds, and $\Wien_2(\Tpi)\le\binom m2(m-1)^2$ because $d\le m-1$.
With exactly one flip, on the pair $e$, we have $k(u,v)\in\{0,1\}$ for all pairs.
If $e\in\Tpi$, then $L_\pi=N_1$ is the number of pairs whose tree path uses $e$, that is, $n_e(m-n_e)$.
If $e\notin\Tpi$, then $F=\emptyset$ and $L_\pi=0$; indeed, the pair $e$ is never queried when all earlier answers are correct.
\end{proof}

With no wrong answer, the pipeline makes exactly $m-1$ queries, namely the edges of $\Tpi$.
By (C) in the proof of \cref{lem:sandwich} every tree edge is queried.
A pair outside the tree arrives when its two ends already lie in one $\kappa$-component; with all answers correct, every tree edge so far has merged its blocks, so the blocks coincide with the $\kappa$-components and the pair is skipped.

\paragraph{Known results on the random minimum spanning tree.}
For uniform $\pi$, the tree $\Tpi$ is distributed as the minimum spanning tree $\MST_m$ of $K_m$ with i.i.d.\ continuous edge weights.
We use two known results on $\MST_m$.
\begin{itemize}
\item[(F1)] \citep{addarioberry2009critical} $\E\operatorname{diam}(\MST_m)=\Theta(m^{1/3})$, where $\operatorname{diam}$ is the largest tree distance between two vertices.
\item[(F2)] \citep[Theorem~1.1]{addarioberry2017scaling} The tree $\MST_m$, with distances multiplied by $m^{-1/3}$ and with the uniform probability measure on its vertices, converges in distribution, in the Gromov--Hausdorff--Prokhorov topology, to a random compact measured real tree $(\mathcal M,d,\mu)$. Let $D_m$ be the tree distance between two independent uniform vertices of $\MST_m$. By the standard Gromov--Prokhorov consequence of this convergence, $D_m/m^{1/3}$ converges in distribution to $D:=d(\zeta_1,\zeta_2)$, where, given $(\mathcal M,d,\mu)$, the points $\zeta_1$ and $\zeta_2$ are independent with law $\mu$. \citet[Theorem~1.1 and Proposition~6.5]{broutin2023convex} identify the same measured limit and show that, for a point $\zeta$ sampled from $\mu$, almost surely $\mu(B(\zeta,r))\le r^{5/2}$ for all sufficiently small $r>0$. Thus $\mu(\{\zeta\})=0$ almost surely. Conditional independence then gives $\Prob(D=0)=\E[\mu(\{\zeta_1\})]=0$, so $\Prob(D>0)=1$.
\end{itemize}

The scale $m^{-1/3}$ also appears in this tree's edge-weight-resampling noise threshold \citep{israeli2023noise}.

\begin{proof}[Proof of \cref{prop:random}]
First, $\E\,\Wien(\MST_m)=\Theta(m^{7/3})$, so that $\E L=\delta\,\E\,\Wien(\MST_m)+O(\delta^2)=\Theta(\delta m^{7/3})+O(\delta^2)$, an amplification $\E\,\Wien(\MST_m)/\binom m2=\Theta(m^{1/3})$ over the raw rate.
Second, let $\delta=x\,m^{-1/3}$ with $x>0$ fixed, and let $\varphi_m(x)=\E L/\binom m2$ be the expected fraction of separated pairs. Then
\[
\E\big[xD\,e^{-xD}\big]\ \le\ \liminf_{m\to\infty}\varphi_m(x)\ \le\ \limsup_{m\to\infty}\varphi_m(x)\ \le\ 1-\E\,e^{-xD}\ \le\ x\,\E D,
\qquad 0<\E D<\infty.
\]

\emph{The $m^{7/3}$ law.}
For the upper bound, $\Wien(T)\le\binom m2\operatorname{diam}(T)$ for every tree $T$, and (F1) gives $\E\,\Wien(\MST_m)=O(m^{7/3})$.
For the lower bound, let $D'_m$ be the tree distance between a uniformly random pair of distinct vertices.
Then $\E\,\Wien(\MST_m)=\binom m2\E D'_m$, and $\E D'_m=\E[D_m\mid\text{the two vertices differ}]\ge\E D_m$ because $D_m=0$ when the two vertices coincide.
By (F2), $D_m/m^{1/3}$ converges in distribution to $D$.
For nonnegative random variables, convergence in distribution implies $\liminf\E[D_m/m^{1/3}]\ge\E D$ (Fatou's lemma, or the portmanteau theorem applied to the nonnegative continuous map $t\mapsto t$).
Since $\Prob(D>0)>0$, $\E D>0$, so $\E\,\Wien(\MST_m)=\Omega(m^{7/3})$.
The first-order statement then follows from \cref{thm:wiener} by averaging over $\pi$: $\delta\,\E\,\Wien(\MST_m)-\delta^2\,\E\,\Wien_2(\MST_m)\le\E L\le\delta\,\E\,\Wien(\MST_m)$.

\emph{The window.}
Here the error rate shrinks on the reciprocal scale of typical tree distances, so a typical path encounters a constant-order number of errors.
By \cref{lem:sandwich} and the independence of the flips from $\Tpi$, as in the proof of \cref{thm:wiener}, $\varphi_m(x)$ lies between $\E\big[D'_m\delta(1-\delta)^{D'_m-1}\big]$ and $\E\big[1-(1-\delta)^{D'_m}\big]$.
The laws of $D'_m$ and $D_m$ differ only by an event of probability $1/m$ (the two uniform vertices coincide), on which $D_m=0$.
So $D'_m/m^{1/3}$ also converges in distribution to $D$.
Write $(1-\delta)^{D'_m}=\exp\big(c_m\,D'_m/m^{1/3}\big)$ with $c_m=m^{1/3}\ln(1-x m^{-1/3})\to-x$.
By the continuous mapping theorem, $(1-\delta)^{D'_m}$ converges in distribution to $e^{-xD}$, and $D'_m\delta(1-\delta)^{D'_m-1}$ converges in distribution to $xDe^{-xD}$.
Both variables are bounded: the first by~1, and the second by $\sup_{y\ge0}ye^{-y}/(1-\delta)$, because $d\delta(1-\delta)^{d-1}\le y e^{-y}/(1-\delta)$ with $y=-d\ln(1-\delta)\ge d\delta$.
So their expectations converge to $\E[xDe^{-xD}]$ and $1-\E e^{-xD}$, which gives the two middle inequalities.
The last inequality follows from $1-e^{-t}\le t$.
Finally, $D_m\le\operatorname{diam}(\MST_m)$, so Fatou's lemma and (F1) give $\E D\le\liminf_m\E D_m/m^{1/3}\le\liminf_m\E\operatorname{diam}(\MST_m)/m^{1/3}<\infty$; and $\E D>0$ as shown above.

\emph{Interpretation.}
As $x\to0$ both bounds equal $x\,\E D\,(1+o(1))$; for the lower bound this follows by monotone convergence of $De^{-xD}\uparrow D$, and for the upper bound by monotone convergence of $(1-e^{-xD})/x\uparrow D$.
Thus $\liminf_m\varphi_m(x)$ and $\limsup_m\varphi_m(x)$ both behave as $x\,\E D$ as $x\to0$.
Inside the window, for each fixed $x$, the loss fraction is of constant order, bounded away from~0 (because $\E D>0$) and from~1 (because $D<\infty$ almost surely).
\end{proof}

Whether $\varphi_m(x)$ has a limit $\Lambda(x)$, and whether $\Lambda(x)\to1$ or stays below~1 as $x\to\infty$, depends on how the pipeline repairs pairs with at least two flipped edges on their path; this question is open.
For a fixed query order, the loss need not increase when additional answers are flipped.
Suppose that removing two tree edges $f_1,f_2$ from $\Tpi$ leaves three fragments $P_1,P_2,P_3$, with $f_1$ between $P_1$ and $P_2$ and $f_2$ between $P_2$ and $P_3$.
If only $f_1$ is flipped, \cref{lem:sandwich} gives the final partition $\{P_1,\,P_2\cup P_3\}$, and similarly for $f_2$.
If both are flipped, the run can end in the partition $\{P_1\cup P_3,\,P_2\}$, for instance when a pair between $P_1$ and $P_3$ arrives after both flipped edges; it is then queried and answered ``same''.
When $P_2$ has fewer points than each of $P_1$ and $P_3$, this separates fewer pairs than flipping either edge alone.
So, for a fixed order, flipping more answers can decrease the loss.

\begin{proof}[Proof of \cref{prop:star}]
(i) Every spanning tree $T$ of $K_m$ with $m\ge3$ has $\Wien(T)\ge(m-1)^2$, with equality if and only if $T$ is a star; hence, among all orders, the first-order risk $\delta\,\Wien(\Tpi)$ of $U_\pi$ is minimized exactly by the orders whose Kruskal tree is a star (for example the pivot orders).
(ii) Let a decoder observe only the answers on the $m-1$ pairs of a fixed spanning tree $T$ and output any partition, possibly at random. Let $\theta_0$ be the one-cluster partition, and for $e\in T$ let $\theta_e$ be the partition into the two components of $T-e$. If $\delta\le1/m$, then
\[
\max_{\theta\in\{\theta_0\}\cup\{\theta_e\}_{e\in T}}\E_\theta L\ \ge\ \tfrac12\,\delta(1-\delta)^{m-2}\,\Wien(T)\ \ge\ \tfrac12\,\delta(1-\delta)^{m-2}(m-1)^2,
\]
where $\E_\theta L$ is the expected number of pairs on which the output disagrees with $\theta$ when the truth is $\theta$.

(i)
For $1\le n_e\le m-1$ we have $n_e(m-n_e)\ge m-1$, with equality if and only if $n_e\in\{1,m-1\}$.
So $\Wien(T)=\sum_{e\in T}n_e(m-n_e)\ge(m-1)^2$.
Equality holds if and only if every edge has a leaf as an endpoint, which for a tree on $m\ge3$ vertices means that $T$ is a star.
An order that lists the $m-1$ pairs $\{1,j\}$ first, as in the node-ordering (pivot) pipeline of \citet{vesdapunt2014crowdsourcing}, has the star centered at~1 as its Kruskal tree.
Its first-order risk is therefore exactly $\delta(m-1)^2$, an amplification $(m-1)^2/\binom m2=2(m-1)/m<2$ over the raw rate.
With no wrong answer, both the pivot order and a random order make exactly $m-1$ queries (the remark after the proof of \cref{thm:wiener}).

(ii)
Let $z_e$ be the noiseless observation on $T$ under $\theta_e$, namely ``different'' on $e$ and ``same'' on the other edges of $T$; these observations are distinct for distinct $e$.
Let $\hat\theta$ be the decoder, and let $\ell(\cdot,\cdot)$ denote the number of pairs on which two partitions disagree.
Put prior weight $\frac12$ on $\theta_0$ and weight $\frac1{2(m-1)}$ on each $\theta_e$.
Keeping only the observations $z_e$, the Bayes risk of $\hat\theta$ is at least
\[
\sum_{e\in T}\Big[\tfrac12\,P_{\theta_0}(z_e)\,\E\,\ell(\hat\theta(z_e),\theta_0)+\tfrac1{2(m-1)}\,P_{\theta_e}(z_e)\,\E\,\ell(\hat\theta(z_e),\theta_e)\Big],
\]
where the expectations are over the internal randomness of the decoder.
Here $P_{\theta_0}(z_e)=\delta(1-\delta)^{m-2}$ and $P_{\theta_e}(z_e)=(1-\delta)^{m-1}$.
For $\delta\le1/m$ we have $1-\delta\ge(m-1)\delta$, hence $\frac{(1-\delta)^{m-1}}{m-1}\ge\delta(1-\delta)^{m-2}$.
The distance $\ell$ is a Hamming distance on pair indicators, so $\ell(\hat\theta,\theta_0)+\ell(\hat\theta,\theta_e)\ge\ell(\theta_0,\theta_e)=n_e(m-n_e)$.
So the Bayes risk is at least $\frac12\delta(1-\delta)^{m-2}\sum_{e\in T}n_e(m-n_e)=\frac12\delta(1-\delta)^{m-2}\Wien(T)$.
The maximum risk is at least the Bayes risk, and (i) gives the second inequality.
Part~(ii) assumes that the observed tree is fixed in advance. The pipeline $U_\pi$ can change its queries in response to answers, even though its noiseless query set is $\Tpi$, so the bound does not directly apply to it.
\end{proof}

\paragraph{Similarity order.}
The items carry embeddings $X_1,\dots,X_m$, i.i.d.\ with a density $f$ on $[0,1]^d$ that is bounded below by $f_0>0$.
The pairs are processed in increasing order of the Euclidean distance $|X_u-X_v|$ (most similar first), as in the order by decreasing match probability of \citet{wang2013leveraging}.
Then $\Tpi$ is the Euclidean minimum spanning tree $\EMST_m$ of the points, which is almost surely unique.
The order is independent of the answers, so \cref{thm:wiener} gives the first-order risk $\delta\,\Wien(\EMST_m)$ conditionally on the embeddings.
Throughout, $\log$ is the natural logarithm.

\begin{proof}[Proof of \cref{thm:similarity}]
Let $\mu_f=\E|X_1-X_2|>0$.
We show that for every $m\ge2$ with $f_0m\ge2\log m$,
\[
\frac{\E\,\Wien(\EMST_m)}{\binom m2}\ \ge\ \frac{\mu_f-\sqrt d/m}{4\sqrt d}\Big(\frac{f_0\,m}{2\log m}\Big)^{1/d}=c_{d,f}\Big(\frac m{\log m}\Big)^{1/d}\big(1-o(1)\big),
\qquad c_{d,f}=\frac{\mu_f}{4\sqrt d}\Big(\frac{f_0}2\Big)^{1/d},
\]
and that for $d=1$ and any continuous distribution $\Wien(\EMST_m)/\binom m2=(m+1)/3$ exactly.

The proof bounds the length of the longest tree edge, then uses it to show that paths between distant points must contain many edges.

(1) \emph{Bottleneck property.}
For $r>0$ let $G(X;r)$ be the geometric graph on the points with an edge between any two points at distance at most $r$.
The longest edge $\ell_{\max}$ of $\EMST_m$ is at most any $r$ for which $G(X;r)$ is connected.
Indeed, suppose that $G(X;r)$ is connected and that an edge $e$ of $\EMST_m$ is longer than $r$.
Removing $e$ splits the points into two sides.
Some edge of $G(X;r)$ crosses this cut, and exchanging it for $e$ gives a lighter spanning tree, which is a contradiction.

(2) \emph{Tiling.}
Let $x=(f_0m/(2\log m))^{1/d}$, which is at least~1 by assumption, and let $N=\lfloor x\rfloor\ge x/2$.
Tile $[0,1]^d$ by $N^d$ cubes of side $1/N$.
Each cube receives a given point with probability at least $f_0N^{-d}\ge f_0x^{-d}=2\log m/m$.
So the probability that some cube is empty is at most $N^d(1-2\log m/m)^m\le x^dm^{-2}=f_0/(2m\log m)\le1/m$; the last step uses $f_0\le1\le2\log m$, where $f_0\le1$ because $f$ integrates to~1 on the unit cube.
Suppose that no cube is empty.
Two points in the same cube are within distance $\sqrt d/N$, and two points in face-adjacent cubes are within distance $\sqrt{d+3}/N\le2\sqrt d/N$.
The face-adjacency graph of the cubes is connected, and every point lies in a cube.
So $G(X;2\sqrt d/N)$ is connected, and by (1), $\ell_{\max}\le2\sqrt d/N\le4\sqrt d/x$.
Call this event $A$; we have shown $\Prob(A^c)\le1/m$.

(3) \emph{Paths are long.}
On $A$, the tree path from $u$ to $v$ consists of $d_T(u,v)$ edges of length at most $\ell_{\max}$, and it covers the Euclidean distance $|X_u-X_v|$.
So $d_T(u,v)\ge|X_u-X_v|/\ell_{\max}\ge x|X_u-X_v|/(4\sqrt d)$, where $T=\EMST_m$.
Hence
\[
\E\,\Wien(\EMST_m)\ \ge\ \frac{x}{4\sqrt d}\,\E\Big[\ind_A\sum_{\{u,v\}}|X_u-X_v|\Big]\ \ge\ \frac{x}{4\sqrt d}\binom m2\Big(\mu_f-\frac{\sqrt d}m\Big),
\]
because $|X_u-X_v|\le\sqrt d$ and $\Prob(A^c)\le1/m$.
Dividing by $\binom m2$ gives the bound.

(4) \emph{The case $d=1$.}
For distinct points on a line, which occur almost surely under a continuous distribution, the minimum spanning tree is the path through the points in sorted order.
The path $P_m$ has $\Wien(P_m)=\sum_{i=1}^{m-1}i(m-i)=\binom{m+1}3$, so $\Wien(P_m)/\binom m2=(m+1)/3$.

Relative to \cref{prop:random}, the similarity-first amplification is larger by order $m^{2/3}$ in $d=1$ and at least order $m^{1/6}/\sqrt{\log m}$ in $d=2$.
Thus, under these embedding assumptions and for all sufficiently large $m$, similarity ordering within a cluster has higher first-order risk than random ordering and pivot ordering.
\end{proof}

For $d=2$, sharper amplification exponents would require finer control of planar MST path geometry; existing planar scaling results give bounds on path dimension \citep{aizenman1999scaling}.
For $d\ge3$ the bound $(m/\log m)^{1/d}$ is below $m^{1/3}$ and gives no comparison with random order.

\paragraph{Several clusters.}\label{app:proofs-multi}
There are $K$ latent clusters $C_1,\dots,C_K$ of sizes $s_1,\dots,s_K$, and $n=\sum_is_i$ items.
All $\binom n2$ pairs are processed in a uniformly random order $\pi$ by the union-find pipeline $U_\pi$ of \cref{app:proofs-er}.
A pair inside a cluster is a \emph{within} pair, and a pair between two clusters is a \emph{cross} pair.
The answer on a within pair is flipped to a false ``different'' with probability $\eps_w$, and the answer on a cross pair is flipped to a false ``same'' with probability $\eps_x$; all flips are independent.
The loss is the number of false-positive (FP) pairs, which the output wrongly puts together, plus the number of false-negative (FN) pairs, which the output wrongly separates.
We generate the uniform order by attaching independent $\mathrm{Unif}[0,1]$ arrival times to the pairs.
For $N\ge1$ and $p\in[0,1]$, $G(N,p)$ is the Erd\H{o}s--R\'enyi random graph on $N$ vertices, in which each pair of vertices is an edge independently with probability $p$.
As in \cref{app:proofs-er}, $\MST_s$ is the minimum spanning tree of $K_s$ with i.i.d.\ continuous edge weights, and $\Wien$ is the Wiener index.

For sizes $s_A$ and $s_B$, consider only two clusters $A$ and $B$, with all pairs of $A\cup B$ in uniformly random order.
All answers are correct except one cross pair $e=\{u,v\}$, with $u\in A$ and $v\in B$, which is answered ``same''.
Let $D_e$ be the loss, and let $g(s_A,s_B)=\E_\pi D_e$; by symmetry it does not depend on the choice of $e$.
We split $g=g_{\mathrm{FP}}+g_{\mathrm{FN}}$ into the expected numbers of FP and FN pairs, and we write $g(s)=g(s,s)$.

\begin{proof}[Proof of \cref{thm:multicluster}]
We show that, with $\eps=\max(\eps_w,\eps_x)$ and $N=\binom n2$,
\[
R_U(\eps_w,\eps_x)=\eps_w\sum_{i=1}^K\E\,\Wien(\MST_{s_i})+\eps_x\sum_{i<j}s_is_j\,g(s_i,s_j)+O(\eps^2),
\]
with a remainder of absolute value at most $\tfrac32\eps^2N^2(N-1)$.
For equal sizes $s$ this reads $R_U=\eps_wK\,\E\,\Wien(\MST_s)+\eps_x\binom K2s^2g(s)+O(\eps^2)$.

The pipeline is exact on the noiseless input, and its output is a function of the answers and of the order, which is independent of the answers.
By \cref{lem:reduction} with per-coordinate rates (end of its proof), it suffices to compute the single-error damages. We treat within-cluster and cross-cluster errors separately, since they create false splits and false merges by different mechanisms.

\emph{Within flip on a pair $e\subseteq C_i$.}
All cross answers are correct (``different''), so no block ever contains items of two clusters, since a merge needs a ``same'' answer.
A $\neq$ constraint between two blocks of the same cluster is created only by a query between them answered ``different'', that is, only by $e$.
Constraints between blocks of different clusters never affect a decision on a within pair.
Hence cluster $C_i$ evolves exactly as the single-cluster pipeline under the order $\pi$ restricted to its pairs, which is uniform, with one flip; and the other clusters end intact.
By \cref{thm:wiener} the damage is $n_e(s_i-n_e)\,\ind\{e\in T_{\pi|C_i}\}$, where $T_{\pi|C_i}$ is the Kruskal tree of the restricted order.
The sum of these damages over the within pairs of $C_i$ is $\Wien(T_{\pi|C_i})$, whose expectation is $\E\,\Wien(\MST_{s_i})$.

\emph{Cross flip on a pair $e$ between $C_i$ and $C_j$.}
Let $S=C_i\cup C_j$.
We use induction over time.
(a) Every block is contained in $S$ or disjoint from $S$, and every block disjoint from $S$ lies inside one cluster.
This holds because a merge of blocks from different clusters needs a ``same'' answer on a cross pair, and only $e\subseteq S$ has one.
(b) A $\neq$ constraint between two blocks inside $S$ is created only by a query of a pair inside $S$, and it is inherited only through merges inside $S$.
Pairs with an endpoint outside $S$ only create constraints that involve a block outside $S$, or merge two blocks of one cluster outside $S$; neither changes a decision on a pair inside $S$.
Therefore the state restricted to $S$ (its blocks and the constraints among them) changes only at the arrivals of pairs inside $S$, as a deterministic function of their relative order, which is uniform.
The clusters outside $S$ end intact, as in the within case without a flip, and they are separated from everything else.
So the expected loss is exactly $g(s_i,s_j)$.
Summing over the $s_is_j$ cross pairs between $C_i$ and $C_j$ gives the formula.
\end{proof}

\begin{lemma}[Anatomy of one false merge]\label{lem:anatomy}
In the two-cluster setting of $g$, let $\tau$ be the arrival time of $e$.
Let $P$ be the component of $u$ in the graph formed by the pairs inside $A$ that arrive before $\tau$, and let $Q$ be the component of $v$ in the graph formed by the pairs inside $B$ that arrive before $\tau$.
Let $q$ denote the event that $e$ is queried.
\begin{enumerate}
\item[1.] (Query event.) The pair $e$ is queried if and only if no pair of $P\times Q$ other than $e$ arrives before $\tau$. Hence $\Prob(q\mid\tau,P,Q)=(1-\tau)^{|P||Q|-1}$. If $e$ is not queried, then $D_e=0$.
\item[2.] (Final form.) If $e$ is queried, the final partition of $A\cup B$ is $\{X,\ A\setminus X_A,\ B\setminus X_B\}$, with empty sets dropped. Here $X$ is the final block of $u$, which contains $u$ and $v$, and $X_A=X\cap A$, $X_B=X\cap B$. With $a=|X_A|$ and $b=|X_B|$,
\[
D_e=\underbrace{ab}_{\mathrm{FP}}+\underbrace{a(s_A-a)+b(s_B-b)}_{\mathrm{FN}}.
\]
\item[3.] (Mechanism.) After $\tau$, when a pair arrives between the current block $R\neq X$ inside $A$ and the block $X$, and the two blocks carry no constraint, the pair is queried. If it lies inside $A$, it is answered ``same'' and $R$ joins $X$. If it is a cross pair between $R$ and $X_B$, it is answered ``different'' (correctly), and $R$, together with every block that later merges with $R$, stays outside $X$, because the $\neq$ constraint is inherited. The same holds with $A$ and $B$ exchanged. A $\neq$ constraint inherited from before $\tau$, created by a cross pair between $R$ and $Q$ that arrived before $\tau$, excludes $R$ from the start.
\item[4.] (Bounds on the damage.) On the query event, $\max(s_A,s_B)-1\le D_e\le\binom{s_A+s_B}2$. For $s_A=s_B=s$, $s-1\le D_e\le s^2$.
\end{enumerate}
\end{lemma}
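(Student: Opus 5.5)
The plan is to describe the run exactly up to time $\tau$, when only correct answers have been read, and then track the single block $X$ that absorbs the wrong merge. Throughout I use \cref{lem:ufgreedy}: blocks are the equality-components of the accepted answers, and a $\neq$ constraint joins two blocks exactly when some accepted ``different'' answer joins them.

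\emph{Part 1.} Before $\tau$ every answer read is correct, so every ``same'' answer read is on a within pair. I will show by induction over arrivals that the blocks before $\tau$ are the components of the within pairs arrived so far. A within pair that arrives is either queried and merges its blocks, or is skipped because its ends already share a block. It cannot be blocked by a constraint, since every constraint comes from a correct ``different'' answer on a cross pair, and every block is pure.

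Similarly, two pure blocks $R\subseteq A$ and $R'\subseteq B$ carry a constraint at time $t$ if and only if some cross pair of $R\times R'$ arrived before $t$. Such a pair is either queried and answered ``different'', or skipped because a constraint already existed; the constraint is inherited through merges.

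At time $\tau$ the blocks of $u$ and $v$ are therefore $P$ and $Q$. Hence $e$ is queried if and only if no pair of $P\times Q\setminus\{e\}$ arrived earlier. The sets $P$ and $Q$ are determined by the within arrival times, which are independent of the cross arrival times. Given $\tau$, each of the $|P||Q|-1$ other cross pairs precedes $\tau$ independently with probability $\tau$, which gives $(1-\tau)^{|P||Q|-1}$. If $e$ is skipped, all accepted answers are true, so \cref{lem:welldef}(ii) gives output $x^\star$ and $D_e=0$.

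\emph{Parts 2 and 3.} Once $e$ is accepted, it is the only accepted cross ``same''. Consequently every final block other than $X$ lies inside $A$ or inside $B$.

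The key claim is that two pure blocks of the same cluster never carry a constraint. A constraint originates from a ``different'' answer. Within answers are all ``same'', so it must come from a cross pair, and a cross pair cannot have both ends in one pure cluster. A constraint created against a block that later becomes part of $X$ stays attached to $X$, not to another pure block.

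So whenever a within pair between two pure $A$-blocks arrives, it is queried and merges them. Every such pair arrives eventually, so all of $A\setminus X_A$ ends as a single block, and likewise for $B$. This gives the final form in Part 2. Counting the disagreeing pairs gives $ab$ false merges (FP) inside $X$, and $a(s_A-a)+b(s_B-b)$ false splits (FN) between $X$ and the remainders. Part 3 is a direct reading of the decoder's skip/query rule on pairs between $R$ and $X$, together with inheritance of constraints. This includes constraints created before $\tau$ by cross pairs between $R$ and $Q$, since $Q\subseteq X$.

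\emph{Part 4.} The upper bound $\binom{s_A+s_B}{2}$ is trivial, since $D_e$ counts wrong pairs in $A\cup B$. For equal sizes, the function $ab+as-a^2+bs-b^2$ is a concave quadratic. Its stationarity conditions are $a=(b+s)/2$ and $b=(a+s)/2$, which give the maximizer $a=b=s$ and value $s^2$.

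For the lower bound, take $s_A\ge s_B$ by symmetry and write $D_e\ge a(s_A-a+b)$ with $1\le a\le s_A$ and $b\ge1$. The right side is concave in $a$, so its minimum is at an endpoint. At $a=1$ it equals $s_A-1+b\ge s_A$; at $a=s_A$ it equals $s_Ab\ge s_A$. This gives $D_e\ge\max(s_A,s_B)-1$, and indeed slightly more.

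The step I expect to need the most care is the no-constraint claim in Part 2, namely that the remainders coalesce. It requires a clean invariant on which block pairs can ever carry constraints, stated and proved by induction alongside the block structure.
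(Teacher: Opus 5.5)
Your proposal is correct and follows essentially the same route as the paper's proof. It uses the same induction identifying the pre-$\tau$ blocks with within-cluster components and constraints with arrived cross pairs, the same conditional-independence computation, the same observation that no constraint can join two pure blocks of one cluster (so $A\setminus X_A$ and $B\setminus X_B$ each coalesce), and the same concavity bound $s^2$. The only difference is in Part 4, where your concavity-in-$a$ argument replaces the paper's case split into $a<s_A$ and $a=s_A$, and it gives the slightly stronger bound $D_e\ge\max(s_A,s_B)$.
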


\begin{proof}
1. Before $\tau$ all answers are correct.
Two blocks of the same cluster never carry a constraint, because all within answers are ``same''.
So the blocks are the components of the within-cluster graphs of the pairs that have arrived.
By induction over time, a block $P'$ inside $A$ and a block $Q'$ inside $B$ carry a constraint if and only if some pair of $P'\times Q'$ has arrived.
Indeed, when such a pair arrives, the blocks of its two endpoints (which lie inside $P'$ and $Q'$) either already carry a constraint, or the pair is queried and answered ``different'', which creates one; and constraints persist under merges.
Conversely, a constraint is created only by a query on an arrived pair between the two blocks involved.
At time $\tau$ the blocks of $u$ and $v$ are $P$ and $Q$, and $e$ is queried if and only if they carry no constraint.
Given $\tau$ and the within-cluster graphs, the $|P||Q|-1$ cross pairs of $P\times Q$ other than $e$ arrive before $\tau$ independently, each with probability $\tau$.
If $e$ is skipped, every accepted answer is correct, and the output is the truth.

2. After $\tau$, the only block that contains items of both clusters is $X$, because no other cross answer is ``same''.
Two blocks inside $A$ and outside $X$ never carry a constraint against each other: such a constraint needs a query between them answered ``different'', but all answers on pairs inside $A$ are ``same''.
So all items of $A$ outside $X$ end in one block, since the first arriving pair between two such blocks merges them; the same holds for $B$; and $X$ never splits.
The loss formula counts the pairs of $X_A\times X_B$ (wrongly together) and the pairs of $X_A\times(A\setminus X_A)$ and $X_B\times(B\setminus X_B)$ (wrongly apart).

3. This is the rule of the pipeline applied to the blocks $R$ and $X$, together with the inheritance of constraints.

4. We have $a,b\ge1$.
If $a<s_A$, then $a(s_A-a)\ge s_A-1$; if $a=s_A$, then $ab\ge s_A$.
In both cases $D_e\ge s_A-1$, and the same argument gives $D_e\ge s_B-1$.
The upper bound $\binom{s_A+s_B}2$ is the number of pairs.
For equal sizes, $ab+a(s-a)+b(s-b)=s(a+b)-a^2-b^2+ab\le s^2$, since on $[0,s]^2$ this concave function is maximized at $a=b=s$.
\end{proof}

\begin{lemma}[Disconnection tail]\label{lem:disconnect}
For two fixed distinct vertices $u$ and $w$ of $G(N,p)$, let $h_N(p)$ be the probability that $u$ and $w$ lie in different components.
Then $h_N(p)\le e^5e^{-Np/2}$ for all $N\ge2$ and $p\in[0,1]$.
\end{lemma}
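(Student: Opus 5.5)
The plan is a first-moment (union) bound over the possible component of $u$, where each candidate component must be counted as a connected set. Requiring connectedness is essential: a crude bound over all cuts $S\ni u$, $w\notin S$ gives $\sum_k\binom{N-2}{k-1}(1-p)^{k(N-k)}$, and for $Np$ of constant order and large $N$ this sum blows up. We therefore split into a trivial range and a main range of $x:=Np$.

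\emph{Trivial range.} If $x\le10$, then $e^5e^{-x/2}\ge1\ge h_N(p)$, so there is nothing to prove. From now on assume $x>10$.

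\emph{Reduction to small components.} If $u$ and $w$ lie in different components, the two components are disjoint. Hence at least one of them, say the component $S$ of $z\in\{u,w\}$, has $k=|S|\le N/2$. By symmetry in $u$ and $w$, it therefore suffices to bound, for fixed $z$ and the other vertex $z'$, the probability that the component of $z$ has size $k\le N/2$ and avoids $z'$. The final bound then picks up a factor~2.

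\emph{Main estimate.} Fix a set $S\ni z$ of size $k$ with $z'\notin S$. For $S$ to be the component of $z$, two things must hold. First, $S$ must contain a spanning tree, and by Cayley's formula and a union bound this has probability at most $k^{k-2}p^{k-1}$. Second, no edge may join $S$ to its complement, which is an event on disjoint pairs and has probability $(1-p)^{k(N-k)}\le e^{-pkN/2}$ for $k\le N/2$. There are at most $\binom{N-2}{k-1}\le N^{k-1}/(k-1)!$ choices of $S$. Using $k^{k-2}/(k-1)!\le e^{k}$ (from Stirling, $k^{k}/k!\le e^k$), the $k$-th term is at most
\[
e\,e^{-x/2}\big(e\,x\,e^{-x/2}\big)^{k-1}.
\]
For $x\ge10$ the ratio satisfies $e\,x\,e^{-x/2}\le 10e^{-4}<\tfrac12$, because $xe^{-x/2}$ is decreasing for $x\ge2$. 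The geometric series over $k\ge1$ is then at most $2e\,e^{-x/2}$. With the symmetry factor~2 this gives $h_N(p)\le 4e\,e^{-x/2}\le e^5e^{-Np/2}$.

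\emph{Main obstacle.} The main obstacle is exactly the choice to count connected components through spanning trees rather than arbitrary cuts, together with the bookkeeping of the combinatorial constant $k^{k-2}/(k-1)!\le e^k$. A cut-only union bound fails for large $N$ at bounded $Np$. The threshold $x=10$ is chosen so that both the trivial range and the geometric ratio close with the constant $e^5$. The only other point to check is that the spanning-tree event and the no-crossing-edge event are independent, which holds because they involve disjoint sets of pairs.
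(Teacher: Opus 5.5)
Your proof is correct and takes essentially the same route as the paper. Both arguments use the factor~2 from the smaller of the two disjoint components and a union bound over $k$-sets carrying a Cayley spanning tree with no edges to the complement. Both then obtain a geometric series with ratio $e\,Np\,e^{-Np/2}$ once $Np\ge10$, and treat $Np<10$ as trivial. The differences are only in the constants: you exclude the other vertex, giving $\binom{N-2}{k-1}$ instead of $\binom{N-1}{k-1}$, and you bound the ratio by $\tfrac12$ rather than by $0.19$.
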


\begin{proof}
Two vertices in different components cannot both lie in components with more than $N/2$ vertices.
So $h_N(p)\le2\,\Prob(|C(u)|\le N/2)$, where $C(u)$ is the component of $u$.
If $|C(u)|=k\le N/2$, then some $k$-set $S\ni u$ contains a spanning tree of present edges and has no present edge to its complement; there are $k(N-k)\ge kN/2$ pairs between $S$ and its complement.
For a fixed set $S$ and a fixed tree on $S$, these two events concern disjoint sets of pairs, so they are independent.
There are $\binom{N-1}{k-1}$ choices of $S$ and, by Cayley's formula, $k^{k-2}$ trees on $S$.
A union bound gives
\[
\Prob(|C(u)|=k)\ \le\ \binom{N-1}{k-1}k^{k-2}p^{k-1}(1-p)^{kN/2}\ \le\ e^k(Np)^{k-1}e^{-kNp/2},
\]
where we used $\binom{N-1}{k-1}k^{k-2}\le N^{k-1}k^{k-1}/k!\le N^{k-1}e^k$ and $1-p\le e^{-p}$.
The $k$-th term equals $e\,e^{-Np/2}\,x^{k-1}$ with $x=e\,Np\,e^{-Np/2}$.
Suppose that $Np\ge10$.
The map $y\mapsto ye^{-y/2}$ decreases for $y\ge2$, so $x\le10e\cdot e^{-5}\le0.19$, and $\sum_{k\ge1}x^{k-1}\le1/0.81\le1.3$.
Hence $\Prob(|C(u)|\le N/2)\le1.3\,e\,e^{-Np/2}$ and $h_N(p)\le2.6\,e\,e^{-Np/2}\le e^5e^{-Np/2}$.
If $Np<10$, the right-hand side $e^5e^{-Np/2}$ exceeds~1, and the bound is trivial.
\end{proof}

\begin{proof}[Proof of \cref{prop:gbounds}]
For $s_A=s_B=s\ge2$, we bound the chance that the wrong answer is queried and then combine it with the damage conditional on that event. This gives
\[
\frac1{2s-1}\ \le\ \Prob(q)\ \le\ \frac{12.4}{s+1}+\frac12\Big(\frac9{16}\Big)^{s-1},\qquad
\frac{s-1}{2s-1}\ \le\ g(s)\ \le\ 13\,s,
\]
and that consequently, for equal sizes, the cross-error term $\eps_x\binom K2s^2g(s)$ of \cref{thm:multicluster} lies between $\frac13\eps_x\binom K2s^2$ and $13\,\eps_x\binom K2s^3$.

\emph{Lower bounds.}
If $u$ is isolated in the graph of the pairs inside $A$ that arrived before $\tau$, and $v$ is isolated in the corresponding graph inside $B$, then $P\times Q=\{e\}$, and $e$ is queried by \cref{lem:anatomy}(1).
Given $\tau$, this has probability $(1-\tau)^{2(s-1)}$, and $\int_0^1(1-\tau)^{2(s-1)}d\tau=1/(2s-1)$.
By \cref{lem:anatomy}(1) and~(4), $D_e=0$ off the query event and $D_e\ge s-1$ on it.
So $g(s)\ge(s-1)\Prob(q)\ge(s-1)/(2s-1)$.

\emph{Reduction to one component.}
Given $\tau$, the graphs of the pairs inside $A$ and inside $B$ that arrived before $\tau$ are independent copies of $G(s,\tau)$.
So $P$ and $Q$ are independent, and each is distributed as the component $C_\tau$ of a fixed vertex of $G(s,\tau)$.
Since $(|P|-1)(|Q|-1)\ge0$, we have $|P||Q|-1\ge(|P|-1)+(|Q|-1)$.
With \cref{lem:anatomy}(1) this gives
\[
\Prob(q\mid\tau)=\E\,(1-\tau)^{|P||Q|-1}\ \le\ \Big(\E\,(1-\tau)^{|C_\tau|-1}\Big)^2.
\]

\emph{Extra-vertex identity.}
Add a vertex $w$ to $G(s,\tau)$ and join it to each of the $s$ vertices independently with probability $\tau$; the result is a copy of $G(s+1,\tau)$.
The vertices $u$ and $w$ are connected in it if and only if $w$ has a neighbor in $C_\tau(u)$, the component of $u$ in $G(s,\tau)$.
On a simple path from $u$ to $w$, the vertex $w$ occurs only at the end; its predecessor is connected to $u$ without $w$ and therefore lies in $C_\tau(u)$.
Given $C_\tau(u)$, the vertex $w$ has no neighbor in it with probability $(1-\tau)^{|C_\tau(u)|}$.
Therefore
\begin{equation}\label{eq:extravertex}
\E\,(1-\tau)^{|C_\tau|}=h_{s+1}(\tau),
\qquad\text{and so}\qquad
\E\,(1-\tau)^{|C_\tau|-1}=\frac{h_{s+1}(\tau)}{1-\tau}\quad(\tau<1),
\end{equation}
with $h_{s+1}$ as in \cref{lem:disconnect}.

\emph{Upper bound on $\Prob(q)$.}
For $\tau\le\frac12$, \eqref{eq:extravertex} and \cref{lem:disconnect} give $\E(1-\tau)^{|C_\tau|-1}\le2e^5e^{-(s+1)\tau/2}$.
Hence $\Prob(q\mid\tau)\le\min\{1,\,4e^{10}e^{-(s+1)\tau}\}$.
For $\alpha\ge1$ and $\lambda>0$ we have $\int_0^\infty\min\{1,\alpha e^{-\lambda\tau}\}d\tau=(1+\ln\alpha)/\lambda$.
So $\int_0^{1/2}\Prob(q\mid\tau)\,d\tau\le(1+\ln(4e^{10}))/(s+1)=(11+\ln4)/(s+1)\le12.4/(s+1)$.
For $\tau>\frac12$ we use $h_{s+1}(\tau)\le(1-\tau)(1-\tau^2)^{s-1}$.
This holds because, for $u$ and $w$ to be disconnected, the edge $uw$ must be absent and, for each of the other $s-1$ vertices $x$, the path $u$--$x$--$w$ must be absent; these $s$ events are independent.
So $\E(1-\tau)^{|C_\tau|-1}\le(1-\tau^2)^{s-1}$, and $\Prob(q\mid\tau)\le(1-\tau^2)^{2(s-1)}\le(3/4)^{2(s-1)}=(9/16)^{s-1}$.
Integrating over $(\frac12,1]$ gives at most $\frac12(9/16)^{s-1}$.
Adding the two parts proves the upper bound on $\Prob(q)$.

\emph{Upper bound on $g$.}
By \cref{lem:anatomy}(4), $D_e\le s^2$ on the query event, so $g(s)\le s^2\Prob(q)\le s^2\big[12.4/(s+1)+\frac12(9/16)^{s-1}\big]$.
This is at most $13s$ for all $s\ge2$.
Indeed, $s\big[12.4/(s+1)+\frac12(9/16)^{s-1}\big]$ equals about $8.83$, $9.77$ and $10.28$ at $s=2,3,4$.
For $s\ge4$ it is at most $12.4+0.36<13$, because $s(9/16)^{s-1}$ decreases in $s$ and equals about $0.71$ at $s=4$.

\emph{Consequence.}
For $s\ge2$ we have $(s-1)/(2s-1)\ge\frac13$, and $s^2g(s)\le13s^3$; this gives the bounds on the cross-error term.
\end{proof}

\paragraph{Limit of the query probability.}
For $c>0$ let $\theta_c$ be the largest solution in $[0,1)$ of $\theta=1-e^{-c\theta}$; thus $\theta_c=0$ for $c\le1$ and $\theta_c>0$ for $c>1$.
For $\mu\in(0,1)$, the $\mathrm{Borel}(\mu)$ law is the law of the total progeny of a Galton--Watson tree with Poisson($\mu$) offspring, $\Prob(K=k)=e^{-\mu k}(\mu k)^{k-1}/k!$ for $k\ge1$.
Fix $c>0$ and let $C_s$ be the component of a fixed vertex in $G(s,c/s)$.
The next proposition uses three standard facts about the Erd\H{o}s--R\'enyi graph \citep[Chs.~3--4]{vanderhofstad2017random}.
\begin{itemize}
\item[(E1)] As $s\to\infty$, $|C_s|$ converges in distribution to the total progeny $T_c\in\{1,2,\dots\}\cup\{\infty\}$ of a Galton--Watson tree with Poisson($c$) offspring, and $\Prob(T_c=\infty)=\theta_c$.
\item[(E2)] For $c>1$, every $M_s\to\infty$ with $M_s=o(s)$ and every $\eta>0$, $\Prob\big(|C_s|>M_s\text{ and }\big||C_s|-\theta_cs\big|>\eta s\big)\to0$: a component that is not small is the giant.
\item[(E3)] For $c>1$, the law of $T_c$ conditioned on $T_c<\infty$ is the $\mathrm{Borel}(\mu_c)$ law with the dual parameter $\mu_c=c(1-\theta_c)$.
\end{itemize}

\begin{proposition}[Limit of the query probability]\label{prop:cq}
Assume (E1)--(E3).
In the two-cluster setting with $s_A=s_B=s$,
\[
\lim_{s\to\infty}s\,\Prob(q)=C_q:=\int_0^\infty f(c)\,dc,\qquad
f(c)=\begin{cases}1,& c\le1,\\[2pt](1-\theta_c)^2+2\theta_c(1-\theta_c)\,\E\,e^{-c\theta_cK},\ \ K\sim\mathrm{Borel}(\mu_c),& c>1.\end{cases}
\]
Numerically, $C_q=1.42864$ (\cref{app:numerics}).
\end{proposition}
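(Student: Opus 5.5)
Starting from \cref{lem:anatomy}(1), I would write $\Prob(q)=\int_0^1\E\,(1-\tau)^{|P||Q|-1}\,d\tau$. Here, given $\tau$, the components $P$ and $Q$ are independent copies of the component $C_\tau$ of a fixed vertex in $G(s,\tau)$. Substituting $\tau=c/s$ gives
\[
s\,\Prob(q)=\int_0^s \E\Big[\big(1-\tfrac cs\big)^{|P_s||Q_s|-1}\Big]\,dc,
\]
with $P_s,Q_s$ i.i.d.\ copies of $C_s$ in $G(s,c/s)$. The plan has two parts: identify the pointwise limit of the integrand as $f(c)$, and justify exchanging the limit with the integral by dominated convergence.

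For the pointwise limit, fix $c\neq1$ and split according to whether each of $P_s,Q_s$ is small or giant.
\begin{itemize}
\item If both are small, (E1) gives $|P_s||Q_s|=O_p(1)$, so $(1-c/s)^{|P||Q|-1}\to1$. The probability of this case tends to $(1-\theta_c)^2$.
\item If one is small, of size $K$, and the other is the giant, (E2) gives the giant size $\theta_c s(1+o_p(1))$. The factor is then $\approx e^{-c\theta_c K}$, where by (E3) the small size $K$ has the conditional law $\mathrm{Borel}(\mu_c)$. The two symmetric assignments together contribute $2\theta_c(1-\theta_c)\E e^{-c\theta_cK}$.
\item If both are giant, the exponent is of order $s^2$ and the contribution vanishes.
\end{itemize}
For $c<1$ we have $\theta_c=0$, so $|P||Q|$ is tight and the limit is $1$.

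To make the small/giant split rigorous, I would pick a slowly growing cutoff $M_s$ and use (E2). The event $\{|C_s|\le M_s\}$ has probability tending to $\Prob(T_c<\infty)=1-\theta_c$, because $\Prob(M_s<T_c<\infty)\to0$. On that event the law of $|C_s|$ converges to that of $T_c$ conditioned on $T_c<\infty$. Bounded convergence, since all integrands lie in $[0,1]$, then gives the expectation. The single point $c=1$ has measure zero and can be ignored.

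For domination, the integrand is at most $1$, so $[0,c_0]$ is harmless for any fixed $c_0$. For the tail I would reuse the proof of \cref{prop:gbounds}. From $|P||Q|-1\ge(|P|-1)+(|Q|-1)$ and the extra-vertex identity \eqref{eq:extravertex}, the integrand is at most $\big(h_{s+1}(c/s)/(1-c/s)\big)^2$. By \cref{lem:disconnect}, for $c\le s/2$ this is at most $4e^{10}e^{-c(s+1)/s}\le 4e^{10}e^{-c}$, which is integrable on $[0,\infty)$. The remaining range $c\in(s/2,s]$ contributes at most $s\cdot\frac12(9/16)^{s-1}\to0$, by the bound already used in \cref{prop:gbounds} after rescaling. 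Dominated convergence then yields $s\,\Prob(q)\to\int_0^\infty f(c)\,dc$.

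The numerical value $1.42864$ would come from evaluating $f$ by solving $\theta=1-e^{-c\theta}$ and summing the Borel series. I expect the main obstacle to be the giant–small case. There I must show that $(1-c/s)^{|P||Q|-1}$ converges to $e^{-c\theta_cK}$ jointly, which needs the independence of $P$ and $Q$ together with the concentration in (E2). The both-giant case must also be handled: there I would bound the integrand by $(1-c/s)^{\eta^2 s^2}\to0$ on the event that both sizes exceed $\eta s$.
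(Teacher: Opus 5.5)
Your proposal is correct and follows essentially the paper's proof. It uses the same substitution $\tau=c/s$ in $\Prob(q)=\int_0^1\E(1-\tau)^{|P||Q|-1}\,d\tau$, the same three-way small/giant split via (E1)--(E3), and the same domination taken from the proof of \cref{prop:gbounds}: the bound $4e^{10}e^{-c}$ on $[0,s/2]$ and the $\frac{s}{2}(9/16)^{s-1}$ tail on $(s/2,s]$. The only difference is cosmetic: you use a slowly growing cutoff $M_s$, which exists by a diagonal argument, whereas the paper fixes $M$, lets $s\to\infty$ first and then $M\to\infty$.
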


\begin{proof}
By \cref{lem:anatomy}(1), $\Prob(q)=\int_0^1\Prob(q\mid\tau)\,d\tau$ with $\Prob(q\mid\tau)=\E(1-\tau)^{|P||Q|-1}$, where $P$ and $Q$ are independent copies of the component of a fixed vertex in $G(s,\tau)$.
The substitution $\tau=c/s$ gives $s\,\Prob(q)=\int_0^s\Prob(q\mid\tau=c/s)\,dc$.

\emph{Pointwise limit.}
Fix $c$. We separate the cases in which the two endpoint components remain finite, one grows to the giant component, or both do; these cases determine whether an earlier cross pair blocks the erroneous query.
Fix $M$ and split on $\{|P|\le M\}$ and $\{|P|>M\}$, and likewise for $Q$; we let $s\to\infty$ first and then $M\to\infty$. By (E1), $\Prob(|P|\le M)\to\Prob(T_c\le M)$, which tends to $1-\theta_c$ as $M\to\infty$; by (E2), on $\{|P|>M\}$ the component is the giant, of size $(\theta_c+o_P(1))s$, up to an event whose probability vanishes in this double limit. The same holds for $Q$, independently of $P$.
If both are finite in the limit (always the case when $c\le1$), then $|P||Q|$ is bounded in probability and $(1-c/s)^{|P||Q|-1}\to1$.
If exactly one is the giant and the other has limit size $K$, then $(1-c/s)^{|P||Q|-1}\to e^{-c\theta_cK}$, and by (E3) $K$ has the $\mathrm{Borel}(\mu_c)$ law.
If both are giants, then $|P||Q|$ is of order $s^2$ and $(1-c/s)^{|P||Q|-1}\to0$.
The three cases have limit probabilities $(1-\theta_c)^2$, $2\theta_c(1-\theta_c)$ and $\theta_c^2$.
The variables $(1-c/s)^{|P||Q|-1}$ lie in $[0,1]$, so their expectations converge, and $\Prob(q\mid\tau=c/s)\to f(c)$.

\emph{Domination.}
For $c\le s/2$, that is, for $\tau\le\frac12$, the proof of \cref{prop:gbounds} gives
\[
\Prob(q\mid\tau=c/s)\ \le\ \min\{1,\,4e^{10}e^{-(s+1)c/s}\}\ \le\ \min\{1,\,4e^{10}e^{-c}\},
\]
and the right-hand side is integrable on $[0,\infty)$.
The same proof also gives $\int_{s/2}^s\Prob(q\mid\tau=c/s)\,dc=s\int_{1/2}^1\Prob(q\mid\tau)\,d\tau\le\frac s2(9/16)^{s-1}\to0$.
Dominated convergence on $[0,s/2]$ finishes the proof.
\end{proof}

Exact small cases are in \cref{tab:gexact}, and Monte Carlo estimates for \cref{conj:contamination} are in \cref{tab:gmc}.

\begin{proof}[Proof of \cref{prop:pivotmulti}]
We show that the node-ordering (pivot) pipeline of \citet{vesdapunt2014crowdsourcing}, with $K$ clusters of size $s$, has
\[
R_{\mathrm{piv}}=\eps_wK(s-1)^2+\eps_x\binom K2s(2s-1)+O(\eps^2).
\]
In this pipeline the items arrive in uniformly random order.
A new item is compared with each existing block's representative (its first item), following the order in which the blocks were created. It joins the first block that answers ``same''; if all answers are ``different'', it starts a new block.

\emph{Greedy form.}
Each comparison of an item with a representative settles the relation of the item to the whole block.
So the pipeline is a greedy decoder on the partition code with an adaptive order.
Its output is a function of the answers and of the arrival order, which is independent of the answers, and with correct answers it returns the truth.
So \cref{lem:reduction}, with per-coordinate rates, applies.
With only one wrong answer, the run agrees with the noiseless run until that pair is compared. It therefore has the same probability of being queried as in the noiseless run.

\emph{Numbers of comparisons.}
Clusters are created in a uniformly random order.
In the noiseless run, an item of the cluster created $\rho$-th is compared with exactly the $\rho-1$ earlier representatives before it reaches its own representative (or before it creates its cluster).
Since $\rho$ is uniform on $\{1,\dots,K\}$, the expected number of cross comparisons is $n(K-1)/2$.
Every item except the $K$ founders makes exactly one within comparison, so there are $n-K$ within comparisons.

\emph{Damage of a false merge.}
Suppose that the only wrong answer is a false ``same'' between an item $i\in A$ and the representative of the block of another cluster $B$.
Then $i$ joins the block of $B$ and remains there for the rest of the run.
Later items of $A$ are compared with representatives, never with $i$, and the block of $B$ collects all of $B$.
So the damage is $s$ FP pairs ($i$ with the items of $B$) plus $s-1$ FN pairs ($i$ with the other items of $A$), that is, $2s-1$.
This also holds when $i$ is the first item of $A$: the next item of $A$ then creates the block of $A$.

\emph{Damage of a false split.}
Suppose that the only wrong answer is a false ``different'' between an item $i$ and the representative of its own block.
Then $i$ continues with the later representatives, all of which answer ``different'' correctly, and it creates a singleton block.
Later items of its cluster meet the older representative first, receive ``same'', and never reach the block of $i$.
So the damage is $s-1$ FN pairs.

\emph{Summation.}
By \cref{lem:reduction}, $R_{\mathrm{piv}}=\eps_x\cdot\frac{n(K-1)}2\cdot(2s-1)+\eps_w(n-K)(s-1)+O(\eps^2)$ with $n=Ks$.
Since $\frac{n(K-1)}2=\binom K2s$ and $(n-K)(s-1)=K(s-1)^2$, this is the claim.
\end{proof}

\subsection{Pivot-based sorting}\label{app:proofs-orders}

There are $n$ keys, which we identify with their true ranks $1,\dots,n$.
The order code has one coordinate for each of the $\binom n2$ pairs of keys, and its codewords are the $n!$ total orders.
The answer on a pair is its orientation, that is, which of the two keys is smaller.
Each answer is wrong with probability $p$, independently over the pairs.
The loss is the number of pairs that the output order inverts; this is the Hamming distance on the order code.
The raw decoder has risk $p\binom n2$.
Randomized quicksort with noisy comparisons chooses a pivot $q$ uniformly at random from the current list and compares it with every other key of the list, reading the answer on each such pair.
It sends each key to the left or to the right sublist according to the answer, sorts the two sublists recursively, and outputs the left sublist, the pivot and the right sublist, in this order.
We write $H_n=\sum_{k=1}^n1/k$ for the $n$-th harmonic number.

\citet{alonso2004quicksort} established $\E I(n,p)\sim n^2p$ for quicksort with independent comparison errors in the regime $n\to\infty$, $p\to0$ and $np\to\infty$ (their Theorem~2.2 and Remark~2.8).
Below, we derive the exact fixed-$n$ first-order constant in \cref{prop:quicksort} by counting single-error damage and using the classical comparison probabilities.

\begin{proposition}[Quicksort]\label{prop:quicksort}
With comparisons wrong independently with probability $p$, the expected number of inversions of randomized quicksort is $p\,(n^2+3n-2(n+1)H_n)+O(p^2)$, where $H_n$ is the $n$-th harmonic number; the amplification tends to 2.
\end{proposition}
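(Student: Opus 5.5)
We would apply \cref{lem:reduction} to randomized quicksort, viewed as a decoder on the order code whose seed is the sequence of pivot choices. The seed is independent of the answers, and with all answers correct quicksort returns the true order. The first-order constant is therefore $\sum_{\{i,j\}}D(\{i,j\})$, where $D(\{i,j\})$ is the expected number of inversions when only the answer on pair $\{i,j\}$ is wrong. Part~(ii) of the lemma's proof, and the remainder bound \eqref{eq:remainder}, carry over verbatim, since they only use exactness on the noiseless input and independence of the seed.

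\emph{Step 1: when the wrong pair matters.} With a single wrong answer on $\{i,j\}$, $i<j$, the run coincides with the noiseless run until that pair is read, if it ever is. So the pair is read with the classical probability $2/(j-i+1)$: this is the event that the first pivot chosen among the ranks $i,\dots,j$ is $i$ or $j$. If it is never read, all answers actually used are correct and the damage is $0$.

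\emph{Step 2: damage when read.} Suppose the pair is read while $i$ is the pivot of the current list; the case with $j$ as pivot is symmetric. The current list is the contiguous rank interval $[l,r]$ in which $i$ is the first pivot. The flipped answer sends $j$ to the left sublist, where every other key is smaller than $i$. All later comparisons are correct, so the left sublist is sorted correctly, with $j$ as its maximum, immediately before $i$. The right sublist is sorted correctly without $j$. The resulting inverted pairs are exactly $j$ with $i$ and $j$ with each key of rank strictly between $i$ and $j$. These keys all lie in $[l,r]$, and no other pair is inverted. Hence the damage equals exactly $j-i$, deterministically, whichever endpoint is the pivot.

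\emph{Step 3: summation.} This gives
\[
\kappa=\sum_{d=1}^{n-1}(n-d)\,\frac{2d}{d+1}.
\]
Writing $\frac{d}{d+1}=1-\frac1{d+1}$ and using $\sum_{d=1}^{n-1}\frac{n-d}{d+1}=(n+1)(H_n-1)-(n-1)$, we get $\kappa=n^2+3n-2(n+1)H_n$. Dividing by $\binom n2$ gives an amplification of $2-O(\log n/n)\to2$.

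The main obstacle is Step~2. We must justify that, after the erroneous split, no other inversion arises. This relies on all subsequent answers being correct and on the current list being exactly a contiguous rank interval containing every key between $i$ and $j$; both hold because all earlier splits were correct.
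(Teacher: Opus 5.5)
Your proposal is correct and follows the paper's proof essentially step for step: you apply the reduction lemma, use the comparison probability $2/(j-i+1)$, show that the misrouted key becomes the extreme element of the wrong sublist for a deterministic damage of $j-i$, and evaluate the same harmonic sum. The only difference is cosmetic. You invoke \cref{lem:reduction} directly for any decoder that is exact on noiseless input and has an answer-independent seed, whereas the paper first identifies quicksort as a greedy decoder with an adaptive pivot-first order. That identification serves the framing but is not needed for the risk expansion.
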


\begin{proof}[Proof of \cref{prop:quicksort}]
We first identify randomized quicksort as a greedy decoder with an adaptive, pivot-first order. We then show that a single wrong comparison of key $i$ with pivot $q$ costs $|i-q|$ inversions. Weighting these costs by their comparison probabilities gives
\[
R_{\mathrm{QS}}(p)=p\,\big(n^2+3n-2(n+1)H_n\big)+O(p^2),\qquad
\frac{n^2+3n-2(n+1)H_n}{\binom n2}\longrightarrow2\quad(n\to\infty),
\]
so that the amplification over the raw decoder tends to~2.

\emph{Quicksort as a greedy decoder.}
The accepted answers compare each chosen pivot with the keys in its current sublist.
They arrange the keys into a sequence of current sublists separated by pivots.
Every total order that keeps this left-to-right arrangement, and orders each current sublist arbitrarily, is consistent with all answers read so far.
Hence the relation between two keys of the same current sublist is not implied, and quicksort queries such a pair if one of the two keys becomes the pivot while both are still in that sublist.
After a pivot $q$ has been compared with every key of its sublist, every pair $(l,r)$ with $l$ and $r$ on opposite sides of $q$ is implied by transitivity through $q$, and it is skipped.
More generally, a pair that quicksort never compares is split by some pivot that was compared with both of its keys, so its relation is implied by transitivity.
Pairs inside a sublist are processed later, in the recursion.
So quicksort accepts the non-implied pairs and skips the implied ones, in an adaptive order: first the pivot against the keys of its sublist, then the pairs across the two new sublists, then recursively inside each sublist.
This is the greedy decoder with an adaptive pivot-first order, and its output is the unique total order consistent with all accepted answers.
Pivots are selected by i.i.d.\ uniform priorities on the keys (the pivot of a sublist is its key of smallest priority), which are independent of the answers; the output is a function of the answers and the priorities, and with correct answers quicksort returns the true order.
So \cref{lem:reduction} applies.

\emph{Damage of a single wrong comparison.}
Suppose that the only wrong answer is on the comparison of key $i$ with pivot $q$, say with $i<q$, so that $i$ is sent to the right sublist.
All other comparisons are correct.
So the right sublist, which contains the keys larger than $q$ together with $i$, is sorted correctly, and $i$ becomes its minimum, placed immediately after $q$.
The left sublist is also sorted correctly.
The inverted pairs are $(i,q)$ and $(i,k)$ for $i<k<q$, that is, $q-i$ pairs.
The case $i>q$ is symmetric, with $i-q$ inverted pairs.
An answer on a pair that quicksort does not compare is never read, and then the damage is~0.

\emph{Probability of a comparison.}
In the noiseless run, the keys of ranks $i<j$ are compared if and only if one of them is the first pivot chosen among the ranks $i,\dots,j$; before that, all these keys stay in one sublist.
By symmetry each of these $j-i+1$ keys is equally likely to be chosen first, so the probability is $2/(j-i+1)$.
With a single wrong answer on the pair $\{i,j\}$, the run coincides with the noiseless run until this answer is read, so the pair is compared with the same probability.

\emph{First-order constant.}
By \cref{lem:reduction}, the first-order constant is
\[
\sum_{i<j}\frac{2}{j-i+1}\,(j-i)
= n(n-1)-\sum_{i<j}\frac{2}{j-i+1}
= n(n-1)-\big[2(n+1)H_n-4n\big]
= n^2+3n-2(n+1)H_n .
\]
Here $\sum_{i<j}\frac2{j-i+1}=\sum_{k=2}^n\frac{2(n+1-k)}k=2(n+1)(H_n-1)-2(n-1)=2(n+1)H_n-4n$, where $k=j-i+1$ and there are $n+1-k$ pairs with $j-i+1=k$.
Dividing by $\binom n2=n(n-1)/2$ and using $H_n=O(\log n)$ shows that the amplification tends to~2.
\end{proof}

\section{Numerical validation}\label{app:numerics}

This appendix gives the simulation methods and results for \cref{tab:checks,tab:highnoise,tab:highnoise-scale,fig:main}. Query orders are uniformly random unless stated otherwise. Reported uncertainties are Monte Carlo standard errors (MCSE): the sample standard deviation divided by the square root of the number of replicates. The code supplement provides the scripts, outputs and commands needed to reproduce each result; random runs use fixed seeds. Simulations run on one CPU core.

\begin{table}[ht]
\caption{Analytical references and simulated damage. Rows~1--4 use exact values (rounded); rows~5--6 use first-order coefficients. Uncertainty is $\pm1$ Monte Carlo standard error (MCSE); $|z|$ is the standardized deviation.}
\label{tab:checks}
\centering
\small
\setlength{\tabcolsep}{4pt}
\begin{tabular}{@{}lll@{}}
\toprule
Quantity & Reference & Simulated \\
\midrule
Conservation, $\E D(a)$ & $\ind\{a\notin B\}$ & $\max|z|=2.66$ \\
Horn hub, $s=2$, $k=4$ & $2.032$ & $2.095\pm0.039$ \\
Horn hub, $s=3$, $k=3$ & $2.314$ & $2.291\pm0.031$ \\
$g(3)$, $K=2,3,5$ & $2.0907$ & $2.099$, $2.106$, $2.083$ \\
$\E\mathrm{FP}/\eps$, $K=4$, $s=3$ & $56.87$ & $58.1\pm1.1$ \\
$\E\mathrm{FN}/\eps$, $K=4$, $s=3$ & $72.03$ & $71.9\pm1.0$ \\
\bottomrule
\end{tabular}
\end{table}

\paragraph{Conservation and Horn hubs.} To check the conservation law, we implement the greedy decoder by enumerating all models that satisfy the theory. After each accepted answer, it retains the consistent models and skips an atom if they all agree on its value. We generate 6 implication theories (16 implications on $n=12$ atoms) and 6 2-CNF theories (14 two-literal clauses, with a unit clause in every second theory). The 11 satisfiable theories contain 132 atoms, including 18 backbone atoms. We make one potential answer wrong at a time and run 600 random orders per atom. Every estimate of $\E_\pi D_\pi(a)$ is within $2.66$ standard errors of the predicted $\ind\{a\notin B\}$. Here $|z|=|\text{mean}-\text{reference}|/\mathrm{MCSE}$, with $z=0$ when MCSE is zero.

For the Horn hub $H_{k,s}$, we run 4000 orders per setting and compare the mean loss at the shared premise with \cref{thm:hub}. For $s=2$ and $k=1,2,3,4$, the estimates are $1.348\pm0.015$, $1.624\pm0.024$, $1.830\pm0.032$ and $2.095\pm0.039$, against exact values $1.333$, $1.600$, $1.829$ and $2.032$. For $s=3$ and $k=1,2,3$, they are $1.522\pm0.014$, $1.897\pm0.023$ and $2.291\pm0.031$, against $1.500$, $1.929$ and $2.314$. We also compare the exact formula with its large-$k$ approximation $\Gamma(1+1/s)\,k^{1-1/s}$: at $k=1024$, the two values are $28.377$ and $28.359$ for $s=2$, and $90.790$ and $90.721$ for $s=3$.

\paragraph{General Horn theories.} We explore \cref{conj:bodysize} beyond the hub using 12 random Horn theories with $n=14$ atoms, 14 to 24 rules and at most $s=2$ or $s=3$ premises per rule. For each atom, we run 300 orders. The cone size $c_a$ counts the newly true atoms, including $a$, when a false atom $a$ is incorrectly set to true and the rules are applied. In these theories, mean damage from such a false positive never exceeds $c_a^{1-1/s}$, with equality only at cone size~1. False-negative damages reach $1.48$.

We also test two-level trees with rules $a\wedge b_i\to c_i$ and $c_i\wedge d_{ij}\to e_{ij}$. For $k=2,4,6$, the mean damage of a wrong answer at $a$ is $0.77$, $0.61$ and $0.57$ times $\sqrt{c_a}$. In three larger random theories with at most two premises per rule ($n=60$, $60$ and $80$), the largest ratio to $\sqrt{c_a}$ is $1.36$. The tested families are consistent with the square-root growth suggested by the $s=2$ hub.

\paragraph{One cluster.} We compute first-order amplification from the query tree as $\E\,W(\MST_m)/\binom m2$, its mean path length over all record pairs. For random orders, we generate random weight matrices and find their minimum spanning trees. At $m=50$, $100$, $200$, $400$, $800$, $1600$ and $3200$, using 200, 100, 50, 20, 40, 40 and 40 matrices respectively, the estimates are $6.965\pm0.053$, $9.729\pm0.104$, $13.393\pm0.248$, $17.526\pm0.423$, $23.662\pm0.428$, $32.033\pm0.645$ and $41.887\pm0.751$. The ratios to $m^{1/3}$ rise from $1.89$ to $2.84$, so these sizes do not determine the asymptotic exponent. The reference line $c\,m^{1/3}$ in \cref{fig:main}(B) uses $c=2.377$, the geometric mean of the seven ratios.

For similarity-first, we draw $m$ uniform points in $[0,1]^d$ and compute the Euclidean minimum spanning tree, which is the noiseless query tree for this order. In one dimension, amplification is $(m+1)/3$. In two dimensions, it is $16.52\pm0.30$, $41.50\pm1.06$, $98.52\pm1.39$ and $235.26\pm4.42$ at $m=100$, $400$, $1600$ and $6400$ (40, 20, 40 and 40 point sets). These values are $3.56$, $5.63$, $8.42$ and $12.67$ times $m^{1/3}$. The largest three random-order sizes and largest two 2D sizes each use 40 independent replicates, with seed sequences $(20261006,0,m)$ and $(20261006,2,m)$ respectively.

\paragraph{Two clusters.} We compute the exact values of $g(s_A,s_B)$ in \cref{tab:gexact} by a recursive calculation with rational arithmetic, caching states already visited. A state records the current blocks and the constraints between them. Under a random query order, the next pair that can change this state is uniform among the unconstrained pairs between distinct blocks.

For the Monte Carlo estimates of $g(s)$ in \cref{tab:gmc}, a run stops when the block containing the wrongly merged pair can no longer change; \cref{lem:anatomy} ensures that the final damage is then determined. At $s=2,3,4$, the estimates are $1.6001\pm0.0079$, $2.1065\pm0.0145$ and $2.4867\pm0.0212$, close to the exact values. The column $s\,\Prob(q)$ measures the chance that the potential wrong pair is actually queried, rescaled by cluster size. We evaluate \cref{lem:anatomy}(1) using a dynamic program for the component sizes of $G(s,\tau)$ and Gauss--Legendre quadrature. The values approach the limiting constant $C_q\approx1.4286$ in \cref{prop:cq}, computed by numerical integration. At $s=10$, 40000 orders give $g(10)=4.43$. The random-to-pivot false-merge loss ratio $g(s)s/(2s-1)$ equals $29/27\approx1.074$, $1.254$ and $1.430$ for $s=2,3,4$ (exact) and about $2.3$, $4.5$ and $11.5$ at $s=10$, $32$ and $128$ (simulation).

The estimates in \cref{tab:gmc} give $g(s)/s^{2/3}=0.90\pm0.02$, $0.87\pm0.03$, $0.89\pm0.06$ and $0.90\pm0.15$ at $s=16$, $32$, $64$ and $128$. They support $s^{2/3}$ growth but do not determine the asymptotic constant.

\begin{table}[h]
\centering
\caption{Exact mean damage of one false merge between two clusters of sizes $s_A$ and $s_B$, split into wrongly merged pairs ($g_{\mathrm{FP}}$) and wrongly separated pairs ($g_{\mathrm{FN}}$), with $g=g_{\mathrm{FP}}+g_{\mathrm{FN}}$. The mean is over random query orders, with zero damage when the false pair is skipped. Decimals are rounded to six places.}
\label{tab:gexact}
\begin{tabular}{cccc}
\toprule
$(s_A,s_B)$ & $g_{\mathrm{FP}}$ & $g_{\mathrm{FN}}$ & $g$\\
\midrule
$(1,1)$ & $1$ & $0$ & $1$\\
$(1,2)$ & $1$ & $1/3$ & $4/3$\\
$(2,2)$ & $46/45$ & $53/90$ & $29/18=1.611111$\\
$(1,3)$ & $1$ & $28/45$ & $73/45=1.622222$\\
$(2,3)$ & $1.032959$ & $0.830710$ & $10567/5670=1.863668$\\
$(3,3)$ & $1.053091$ & $1.037571$ & $313913/150150=2.090663$\\
$(2,4)$ & $1.037609$ & $1.058191$ & $2.095800$\\
$(3,4)$ & $1.064236$ & $1.239428$ & $2.303664$\\
$(4,4)$ & $1.081504$ & $1.420525$ & $2.502029$\\
\bottomrule
\end{tabular}
\end{table}

\begin{table}[h]
\centering
\caption{Two clusters of size $s$: Monte Carlo estimates of $g(s)$ (mean $\pm1$ Monte Carlo standard error over the given number of random orders), the numerically evaluated rescaled probability $s\,\Prob(q)$ that the false pair is queried, and $g/s^{2/3}$.}
\label{tab:gmc}
\begin{tabular}{rrccc}
\toprule
$s$ & Orders & $g(s)$ & $s\,\Prob(q)$ & $g/s^{2/3}$\\
\midrule
8 & 40000 & $3.879\pm0.050$ & 1.176 & 0.970\\
16 & 40000 & $5.719\pm0.112$ & 1.244 & 0.901\\
32 & 30000 & $8.790\pm0.295$ & 1.298 & 0.872\\
64 & 16000 & $14.20\pm0.93$ & 1.340 & 0.888\\
128 & 5000 & $22.9\pm3.9$ & 1.369 & 0.901\\
\bottomrule
\end{tabular}
\end{table}

\paragraph{Several clusters.} To test false-merge locality, we plant one false merge between two of $K=2,3,5$ clusters of size~3 and run 20000 orders for each $K$. The mean damages are $2.099\pm0.021$, $2.106\pm0.021$ and $2.083\pm0.021$, compared with the common prediction $g(3)=2.0907$ from \cref{thm:multicluster}. We then run the full pipeline on $K=4$ clusters of size~3, with independent answer errors at rate $\eps=10^{-3}$. Over 200000 runs, $\E\,\mathrm{FP}/\eps=58.1\pm1.1$ and $\E\,\mathrm{FN}/\eps=71.9\pm1.0$. Dividing by the error rate makes these estimates comparable to the first-order coefficients: $56.87$ and $72.03$, computed from the exact $g_{\rm FP}(3)$, $g_{\rm FN}(3)$ and $\E\,W(\MST_3)=4$ in \cref{thm:multicluster}. Because these runs allow multiple errors, the normalized expectations can differ from the first-order coefficients by $O(\eps)$.

\paragraph{Finite noise.} To test the orders when several answers may be wrong in one run, \cref{tab:highnoise,tab:highnoise-scale} use one cluster with $m=100$ or $m=200$ records. Every answer is independently flipped to ``different'' with probability $\delta\in\{0.01,0.02,0.05,0.1,0.2\}$. Pivot first asks all pairs $\{0,j\}$ involving a fixed record and then processes the remaining pairs in random order. The random order uniformly permutes all pairs. Each cell uses 200 runs, with seeds 20260929 and 20261006 for $m=100$ and $m=200$, respectively. All 2000 runs at $m=200$ satisfy the loss bounds of \cref{lem:sandwich}.

\begin{table}[ht]
\centering
\caption{Scale check for finite noise: fraction of wrongly separated pairs in one cluster of $m=200$ records. Mean $\pm1$ Monte Carlo standard error over 200 runs per cell; Random/Pivot is the ratio of mean losses.}
\label{tab:highnoise-scale}
\begin{tabular}{llll}
\toprule
$\delta$ & Pivot & Random & Random/Pivot\\
\midrule
0.01 & $0.020\pm0.001$ & $0.106\pm0.009$ & 5.4\\
0.02 & $0.039\pm0.001$ & $0.200\pm0.012$ & 5.1\\
0.05 & $0.097\pm0.002$ & $0.361\pm0.010$ & 3.7\\
0.10 & $0.179\pm0.003$ & $0.502\pm0.007$ & 2.8\\
0.20 & $0.337\pm0.003$ & $0.646\pm0.005$ & 1.9\\
\bottomrule
\end{tabular}
\end{table}

\newpage
\section{Real-LLM experiment}\label{app:realpilot}

\paragraph{Data and model answers.} We use HPI's 1,879-record Cora file and duplicate-pair list \citep{hpicora}. Duplicate-pair connected components define 182 entities, one per paper. Records use ``field: value'' pairs in the fixed order authors, title, journal, booktitle, institution, publisher, volume, pages, date and year. Qwen2.5-1.5B (base) \citep{qwen25model} receives six hand-written citation-pair examples (three Yes, three No; none from Cora), then the test pair with the smaller record identifier first. It answers ``same'' when $\mathrm{logit}(\texttt{ Yes})>\mathrm{logit}(\texttt{ No})$. These next-token scores are used without sampling or fine-tuning. Similarity-first uses cosine similarity of normalized CLS embeddings from bge-small-en-v1.5 \citep{bgesmallmodel}.

\paragraph{Experimental settings.} The one-entity experiment uses the 23 entities with at least 20 records: 1,266 records, sizes 21 to 238, and all 62,423 within-entity pairs. For the multi-entity experiment, we group these entities into six blocks of three or four with nearby embedding centroids, mimicking similarity blocking. This includes comparisons between records from similar but distinct entities. Each entity contributes at most 30 records; blocks contain 87 to 120 records and 36,151 pairs in total. Pivot uses the multi-entity pipeline of \cref{prop:pivotmulti}.

All query orders reuse the same saved model answers. Primary pivot and random comparisons average over 200 orders per instance. The natural/shuffled error-placement control uses 30 matched order draws. Shuffled errors are uniformly redistributed within each entity. Similarity-first randomly breaks cosine ties: once per entity in the primary comparison and 200 times per multi-entity block. Amplification divides final wrong-pair counts by wrong answers across all candidate pairs, including skipped pairs. The 95\% bootstrap intervals resample entities (or blocks) with model answers fixed.

\paragraph{One-entity results.} At threshold 0, final wrong-pair fractions are 8.1\% for pivot, 15.3\% for random order and 5.7\% for similarity-first. Pivot reduces this fraction by $7.2$ percentage points relative to random order (pivot-minus-random interval $[-18.7,-1.4]$). Similarity-first's lower point estimate differs from pivot by $-2.5$ points, with interval $[-9.6,6.0]$ including zero. \Cref{tab:sweep} varies the Yes--No score-difference threshold to change the answer-error rate. At every tested rate, the pivot-minus-random interval lies below zero. In the error-placement control of \cref{tab:realllm}, the 30 matched draws give natural-placement amplifications of 1.37 (pivot), 2.19 (random) and 0.85 (similarity-first).

\begin{table}[h]
\centering
\caption{Amplification in the one-entity setting as the decision threshold changes. Wrong: wrong answers as a percentage of the 62,423 pairs; 6.6\% is the threshold 0 of \cref{tab:realllm}.}
\label{tab:sweep}
\begin{tabular}{cccc}
\toprule
Wrong & Pivot & Random & Similarity-first\\
\midrule
1.1\% & 1.48 & 4.85 & 1.18\\
2.1\% & 1.40 & 3.75 & 1.05\\
5.1\% & 1.26 & 2.64 & 0.71\\
6.6\% & 1.23 & 2.31 & 0.85\\
10.0\% & 1.22 & 1.99 & 1.21\\
20.1\% & 1.20 & 1.61 & 1.00\\
\bottomrule
\end{tabular}
\end{table}

\paragraph{Multi-entity results.} Across the six blocks, the model gives 1,444 wrong ``same'' answers and 846 wrong ``different'' answers. The final wrong-pair counts are 2,603 for pivot, 3,347 for random order and 2,759 for similarity-first. The pivot-minus-random loss-fraction difference is $-2.1$ percentage points, with interval $[-3.4,-0.7]$. Similarity-first produces the most false merges: 1,727 wrongly merged pairs, compared with 1,414 for pivot. Its priority for similar pairs also brings similar cross-entity pairs forward.

Exploratory runs with SmolLM2-1.7B-Instruct gave almost exclusively ``different'' answers and the same loss for all orders. DBLP--ACM entities contain only two records, leaving no within-entity propagation to compare. These runs are not used in the reported comparisons.

\paragraph{Assets.} Qwen2.5 is released under the Apache 2.0 license and bge-small-en-v1.5 under the MIT license; the HPI page provides the Cora data with attribution. The supplementary bundle contains the cached model scores and embeddings, the analysis code and the per-instance results.

\end{document}